\author{%
  Shuai Liu\\
  University of Alberta\\
  \texttt{shuailiu725@gmail.com} \\
  \And
  Alex Ayoub \\
  University of Alberta \\
  \texttt{aayoub@ualberta.ca} \\
  \AND
  Flore Sentenac \\
  HEC Paris\\
  \texttt{sentenac@hec.fr} \\
  \And
  Xiaoqi Tan \\
  University of Alberta\\
  \texttt{xiaoqi.tan@ualberta.ca}
  \And
  Csaba Szepesvári \\
  University of Alberta \\
  \texttt{csaba.szepesvari@gmail.com}
}
\crefname{lemma}{Lemma}{Lemmas}
\crefname{assumption}{Assumption}{Assumptions}
\crefname{theorem}{Theorem}{Theorems}
\newtheorem{theorem}{Theorem}
\newtheorem{corollary}[theorem]{Corollary}
\newtheorem{lemma}[theorem]{Lemma}
\newtheorem{example}[theorem]{Example}
\newtheorem{proposition}[theorem]{Proposition}
\newtheorem{definition}{Definition}
\newtheorem{remark}{Remark}
\newtheorem{assumption}{Assumption}
\newcounter{subassumption}[assumption]
\renewcommand{\thesubassumption}{(\textit{\roman{subassumption}})}
\renewcommand{\p@subassumption}{\theassumption}
\newcommand{\subasu}{
  \refstepcounter{subassumption}%
  \thesubassumption~\ignorespaces}
\DeclareMathOperator*{\argmax}{arg\,max}
\DeclareMathOperator*{\argmin}{arg\,min}
\DeclareMathOperator*{\Var}{Var}
\DeclareMathOperator*{\var}{Var}
\newcommand{\bS}{\mathbf{S}}
\newcommand{\E}{\mathbb{E}}
\newcommand{\EE}{\mathbb{E}}
\newcommand{\R}{\mathbb{R}}
\newcommand{\RR}{\mathbb{R}}
\newcommand{\mbR}{\mathbb{R}}
\newcommand{\PP}{\mathbb{P}}
\newcommand{\NN}{\mathbb{N}}
\newcommand{\II}{\mathbb{I}}
\newcommand{\FF}{\mathbb{F}}
\newcommand{\cF}{\mathcal{F}}
\newcommand{\cL}{\mathcal{L}}
\newcommand{\cC}{\mathcal{C}}
\newcommand{\cU}{\mathcal{U}}
\newcommand{\cV}{\mathcal{V}}
\newcommand{\cX}{\mathcal{X}}
\newcommand{\cD}{\mathcal{D}}
\newcommand{\cQ}{\mathcal{Q}}
\newcommand{\mfC}{\mathfrak{C}}
\newcommand{\norm}[1]{\left \lVert #1 \right \rVert}
\newcommand{\spaced}[1]{\quad \text{#1} \quad}
\newcommand{\Prob}[1]{\mathbb{P}(#1)}
\newcommand{\CGF}{CGF\xspace}
\newcommand{\MGF}{MGF\xspace}
\newcommand{\NEF}{NEF\xspace}
\newcommand{\selfconc}{\Gamma} 
\newcommand{\regret}{\mathrm{Regret}\xspace}
\newcommand{\scfunc}{\Gamma} 
\newcommand{\cG}{\mathcal{G}}
\newcommand{\ER}{\mathcal{E}_{\textrm{right}}}
\newcommand{\EL}{\mathcal{E}_{\textrm{left}}}
\newcommand{\todoc}[2][]{\todo[size=\scriptsize,color=red!20!white,#1]{CS: #2}}
\newcommand{\todoa}[2][]{\todo[size=\scriptsize,color=blue!20!white,#1]{AA: #2}}
\newcommand{\todos}[2][]{\todo[size=\scriptsize,color=green!20!white,#1]{SL: #2}}
\title{Almost Free: Self-concordance in Natural Exponential Families and an Application to Bandits}
\begin{document}

\maketitle
\begin{abstract}
    We prove that single-parameter natural exponential families with subexponential tails are self-concordant with polynomial-sized parameters. For subgaussian natural exponential families we establish an exact characterization of the growth rate of the self-concordance parameter. Applying these findings to bandits allows us to fill gaps in the literature: We show that optimistic algorithms for generalized linear bandits enjoy regret bounds that are both second-order (scale with the variance of the optimal arm's reward distribution) and free of an exponential dependence on the bound of the problem parameter in the leading term. To the best of our knowledge, ours is the first regret bound for generalized linear bandits with subexponential tails, broadening the class of problems to include Poisson, exponential and gamma bandits.
\end{abstract}
\section{Introduction}\label{section:intro}
Single-parameter natural exponential families (NEFs) \citep{morris1982} 
abound in statistical applications \citep{McCullagh:1989,brown1986,wainwright2008graphical,morris2009}.
In this paper we study properties of \NEF{}s and in doing so we make two main contributions:
\begin{enumerate}
    \item 
    We study how tail properties of the base distribution of a NEF impose limits on the NEF: if the base distribution is subexponential (subgaussian), we show that the NEF is \emph{self-concordant} with a stretch factor that grows inverse quadratically (respectively, linearly)
    \item In generalized linear 
    bandits whose reward distributions follow a NEF with subexponential base distribution, we show how this new result 
    can be utilized to derive a novel second order regret bound whose leading term is free of exponential dependencies on the problem parameter --- the first such result for this setting. 
\end{enumerate}

The class of distributions our results extend to includes: normal, Poisson, exponential, gamma and negative binomial distributions. Our findings \textit{partially} address conjectures on whether generalized linear models with unbounded targets are self-concordant \citep{Bach2010self,faury2020improved}. This significantly generalizes the case when the targets are assumed to be bounded\footnote{This assumption does not hold for distributions with unbounded support such as: normal, Poisson, exponential, gamma and negative binomial distributions.} \citep{pmlr-v99-marteau-ferey19a,Ostrovskii2020finite,russac2021self} and thus extends the applicability of the results therein. \todoa{check/polish this sentence?} 

Self-concordance in NEFs turns out to be useful for both optimization and statistical estimation. The self-concordance property controls the remainder term, or approximation error, of a NEF's second-order Taylor expansion. This is useful in designing and analyzing both estimation and optimization methods. Historically the self-concordance property was first found to be useful for optimization \citep{nesterov1989self,nemirovski2008interior,sun2019generalized} and later for statistical estimation \citep{Bach2010self,pmlr-v99-marteau-ferey19a,Ostrovskii2020finite,bilodeau2023minimax}. In this paper, we will employ the self-concordance property in bandit problems \citep{Lattimore_Szepesvári_2020} where it helps with controlling the error terms related to estimation. 


Generalized linear bandits (GLBs) \citep{Filippi2010parametric} has emerged as a standard framework for studying the role that nonlinear function approximation plays in decision making problems. In the special case of logistic bandits, \cite{faury2020improved} were the first to exploit the self-concordance property of the Bernoulli distribution\todoa{I guess this is correct enough?}
\todoc{Taking out the footnote: ``more specifically the NEF whose base distribution is a Bernoulli distribution''. Why do we have this here?}
\todoa{It was pointed out to me that this statement is not technically correct? }in order to get regret bounds free of an exponential dependence on the size of the problem parameter\todoa{if anyone has another word for dependencies let me know} in the leading term. Instead of naively approximating the nonlinear sigmoid function with a linear first order Taylor expansion (which would incur the aforementioned exponential term)\todoc{I bet the reader has no idea what this dependency is on. So we should say it upfront.}\todoa{I added the phrase ``size of the problem parameter'' as was done in contribution list.} \citep{pmlr-v70-li17c,jun2017scalable,saha2023dueling}, \cite{faury2020improved} use self-concordance to get a tighter second order 
Taylor expansion that better captures the curvature of the sigmoid function. Employing improved self-concordant analysis, \cite{Abeille2020InstanceWiseMA} get second-order regret bounds for logistic bandits and \cite{janz2023exploration} extend these results to GLBs, under the assumption that the underlying reward distributions 
are self-concordant. We build upon this line of research and fill a gap in the bandit literature by designing and analyzing algorithms for generalized linear bandits with subexponential reward distributions. 

The paper is organized as follows: In Section 2, we introduce single-parameter NEFs and review key properties relevant to our analysis. Section 3 demonstrates the self-concordance property of subexponential (subgaussian) NEFs, with a quadratic (respectively, linear) growth rate of the stretch factor. Additionally, we establish the tightness of the linear growth rate for subgaussian NEFs. In Section 4 we apply these findings to subexponential GLBs and derive novel second-order regret bounds devoid of exponential dependencies on the problem parameter in the leading term. Proofs omitted from the main text are provided in the appendix, except for those of well-known results, which are referenced accordingly.

\section{Preliminaries}
In this section we first introduce the notation we will use. We then introduce 
natural exponential families, review some of their basic properties and illustrate the concepts introduced by means of an example. 
\subsection{Notation}\label{section:notation}
For a real-valued differentiable function $f$ defined over an open interval, we use $\dot f, \ddot f$ and $\dddot f$ to denote the first, second and third derivative of $f$. 
The set of reals is denoted by $\R$, the set of nonnegative reals by $\R_+$. 
For a set $\cU\subseteq \RR$, we denote its interior by $\cU^\circ$. 
For real numbers $a,b$, we use $a\wedge b$ and $a \vee b$ to denote the $\min\{a,b\}$ and $\max\{a,b\}$, respectively. 
With $\phi$ a logical expression,  $\II\{\phi\}=1$ if $\phi$ evaluates to true and $\II\{\phi\}=0$, otherwise.
We use $f\lesssim g$ to indicate that $g$ dominates $f$ up to a constant factor over their common domain. 
For $S\subseteq \R$, $x\in \R$, we let $S\pm x$ denote the set $\{ s\pm x\,:\, s\in S \}$. 
A distribution over the reals is centered if it has zero mean. We use $\Prob$ to denote the probability measure over the probability space that holds our random variables and we let $\E$ to denote the expectation corresponding to this measure. By $Y\sim Q$ we mean that the distribution of $Y$, a random variable, is $Q$.

\subsection{Single-parameter \NEF{}s}\label{section:NEF}
In this section we give our definitions for the \NEF. 
We only consider single-parameter natural exponential families when the base distribution is defined over the reals. We follow the approach of the beautifully written monograph of \citep{brown1986} that the reader is also referred to for any statements made about \NEF{}s with no proofs. 

Given a probability distribution $Q$
over $\RR$ let $M_Q:\RR\to \RR_+\cup\{+\infty\}$ denote its 
\emph{moment generating function} (\MGF):
\begin{align*}
M_Q(u)= \int \exp(uy)Q(dy)\,, \qquad u\in \R\,.
\end{align*}
We will find it convenient to also use the logarithm of the moment generating function, which is known as its \emph{cumulant generating function} (\CGF). We denote this by $\psi_Q:\RR \to \RR\cup\{+\infty\}$. Thus, $\psi_Q(u) = \log M_Q(u)$. 

Let $\cU_Q=\{u\in \RR: M_Q(u)<\infty\}$ denote the domain of $M_Q$  (and, thus the domain of $\psi_Q$).
As it is well-known, $\psi_Q$ is convex and hence $\cU_Q$ is always an interval (which, trivially, always contains $0$).
For a subset of $\cU_Q$, denoted by $\cU \subseteq \cU_Q$, we call $\cQ = (Q_u)_{u\in \cU}$ 
a natural exponential family (NEF) generated by $Q$
where for any $u\in \cU$ we have
\begin{equation*}
    Q_u(dy) = \frac{1}{M_Q(u)} \exp(uy)Q(dy)\,.
\end{equation*}
It follows that $Q_u$ is also a probability distribution over the reals for any $u\in \cU$ by definition.
An equivalent, useful form for $Q_u$ is $Q_u(dy) = \exp(uy - \psi_Q(u)) Q(dy)$.
In applications, $u$ denotes an unknown parameter that is to be estimated based on observations from $Q_u$. Thus, $\cU$ allows one to express extra restrictions on the admissible parameters beyond the limits imposed by $Q$. We call $\cU$ the \emph{parameter space}, and $\cU_Q$ the \emph{natural parameter space}.

The distributions $Q$, $Q_u$ and parameter $u$ are referred to as the \emph{base distribution}, 
the (exponentially) \emph{tilted (base) distribution} and the \emph{tilting parameter}. 
An \NEF is said to be \emph{regular} when $\cU_Q$ is open. 
It is easy to see that for any $u,u_0\in \cU_Q$, 
$Q_u = (Q_{u_0})_{u-u_0}$, where the distribution on the right-hand side stands for the tilt of $Q_{u_0}$ with parameter $u-u_0$. As such, up to a constant shift of the parameter space, in a regular \NEF, one can always assume that $0\in \cU_Q^\circ$. In fact, the same can be assumed for the parameter set, as long as $\cU^\circ$ is nonempty. If $\cU$ is an interval then $\cU^\circ=\emptyset$  means that $\cU$ is a singleton: An uninteresting case if we want to model a host of \emph{non-identical} distributions.


In a regular family, an equivalent way to parameterize a \NEF is using the mean function (cf. Theorem 3.6, page 74, of \cite{brown1986}), $\mu_Q: \cU_Q\to \R$, which is defined as 
\begin{equation*}
    \mu_Q(u)=
    \int y\,\, Q_u(dy)=\frac{\int y\exp(uy)Q(dy)}{M_Q(u)}.
\end{equation*}
Since $Q_0 = Q$, clearly, $\mu_Q(0)$ is just the mean of $Q$. To minimize clutter, when $Q$ is clear from the context, we will write $\mu$ instead of $\mu_Q$. To illustrate the developments so far, we consider the example when the base distribution is an exponential distribution.
\begin{example}[Exponential distributions]\label{ex:exp}
    For $\lambda>0$, let $Q$ be an exponential distribution with parameter $\lambda$: $Q(dy)=\II\{y\ge 0\} \lambda e^{-\lambda y} dy$. 
    As is well known, the \MGF of $Q$ takes the form
    $M_Q(u)=\frac{\lambda}{\lambda-u}$ when $u<\lambda$ and $M_Q(u)=\infty$ otherwise. 
    Thus, $\cU_Q=\{u\in \R:M_Q(u)<\infty\}=(-\infty,\lambda)$. The mean function takes the form of
    \begin{equation*}
        \mu(u)=\frac{\int_0^\infty \lambda y\exp(-\lambda y)\exp(uy)dy}{M_Q(u)}=\frac{1}{\lambda-u}, \qquad u<\lambda\,.
    \end{equation*}
\end{example}
In what follows, we will need the following proposition to relate the central moments of $Q_u$ to the derivatives of $\psi_q$, the CGF.
\begin{proposition}\label{prop:moments_of_NEF}
Let $\cU_Q^\circ$ be non-empty.
Then, $\psi_Q$ is infinitely differentiable on $\cU_Q^\circ$. 
Furthermore, the first three derivatives of $\psi_Q$ at $u\in \cU_Q$ give the first moment, second and third central moments of $Q_u$.
\end{proposition}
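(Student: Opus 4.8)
The plan is to use the standard fact that the moment generating function $M_Q$ is analytic (in fact infinitely differentiable) on the interior of its domain, which is where one can differentiate under the integral sign. First I would recall (citing \cite{brown1986}) that for $u \in \cU_Q^\circ$ there is an open neighborhood of $u$ contained in $\cU_Q$, and on any compact subinterval of $\cU_Q^\circ$ the integrals $\int |y|^k \exp(uy) Q(dy)$ are finite and locally uniformly bounded; this legitimizes repeated differentiation under the integral sign, giving $M_Q^{(k)}(u) = \int y^k \exp(uy) Q(dy)$ for all $k \ge 0$. In particular $M_Q$ is infinitely differentiable and strictly positive on $\cU_Q^\circ$, so $\psi_Q = \log M_Q$ is infinitely differentiable there as a composition of smooth maps.

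Next I would compute the first three derivatives of $\psi_Q$ explicitly via the chain and quotient rules. Writing $m_k(u) = M_Q^{(k)}(u)/M_Q(u) = \int y^k Q_u(dy) = \E[Y^k]$ for $Y \sim Q_u$, one gets $\dot\psi_Q = m_1 = \mu_Q(u)$, the mean of $Q_u$. Differentiating again, $\ddot\psi_Q = m_2 - m_1^2 = \E[Y^2] - (\E Y)^2 = \Var(Y)$, the second central moment. Differentiating once more yields $\dddot\psi_Q = m_3 - 3 m_1 m_2 + 2 m_1^3$, which is exactly $\E[(Y - \E Y)^3]$, the third central moment. Each of these is a routine symbolic computation once differentiation under the integral is justified.

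I do not expect any serious obstacle here: the only technical point is the interchange of differentiation and integration, and this is a classical result for moment generating functions on the interior of their domain (it is precisely the content of the smoothness statements in \cite[Theorem 2.2]{brown1986} and the surrounding discussion). The one thing to be careful about is the hypothesis: the claim is stated for $u \in \cU_Q$, but differentiability and the moment formulas genuinely require $u \in \cU_Q^\circ$; I would either restrict the statement to the interior or note that at a finite endpoint of $\cU_Q$ only one-sided derivatives make sense and the identities persist in that one-sided form. Given the preamble already assumes $\cU_Q^\circ \ne \emptyset$, the cleanest route is simply to invoke the cited smoothness result and then present the three-line derivative computation.
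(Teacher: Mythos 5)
Your plan is correct and matches the standard argument: the paper itself gives no proof of this proposition, deferring to \cite{brown1986} as a well-known result, and your route (differentiation under the integral sign to get $M_Q^{(k)}(u)=\int y^k e^{uy}Q(dy)$ on $\cU_Q^\circ$, then the cumulant computations $\dot\psi_Q=m_1$, $\ddot\psi_Q=m_2-m_1^2$, $\dddot\psi_Q=m_3-3m_1m_2+2m_1^3$) is exactly the content of the cited smoothness results there. Your observation that the moment identities really require $u\in\cU_Q^\circ$ rather than $u\in\cU_Q$ as literally written is a fair and correct caveat.
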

 In the context of \cref{ex:exp}, \cref{prop:moments_of_NEF} gives that $\psi_Q(u)= \log(\lambda)-\log(\lambda-u)$ when $u<\lambda$. Then,
$\dot\psi_Q(u) =\frac{1}{\lambda-u}$, agreeing with our earlier computation.


\section{Self-concordance of NEFs}
This section contains the first set of main results of this paper. 
We start by giving our definition of self-concordance of NEFs, followed by a study of when this property is satisfied. We include a result that shows how self-concordance allows one to derive tail properties of members of the family from that of the base distribution.

In general, if the magnitude of a higher order derivative of a real function 
can be bounded in terms of a lower order derivative of the function, the function is said
to be self-concordant \citep{sun2019generalized}.
This property is useful for studying how fast the function changes with its argument, as well as for deriving useful bounds on how well the function can be approximated by low order polynomials
\citep{Nesterov1994InteriorpointPA,Nesterov04,sun2019generalized}.
\todoc{Strengthen this by adding relevant references.. }
In the context of single-parameter natural exponential families, we propose the following natural definition:
\begin{definition}[Self-concordant NEF]\label{defn:self-conc-NEF}
Let $\cQ = (Q_u)_{u\in \cU}$ be a NEF with parameter set $\cU\subset \cU_Q^\circ$ and some base distribution $Q$.
We say that $\cQ$ is \emph{self-concordant} if there exists a 
nonnegative valued function $\scfunc:\cU\to \R_+$ such that 
\begin{align}
|\ddot \mu(u) | \le \scfunc(u) \dot\mu(u) \qquad \text{for all } \quad u\in \cU\,.
\label{eq:scdef}
\end{align}
Any function $\scfunc:\cU\to \R_+$ that satisfies \cref{eq:scdef} is called a \emph{stretch function} of the NEF.
\end{definition}
This definition takes inspiration from the works of \citep{Bach2010self,sun2019generalized,faury2020improved} and \citep{janz2023exploration} who define an analogous property
(cf. Assumption~2 of \citep{janz2023exploration}). \todoc{is the \citep{faury2020improved} reference the correct one? please change if needed!}
By \cref{prop:moments_of_NEF}, we know that 
$\dot\mu(u)$ is the variance of $Q_u$, while $\ddot \mu(u)$ is the third central moment. 
Hence, $\dot\mu(u)$ is nonnegative, which explains why there is no absolute value on the right-hand side
of \cref{eq:scdef}.

According to our definition, we require that the (absolute value) of the second derivative of $\mu$ is bounded by the first derivative, up to the ``stretch factor'' $\scfunc(u)$.
Clearly, provided that $\dot\mu$ is positive over $\cU$, self-concordance is equivalent to stating that 
$\selfconc_Q(u)\doteq\frac{|\ddot\mu(u)|}{\dot\mu(u)}$ is finite valued over $\cU$. \todoc{at one point we should give an example of a non self-concordant NEF. 
Given our theorem, for this to happen, we need to have $0\in \partial U_Q$. But, say, for inverse Gaussian, where this happens  and in particular $U_Q = (-\infty,0]$ (a steep, nonregular family), $\scfunc_Q(u) = \frac{c}{-u}$ for $c>0$, $u<0$. So the behavior is the same as for $\mathrm{Exp}(1)$.
}
When $\dot\mu(u)=0$ for some $u\in \cU$, one can show that $Q_u$ must be a Dirac distribution and hence so does $Q$ ($Q$ and $Q_u$ share their support). In this case, we also have $\ddot \mu\equiv 0$ and hence any nonnegative valued function is a valid stretch-function. In particular, $\selfconc_Q\equiv 0$ is also a valid stretch function. 

It turns out that studying self-concordance property of distributions in detail can turn out much finer bounds than naively bounding $\ddot \mu$ and $\dot \mu$ separately.
\begin{example}[Avoiding exponential dependencies]
    Consider a NEF $\cQ$ with base distribution  $Q$, a Bernoulli distribution with parameter $1/2$, we have $\cU_Q=\R$, $\mu(u)=\frac{1}{1+e^{-u}}$ and 
    thus $\dot\mu = \mu(1-\mu)$, $\ddot \mu=\dot\mu(1-2\mu)$. Hence
    $\cQ$ is $\scfunc_Q$-self-concordant with $\scfunc_Q(u)\le 1$ for all $u\in \RR$.
    This is an example where naively bounding $\scfunc_Q$, 
    by bounding the numerator and denominator separately 
    over $[-s,s]$ for $s>0$ 
    gives a quantity of size $e^s$, which lags far behind the constant we obtained with a direct calculation,
    or what we can get from the result in \cref{sec:sg}, which show a scaling of order $O(s)$.
\end{example}

As opposed to earlier literature where self-concordance is used \citep{Nesterov1994InteriorpointPA,Bach2010self,sun2019generalized}, we allow a 
non-constant stretch-factor $\scfunc$. As it turns out, this is necessary if $\cU = \cU_Q^\circ$ is to be allowed:

\begin{example}[Non-constant stretch factor]\label{ex:sc}
    Consider a NEF $\cQ$ with base distribution $Q$.
    For $Q$ an exponential distribution with parameter $\lambda>0$, 
    $\cQ$ is self-concordant over $\cU = \cU_Q = (-\infty,\lambda)$ with $\scfunc_Q(u)=\frac{2}{\lambda - u}$, $u<\lambda$.
    Indeed, a simple calculation gives that for $u<\lambda$,
    $\dot\mu(u)=(\lambda-u)^{-2}$ and $\ddot\mu(u) = 2(\lambda-u)^{-3}$, and so $|\ddot\mu(u)|/\dot\mu(u) = 2/(\lambda-u)$.

\end{example}
As was shown above, for the NEF built on the exponential distribution with parameter $\lambda$, there is no constant stretch factor that makes the NEF self-concordant over the entirety of the natural parameter space. 
The main result of the next section shows that a non-constant stretch factor with growth similar to the exponential always exists provided that the base distribution is \emph{subexponential}.%
\footnote{
Here, we follows the terminology used in the concentration of measure literature where these distributions are also knowns as subgamma distributions
\citep{BoLuMa13,vershynin2018high,wainwright2019high}, or light-tailed distributions.
This is to be contrasted to the use of the same term in the theory of heavy-tailed distributions, where subexponential refers to a much larger class of distributions \citep{Goldie1998}.}
As it turns out, the growth of the stretch factor plays an important role in applications: 
Among other things, it allows us to conclude that the tilted distributions are subexponential,
 as captured by the next result, which is a slightly generalized version of an analogous result of \citep{janz2023exploration}: 
\begin{restatable}[From self-concordance to light tails]{lemma}{SCtoTail}\label{lem:tiltsubexp} 
Let $\cQ = (Q_u)_{u\in \cU}$ be a NEF which is self-concordant with stretch function $\selfconc:\cU \to \R_+$ where $\cU$ is a subinterval of $\cU_Q^\circ=(a,b)$. 
Then, for any $u\in \cU$, 
    \begin{align}
    \label{eq:sctail}
        \psi_{Q_u}(s)\le s\mu(u)+s^2\dot\mu(u) \spaced{for all}s\in [-\log(2) / K, \log(2)/K]
        \cap (a-\inf \cU, b-\sup \cU)\,,
    \end{align}
    where $K = \sup_{u\in \cU} \selfconc(u)$.
\end{restatable}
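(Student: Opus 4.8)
The plan is to reduce the claim to a statement about the cumulant generating function of the base distribution, expand it to second order, and then use self-concordance to show that the second-order remainder is at most twice the ``Gaussian'' remainder $s^2\dot\mu(u)$; the number $\log 2$ in the stated range of $s$ is exactly the threshold that makes this factor of two work.

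First, since $Q_u$ is itself an exponential tilt of $Q$, one has $\psi_{Q_u}(s)=\log\int e^{sy}Q_u(dy)=\psi_Q(u+s)-\psi_Q(u)$, with natural parameter space $\cU_Q-u$. The restriction $s\in(a-\inf\cU,\,b-\sup\cU)$ is exactly what guarantees, for every $u\in\cU$, that the closed segment between $u$ and $u+s$ lies inside $\cU_Q^\circ=(a,b)$; by \cref{prop:moments_of_NEF}, $\psi_Q$ is then $C^\infty$ on that segment with $\dot\psi_Q=\mu$ and $\ddot\psi_Q=\dot\mu$. Writing the second-order Taylor expansion with integral remainder around $0$,
\[
\psi_{Q_u}(s)=s\,\mu(u)+\int_0^s (s-t)\,\dot\mu(u+t)\,dt ,
\]
using $\psi_{Q_u}(0)=0$ and $\dot\psi_{Q_u}(0)=\mu(u)$.

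Next I would control the variance factor $\dot\mu(u+t)$. If $\dot\mu$ vanishes at some point of $\cU_Q^\circ$ then $Q$, and hence every $Q_u$, is a Dirac mass, $\ddot\mu\equiv0$, and $\psi_{Q_u}(s)=s\mu(u)$ exactly, so the bound is trivial; otherwise $\dot\mu>0$ and $\tfrac{d}{dv}\log\dot\mu(v)=\ddot\mu(v)/\dot\mu(v)$, which by self-concordance has absolute value at most $\selfconc(v)\le K$ along the segment. Integrating this derivative bound gives $\dot\mu(u+t)\le e^{K|t|}\dot\mu(u)$, and since $|t|\le|s|\le\log(2)/K$ we get $\dot\mu(u+t)\le 2\dot\mu(u)$. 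Substituting this into the remainder and using $\int_0^s(s-t)\,dt=s^2/2$ together with a short sign check (the remainder has the sign of $\dot\mu\ge0$ for either sign of $s$) yields $\psi_{Q_u}(s)\le s\mu(u)+2\dot\mu(u)\cdot\tfrac{s^2}{2}=s\mu(u)+s^2\dot\mu(u)$, which is the claim.

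The step I expect to be most delicate is the domain bookkeeping used when passing from $|\ddot\mu|\le\selfconc\,\dot\mu$ on $\cU$ to the uniform bound $\selfconc(v)\le K$ for all $v$ on the segment joining $u$ and $u+s$: this is immediate when $u+s$ remains in $\cU$, but needs a little extra care (or a mild strengthening of the self-concordance hypothesis to a neighbourhood of $\cU$) when $\cU$ is a proper subinterval of $\cU_Q^\circ$. The remaining ingredients — the tilt identity for the CGF, the integral form of the Taylor remainder, and the elementary bound $\dot\mu(u+t)\le e^{K|t|}\dot\mu(u)$ obtained by integrating the self-concordance inequality — are routine.
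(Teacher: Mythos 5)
Your proposal is correct and follows essentially the same route as the paper's proof: the tilt identity $\psi_{Q_u}(s)=\psi_Q(u+s)-\psi_Q(u)$, a second-order Taylor expansion (you use the integral remainder where the paper uses the Lagrange form — a cosmetic difference), and the bound $\dot\mu(v)\le \dot\mu(u)e^{K|u-v|}$ obtained by integrating the self-concordance inequality, with $|s|\le\log(2)/K$ giving the factor $2$. The domain subtlety you flag — that the intermediate point may leave $\cU$ when $\cU$ is a proper subinterval of $\cU_Q^\circ$, so that the uniform bound $\scfunc\le K$ is not literally licensed there — is present and equally unaddressed in the paper's own proof, which applies its smoothness corollary at the Taylor midpoint $\xi$ without checking $\xi\in\cU$; you are not missing anything the paper supplies.
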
 
Note that when $\cU$ is a strict subset of $\cU_Q^\circ$, $0\in (a-\inf \cU, b-\sup \cU)$ 
and thus the result is nontrivial as long as $K<\infty$.
To interpret this result,
recall that a centered distribution $Q$ is called \emph{subexponential} with nonnegative parameters $(\nu,\alpha)$ if 
\begin{align*}
\psi_Q(s)\le s^2 \nu^2/2\,, \qquad \text{for all } |s|\le 1/\alpha
\end{align*}
 (cf. Definition~2.7 \citep{wainwright2019high}). 
A consequence of this is that the mean of $n$ independent random variables drawn from $Q$, with high probability, will be in an zero-centered interval of length $O(\sqrt{\nu^2/n} + \frac{1}{\alpha n})$. Here, the first term describes a ``subgaussian'' behavior, while the second a ``pure subexponential'' behavior. 
Assume for simplicity that $a=-b$ and $\inf \cU = -\sup\cU$.
From \cref{eq:sctail} it follows that $Q_u$, when centered, is subexponential with parameters $\nu^2 =2\dot\mu(u)$, twice the variance of $Q_u$, and $\alpha = \min(\log(2)/K,b-\sup \cU)$. In particular, we see that the growth of $\selfconc$ impacts the lower-order term in the confidence interval (the term $1/(\alpha n)$), but not the leading term, which is governed by the variance of $Q_u$.
It follows that understanding how fast $\selfconc$ can grow as its argument approaches the boundary of $\cU_Q$ is thus important, although its impact only appears in a low-order term. 

We now turn to our first main result that shows that the growth of $\selfconc$ is at most inverse polynomial
provided that the base distribution itself is subexponential.
\todoc{Discussion points:
Is statistical estimation even feasible when $\cU = \cU_Q^\circ$? Consider, for example, the case
when $Q$ is the exponential distribution with parameter $\lambda$.
Note that in this case $Q_u$ is the exponential distribution with parameter $\lambda-u$ (defined only for $u<\lambda)$.
Thus, we are essentially asking for the efficiency of estimating the parameter of the exponential distribution with a near zero positive parameter $\lambda$.
Actually, we care about the mean of this.
Which can be estimated using the sample mean.
The variance is $1/\lambda^2$. So as $\lambda\to 0+$, we have worse estimates.
In particular, the confidence interval width blows up with $1/\lambda$ as $\lambda \to 0+$.
So one should be able to get confidence sets still. What happens with bandits?
The arm with the highest mean reward has the largest variance. 
The means also grow unboundedly, so regret may grow unboundedly,
if the parameter space is not restricted.}


\subsection{Self-concordance with a subexponential base}
We start with recalling equivalent definitions of subexponential distributions, which will be useful to interpret our results.
In particular, 
according to Theorem~2.13 of \citep{wainwright2019high}, given a zero-mean distribution $Q$ over the reals, the following are equivalent:
\begin{enumerate}[(i)]
\item $Q$ is subexponential;
\item The MGF of $Q$ is defined in an open neighborhood of zero;
\item For some positive constants $C,c>0$, $\Prob{ |Y|\ge t } \le C \exp(-c t)$ where $Y\sim Q$.
\end{enumerate}
It will be useful to have a quantitative version of this statement. For this, we separate the left and the right tails (as in some application they have different behaviors, which we may care about).
The quantitative result for the right-tail is as follows (the result for the left-tail is omitted; it follows by symmetry):
\begin{restatable}{proposition}{SubExpEq}\label{prop:subexpeq}
Let $c_1,C_1,c>0$, $Y\sim Q$ and assume that $\EE{Y}=0$. 
If 
\begin{align}
\Prob{Y\ge t} \le C_1 \exp(-c_1 t) \qquad \text{for all } t\ge 0 \label{eq:righttail}
\end{align}
then
for any $0\le \lambda < c_1$, $M_Q(\lambda)< 1+ \frac{C_1 \lambda^2}{c_1(c_1-\lambda)}$.
Furthermore, for any $c> 0$ such that $M_Q(c)<\infty$,
$\Prob{Y\ge t} \le M_Q(c) e^{-c t}$ holds for all $t\ge 0$.
\end{restatable}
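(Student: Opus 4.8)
The plan is to treat the two assertions separately. The second one---the Chernoff-type tail bound---is immediate: for any $c>0$ with $M_Q(c)<\infty$ and any $t\ge 0$, Markov's inequality applied to the nonnegative random variable $e^{cY}$ gives $\Prob{Y\ge t}=\Prob{e^{cY}\ge e^{ct}}\le e^{-ct}\,\EE[e^{cY}]=M_Q(c)\,e^{-ct}$, as required.

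For the bound on $M_Q(\lambda)$, the first move is to use the centering hypothesis to strip the linear term out of the exponential. From \cref{eq:righttail} we have $\EE[Y^+]=\int_0^\infty \Prob{Y> t}\,dt\le C_1/c_1<\infty$, and $\EE[Y]=0$ then forces $\EE[Y^-]=\EE[Y^+]<\infty$, so in particular $\EE[|Y|]<\infty$ and
\[
M_Q(\lambda)=\EE\big[e^{\lambda Y}\big]=1+\lambda\,\EE[Y]+\EE\big[e^{\lambda Y}-1-\lambda Y\big]=1+\EE\big[e^{\lambda Y}-1-\lambda Y\big],
\]
where $e^{\lambda Y}-1-\lambda Y\ge 0$ by convexity of $t\mapsto e^{\lambda t}$. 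I would then split this expectation according to the sign of $Y$. On $\{Y\ge 0\}$, writing $e^{\lambda y}-1-\lambda y=\lambda\int_0^y(e^{\lambda t}-1)\,dt$ and applying Tonelli's theorem (a layer-cake step) gives
\begin{align*}
\EE\big[(e^{\lambda Y}-1-\lambda Y)\,\II\{Y\ge 0\}\big]
&=\lambda\int_0^\infty (e^{\lambda t}-1)\,\Prob{Y>t}\,dt\\
&\le \lambda C_1\int_0^\infty (e^{\lambda t}-1)\,e^{-c_1 t}\,dt
=\frac{C_1\lambda^2}{c_1(c_1-\lambda)},
\end{align*}
where the last equality uses $\int_0^\infty(e^{\lambda t}-1)e^{-c_1 t}\,dt=\tfrac1{c_1-\lambda}-\tfrac1{c_1}$, which is finite precisely because $0\le\lambda<c_1$. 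This already matches the bound asserted in the statement, and it exhibits where the $\lambda^2$ (rather than $\lambda$) originates: subtracting the ``$-1$'' cancels the $1/c_1$ term, i.e.\ removes the first-order-in-$\lambda$ part of the right-tail contribution.

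The step I expect to be the crux is the contribution of $\{Y<0\}$. The hypothesis \cref{eq:righttail} gives no control on the left tail of $Y$, so there is no decaying bound against which to integrate $\Prob{Y<-u}$; the only cheap estimate available there is $e^{\lambda Y}\le 1$ (valid since $\lambda>0$ and $Y<0$), which yields $e^{\lambda Y}-1-\lambda Y\le -\lambda Y$ and hence $\EE[(e^{\lambda Y}-1-\lambda Y)\,\II\{Y<0\}]\le \lambda\,\EE[Y^-]=\lambda\,\EE[Y^+]\le \lambda C_1/c_1$. Adding the two pieces gives $M_Q(\lambda)\le 1+\tfrac{C_1\lambda^2}{c_1(c_1-\lambda)}+\tfrac{C_1\lambda}{c_1}=1+\tfrac{C_1\lambda}{c_1-\lambda}$ with essentially no effort; tightening this down to the purely-quadratic-in-$\lambda$ form in the statement is exactly the delicate point, and is where I would expect the argument to have to exploit the centering (or some structural feature of the left tail) so that the $\{Y<0\}$ contribution is absorbed into the quadratic term instead of adding an independent $\lambda C_1/c_1$.
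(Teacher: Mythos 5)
Your proof of the second claim is exactly the paper's (Markov's inequality applied to $e^{cY}$), and your treatment of the right-tail contribution to $M_Q(\lambda)$ is correct. Moreover, the place where you honestly stop is the right place to stop: the bound you actually establish, $M_Q(\lambda)\le 1+\tfrac{C_1\lambda^2}{c_1(c_1-\lambda)}+\tfrac{C_1\lambda}{c_1}=1+\tfrac{C_1\lambda}{c_1-\lambda}$, is the best that follows from the hypotheses, and the extra $C_1\lambda/c_1$ coming from the left tail cannot be removed. The stated inequality $M_Q(\lambda)<1+\tfrac{C_1\lambda^2}{c_1(c_1-\lambda)}$ is in fact false in general. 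Counterexample: let $Y$ be distributed as $\mathrm{Exp}(1)$ with probability $1/2$, equal to $0$ with probability $0.495$, and equal to $-100$ with probability $0.005$. Then $\EE{Y}=0$ and $\Prob{Y\ge t}\le e^{-t}$ for all $t\ge 0$ (so one may take $C_1=c_1=1$), yet $M_Q(1/4)=\tfrac12\cdot\tfrac43+0.495+0.005\,e^{-25}\approx 1.162$, whereas the claimed bound evaluates to $1+\tfrac{(1/4)^2}{3/4}=1+\tfrac1{12}\approx 1.083$. So your instinct that the $\{Y<0\}$ contribution cannot be absorbed into the purely quadratic term is correct, and no amount of exploiting the centering will close that gap.

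The paper's own proof reaches the stronger constant only through an error: it bounds $M_Q(\lambda)\le\EE[e^{\lambda Y_+}]$, expands the latter as a power series, and writes $\EE[e^{\lambda Y_+}]=1+\sum_{p\ge 2}\lambda^p\,\EE[Y_+^p]/p!$, silently dropping the first-order term $\lambda\,\EE[Y_+]$. But only $\EE{Y}$ vanishes, not $\EE[Y_+]$; reinstating $\lambda\,\EE[Y_+]\le C_1\lambda/c_1$ (the $p=1$ case of the paper's own moment bound $\EE[Y_+^p]\le C_1\,p!/c_1^p$) turns the paper's conclusion into exactly your bound $1+\tfrac{C_1\lambda}{c_1-\lambda}$. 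Apart from this, the two arguments differ only cosmetically: the paper controls each moment of $Y_+$ by a layer-cake computation and resums the series, while you apply the layer-cake identity once to $e^{\lambda y}-1-\lambda y$; both yield the same right-tail estimate. The weaker bound $1+\tfrac{C_1\lambda}{c_1-\lambda}$ is all that is used downstream (it already shows $[0,c_1)\subseteq\cU_Q$ and gives the quantitative subexponential control), so your argument establishes everything the paper actually needs; it is the proposition's displayed constant, and its proof, that should be corrected.
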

The first part is nontrivial; the second follows easily from Chernoff's method. The proof is given in \cref{sec:subexp}.

Let 
\[
\ER(c_1,C_1) = \{ Q \,: \,  Y=X-\EE{X} \text{ satisfies  \cref{eq:righttail} where } X\sim Q \}\,.
\]
In words, $\ER$ is the class of distributions over the reals whose right-tail displays an exponential decay governed with the rate parameter $c_1>0$ and scaling constant $C_1>0$.
Similarly, we let $\EL(c_1,C_1)$ be the class of distributions $Q$ over the reals such that for $X\sim Q$, $Y=\EE{X}-X$ satisfies \cref{eq:righttail}.
With this notation, the first part of the previous proposition is equivalent to that if $Q\in \ER(c_1,C_1)$ is centered then $M_Q$ stays below the function $\lambda \mapsto 1 +\frac{C_1 \lambda^2}{c_1(c_1-\lambda)}$ over the interval $[0,c_1)$. In particular, this means that $\cU_Q$ contains $[0,c_1)$.

\if0
In this section we present our main result on the self-concordance of NEFs with light tailed base distributions. We start by assumptions and give a few well-known distributions that satisfy these assumptions as examples in \cref{section:asp&exs}. Then the informal version of our result on NEFs with subexponential base distribution (\cref{thm:informal-self-conc}) is stated in \cref{section:sc-NEF-se} and its proof sketch is displayed in \cref{section:proof_sketch_se}. Finally we state our result on NEFs with subgaussian base distribution (\cref{thm:informal-self-conc-sg-ub}) as well as a matching lower bound (\cref{thm:informal-self-conc-lb}) on the order of the self-concordance parameter.
\subsection{Assumptions and Examples}\label{section:asp&exs}
\begin{assumption}\label{ass:non-dirac}
    We assume $Q$ is not a Dirac distribution, i.e., there does not exist a singleton $\{a\}\subseteq \RR$ such that $Q(\{a\})=1$.
\end{assumption}
\cref{ass:non-dirac} excludes a special case of NEF where the base distribution $Q$ is a Dirac distribution. In this case, it holds that the whole NEF generated by $Q$ only consists of $Q$ itself (\cref{lem:mu_positive}). The self-concordance property then trivially holds as follows. We have that $\mu(u)=\mu(0)$ for all $u\in \RR$. Then $\ddot\mu(u)=\dot\mu(u)=0$ for all $u\in \RR$.
Hence it is uninteresting to discuss this type of NEF. As it is promised in \cref{section:intro}, we will show our results on the self-concordance property of NEFs for both subgaussian and subexponential base distributions.
\begin{assumption}\label{ass:se}\todoa{should this be an assumption on the MGF or still the tails?}
We consider NEFs whose base distribution $Q$ has both left-sided and right-sided subexponential tail, i.e., \\
  \subasu \label{ass:se-single-right} there exists $C_1\in (0,\infty)$ and $c_1\in (0,\infty)$ such that  for any $y \geq 0$, and interval $D^+=[y,\infty)$, we have that
  \begin{equation*}
      Q(D^+)\le C_1\exp(-c_1 y)
  \end{equation*}
  \subasu \label{ass:se-single-left} there exists $C_2\in (0,\infty)$ and $c_2\in (0,\infty]$ such that for any $y \geq 0$, and interval $D^-=(-\infty,-y]$, we have that
  \begin{equation*}
      Q(D^-)\le C_2\exp(-c_2 y)
  \end{equation*}
\end{assumption}
If $c_1$ (or $c_2$) is infinity, it is implied that the right tail (or the left tail) is $0$. It is uninteresting when both of the tails are $0$. Hence WLOG we impose $c_1<\infty$. If the right tail is $0$, then it is almost exactly the same case as the left tail being $0$: we reflect the distribution w.r.t. the origin and the results still hold true. This argument can be found in the proof of \cref{coro:0_left_tail}.
\begin{example}
    Let $\cQ$ be a NEF with base distribution $Q$. For $Q$ being a zero-mean Laplace distribution with variance $1/\lambda^2$, it satisfies \cref{ass:se} with $c_1=c_2=\sqrt{2}\lambda$. For $Q$ being a $\chi$-squared distribution with degree of freedom $1$, it satisfies \cref{ass:se} with $C_1=2,\,c_1=1/2$ and $c_1=\infty$.
\end{example}
\begin{assumption}\label{ass:sg}
    We consider NEFs whose base distribution $Q$ has a subgaussian tail i.e., there exists $C,c>0$ such that for all $y\ge 0$ and interval $D=[y,\infty) \cup (-\infty,-y]$, it holds that 
        \begin{equation}
            Q(D)\le C\exp\left(-\frac{cy^2}{2}\right).
        \end{equation}
\end{assumption}
\begin{example}
    For $Q$ being $\mathrm{Uniform}([0,1])$, it satisfies \cref{ass:sg} with $C=e$ and $c=2$.
\end{example}

\subsection{Self-concordance for NEFs with subexponential base distribution}
\label{section:sc-NEF-se}
\fi

With this, we are ready  to present our theorem that establishes the self-concordance property of NEFs with subexponential tails.  
\begin{restatable}{theorem}{ScThm}
\label{thm:informal-self-conc}
Let $Q\in \ER(c_1,C_1)\cap \EL(c_2,C_2)$ for some positive constants $c_i,C_i$, $i=1,2$.
Then, the NEF $\cQ = (Q_u)_{u\in \cU}$ is self-concordant.
Moreover,  
the function $\selfconc:\cU \to \R_+$ defined by 
    \begin{align*}
            \scfunc(u)=
        \begin{cases}
            \frac{3}{2}\left[2eC_1c_1\left(\frac{1}{c_1-u}\right)^2+\frac{ub}{c_1-u}\right]+ G_Q(C_1,C_2,c_1,c_2)& \text{if } 0\le u <c_1 \\
            \frac{3}{2}\left[2ec_2C_2\left(\frac{1}{c_2+u}\right)^2+\frac{-u b}{c_2+u}\right]+G_Q(C_2,C_1,c_2,c_1)&\text{if } -c_2<u<0,
        \end{cases}
        \end{align*}
       is a stretch function for the NEF $\cQ$, where $G_Q(M_1,M_2,m_1,m_2)$ is a
	polynomial whose coefficients depend on $Q$. 
\end{restatable}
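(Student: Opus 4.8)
The strategy is to bound $\selfconc_Q(u)=|\ddot\mu(u)|/\dot\mu(u)$ directly by expressing the numerator and denominator as central moments of the tilted distribution $Q_u$ (via \cref{prop:moments_of_NEF}), and then controlling them using the exponential tail bounds defining $\ER(c_1,C_1)$ and $\EL(c_2,C_2)$. I will treat the case $0\le u<c_1$; the case $-c_2<u<0$ follows by reflecting $Q$ about the origin, which swaps $(c_1,C_1)\leftrightarrow(c_2,C_2)$ and $u\leftrightarrow -u$, so the second branch of the claimed $\selfconc$ is literally the first branch applied to the reflected family.

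First I would set up the moment description: $\dot\mu(u)=\Var(Y_u)=\EE[(Y_u-\mu(u))^2]$ and $\ddot\mu(u)=\EE[(Y_u-\mu(u))^3]$, where $Y_u\sim Q_u$. The key mechanism is that tilting by $u$ reshapes the tails of $Q$: writing $Q_u(dy)=\exp(uy-\psi_Q(u))Q(dy)$, the right tail of $Y_u$ decays at rate $c_1-u$ (since $\Prob{Y\ge t}\lesssim e^{-c_1 t}$ gets multiplied by $e^{ut}$), while the left tail decays at least as fast as that of $Q$ (tilting by $u\ge0$ only thins the left tail). So I would prove, using \cref{prop:subexpeq} applied to the centered version of $Q$ and a change of measure to $Q_u$, that $Q_u$ centered at $\mu(u)$ is subexponential with a right-tail rate $\propto c_1-u$ and scaling constant controlled by $C_1$ (with the bound degrading like $1/(c_1-u)$ or $1/(c_1-u)^2$ as $u\uparrow c_1$, which is where the $(c_1-u)^{-1}$ and $(c_1-u)^{-2}$ terms in $\selfconc$ come from), together with a left-tail bound whose parameters are inherited directly from $Q$ (giving the $Q$-dependent polynomial $G_Q$).

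Next I would bound $|\ddot\mu(u)|\le \EE[|Y_u-\mu(u)|^3]$ using these tail bounds — integrating $t^2\Prob{|Y_u-\mu(u)|\ge t}\,dt$ splits into a right-tail part scaling with $(c_1-u)^{-3}$-type quantities and a left-tail part contributing a $Q$-dependent constant — and then lower bound $\dot\mu(u)=\Var(Y_u)$ away from zero; here the non-Dirac-ness of $Q$ (equivalently $\dot\mu>0$ on $\cU_Q^\circ$, noted after \cref{defn:self-conc-NEF}) gives a strictly positive variance, and I would need a quantitative lower bound on $\Var(Y_u)$ that does not collapse the ratio. Taking the ratio and collecting terms, keeping track of which pieces involve $b=\sup\cU$ (the $ub/(c_1-u)$ term arises from the interaction between the shift $\mu(u)$, which can grow with $u$, and the tilt), yields the two-branch formula, with the constant $\tfrac32$ coming from the third-moment integration bound and $2eC_1c_1$ from the constant in \cref{prop:subexpeq} after optimizing the Chernoff parameter.

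The main obstacle I anticipate is the lower bound on $\dot\mu(u)=\Var(Y_u)$: one must show the variance of the tilted distribution does not shrink faster than the third moment grows, uniformly in $u\in[0,c_1)$. A clean way is to use convexity of $\psi_Q$ — $\dot\mu=\ddot\psi_Q$ is nondecreasing when $\dddot\psi_Q\ge0$, but in general one argues via a Paley–Zygmund-type inequality or by exhibiting two points in $\supp(Q)$ carrying non-trivial mass under $Q_u$ — and this is precisely where the $Q$-dependent polynomial $G_Q$ absorbs the unavoidable dependence on fine features of $Q$ (its support, the location of mass). A secondary nuisance is bookkeeping the centering: $\mu(u)$ itself is not controlled a priori, so I would first reduce to a centered tilted family by writing $\selfconc_Q(u)$ in terms of central moments only (which is legitimate since both $\dot\mu$ and $\ddot\mu$ are translation-invariant as functions of the distribution), handling the explicit $\mu(u)$-dependence only through the tail-rate shift, not through the moment values.
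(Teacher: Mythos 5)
Your plan follows essentially the same route as the paper's proof: reduce to centered $Q$ and $u\ge 0$ (handling $u<0$ by reflection of $Q$ about the origin), write $|\ddot\mu|/\dot\mu$ in terms of central moments of $Q_u$, upper-bound the absolute third central moment by integrating the tilted tail bounds (right tail decaying at rate $c_1-u$ with a $1/(c_1-u)$-degrading constant, left tail inherited from $Q$), and lower-bound $\dot\mu(u)$ by exhibiting an interval of negative values carrying $Q$-mass $\eta$ --- exactly the paper's sequence of lemmas. One correction worth making before you execute the plan: the constant $b$ in the stretch function is \emph{not} $\sup\cU$; it is the $Q$-dependent constant from that variance lower bound (chosen so that $Q([-b+\mu(0),-a+\mu(0)])\ge \eta$ with $a>0$), which yields $\dot\mu(u)\ge a^2\eta\, e^{-ub}/M_Q(u)$, and the $ub/(c_1-u)$ term then arises from pushing the splitting point $B$ of the third-moment integral out far enough that the resulting factor $e^{-(c_1-u)B}$ produces the $e^{-ub}$ needed to cancel against this lower bound --- not from the growth of $\mu(u)$ with $u$.
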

The exact expression of $G_Q$ can be found in \cref{eq:G_Q}.
Let $\cQ$ be a NEF with base distribution $Q$ when $Q$ is a zero-mean Laplace distribution with variance $2\lambda^2$. In this case, we can choose $c_1=c_2=1/\lambda$, $C_1=C_2=1/2$. The actual stretch function is $\Gamma_Q(u)=\frac{8\lambda^2 |u|}{(1/\lambda- u)(1/\lambda+u)(3\lambda^4 u^2+\lambda^2)}$. Thus, the above theorem gives the correct behavior in that both the stretch function from the theorem and the actual stretch function $\scfunc_Q$ blow up with the inverse of the distance to the boundaries of $\cU_Q$, except that the actual growth scales linearly with the inverse distance, while the theorem gives a quadratic scaling. It remains an open problem to see whether this quadratic order can be improved.

The following corollary is an immediate result of \cref{lem:tiltsubexp} and \cref{thm:informal-self-conc} (\cref{section:appendix_proof_regular_NEF_subexp}),
but can also be proved directly from the definitions (and we include a direct proof as part of the proof of 
\cref{thm:informal-self-conc}). \todoc{add to the appendix the direct proof, which should just point to the two lemmas that prove this.}
\begin{corollary}[Distributions in regular NEFs are subexponential]\label{coro:regular_NEF_subexp}
Let $u\in \cU_Q^\circ$. Then $Q_u$ is subexponential both on left and right.
\end{corollary}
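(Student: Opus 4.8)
The goal is to show that for every $u$ in the \emph{interior} of the natural parameter space $\cU_Q^\circ = (a,b)$, the tilted distribution $Q_u$ has both a right-subexponential and a left-subexponential tail (after centering). The plan is to reduce this to the two ingredients already developed: the quantitative tail/MGF equivalence in \cref{prop:subexpeq} and the self-concordance theorem \cref{thm:informal-self-conc} together with its tail consequence \cref{lem:tiltsubexp}. The key observation is that self-concordance — once established on a compact parameter set around $u$ — directly yields a subexponential MGF bound on $Q_u$, and the MGF bound in turn yields an exponential tail bound via Chernoff (the second half of \cref{prop:subexpeq}).

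\textbf{Step 1: localize and reduce to the base case.} Fix $u_0 \in \cU_Q^\circ$. Using the shift identity $Q_{u} = (Q_{u_0})_{u-u_0}$ observed in the preliminaries, we may re-center the family at $u_0$, so it suffices to treat $u_0 = 0$, i.e. to show that the base distribution $Q$ of a regular NEF (one with $0 \in \cU_Q^\circ$) is subexponential on both sides. Since $0 \in \cU_Q^\circ = (a,b)$ with $a<0<b$, we have $M_Q(\lambda)<\infty$ for $\lambda$ in a neighborhood of $0$; by the Theorem~2.13 equivalence quoted in \cref{sec:sg} (MGF finite near zero $\iff$ subexponential), one could in principle stop here. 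However, to stay self-contained and to exhibit the constants, I would instead run the following argument.

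\textbf{Step 2: get a self-concordant stretch function on a compact subinterval.} Because $M_Q$ is finite on a neighborhood of $0$ and $\psi_Q$ is convex, pick $c_1 \in (0,b)$ with $M_Q(c_1)<\infty$ and $c_2 \in (0,-a)$ with $M_Q(-c_2)<\infty$. Applying the Chernoff direction of \cref{prop:subexpeq} (its ``furthermore'' clause) to $Y = X - \E[X]$ and to $Y' = \E[X] - X$ with $X \sim Q$ gives constants $C_1 = e^{c_1\,|\mu(0)|}M_Q(c_1)$ and $C_2 = e^{c_2\,|\mu(0)|}M_Q(-c_2)$ (the shift by the mean only rescales the constant) such that $Q \in \ER(c_1',C_1)\cap\EL(c_2',C_2)$ for suitable $c_i' \le c_i$. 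Now invoke \cref{thm:informal-self-conc}: the NEF $\cQ' = (Q_v)_{v\in \cU'}$ with $\cU' = (-c_2', c_1')$ is self-concordant with the explicit stretch function $\scfunc$ displayed in the theorem, which is finite on every compact subinterval of $(-c_2',c_1')$. In particular, restricting to a closed interval $\cU'' = [-\delta,\delta] \subset (-c_2',c_1')$ with $0$ in its interior, $K = \sup_{v\in \cU''}\scfunc(v) < \infty$.

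\textbf{Step 3: apply \cref{lem:tiltsubexp} and read off the parameters.} With $\cU = \cU''$ in \cref{lem:tiltsubexp}, we get for $v = 0$ that $\psi_{Q}(s) \le s\,\mu(0) + s^2\,\dot\mu(0)$ for all $s$ in $[-\log 2/K,\log 2/K]$ intersected with $(a+\delta,\,b-\delta)$ — a nonempty symmetric-ish neighborhood of $0$. Hence the centered distribution $Q - \mu(0)$ satisfies $\psi_{Q-\mu(0)}(s) = \psi_Q(s) - s\mu(0) \le s^2 \dot\mu(0)$ on that neighborhood, which is exactly the defining inequality of a subexponential distribution with parameters $\nu^2 = 2\dot\mu(0)$ and $\alpha^{-1} = \min(\log 2/K,\, b-\delta,\, \delta - a)$ (after also using the left-tail half of the lemma, omitted there by symmetry). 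Translating back via Step~1, $Q_{u_0}$ is subexponential on both sides for the arbitrary fixed $u_0\in\cU_Q^\circ$, which is the claim. The direct proof alluded to in the statement would skip \cref{thm:informal-self-conc} and instead bound $\psi_{Q_{u_0}}$ directly by the convexity and finiteness of $\psi_Q$ near $u_0$, exactly as in Step~2's Chernoff argument applied to $Q_{u_0}$ in place of $Q$.

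\textbf{Main obstacle.} The only real subtlety is bookkeeping around the boundary: one must choose the compact subinterval $\cU''$ strictly inside $\cU_Q^\circ$ so that both $K<\infty$ (from \cref{thm:informal-self-conc}) and the interval $(a-\inf\cU'', b-\sup\cU'')$ in \cref{lem:tiltsubexp} is a genuine neighborhood of $0$; and one must track that centering by the mean only changes the tail/MGF constants multiplicatively, not the rate. Neither is deep, but getting the constants to line up cleanly is where the routine work lies.
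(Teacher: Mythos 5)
Your argument is correct. Worth noting, though, that your Steps 1--2 already constitute a complete proof and in fact coincide with the proof the paper actually writes out: the paper picks $\epsilon>0$ with $u\pm\epsilon\in\cU_Q$, observes $M_{Q_u}(\pm\epsilon)=M_Q(u\pm\epsilon)/M_Q(u)<\infty$ (the same identity as your shift reduction $Q_u=(Q_{u_0})_{u-u_0}$), and applies the Chernoff half of \cref{prop:subexpeq} to conclude $Q_u\in\ER(\epsilon, M_Q(u+\epsilon)/M_Q(u))\cap\EL(\epsilon, M_Q(u-\epsilon)/M_Q(u))$. Your headline route through \cref{thm:informal-self-conc} and \cref{lem:tiltsubexp} is the other derivation the paper alludes to in the main text (``an immediate result of \cref{lem:tiltsubexp} and \cref{thm:informal-self-conc}'') but does not spell out; it is not circular, since \cref{thm:informal-self-conc} only needs the tail bounds you establish in Step~2, and it buys you an explicit subexponential parameterization $(\nu^2,\alpha)$ with $\nu^2=2\dot\mu(u)$ at the cost of being much longer. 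Two small bookkeeping slips, neither fatal: the conclusion being proved is the tail-bound form of subexponentiality ($\ER\cap\EL$), which Step~2 plus Step~1 already delivers directly without converting back from the CGF bound; and in Step~3 the left-hand constraint from $(a+\delta,\,b-\delta)$ gives $|s|<-a-\delta$, not $\delta-a$.
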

Because of this result, there is essentially no loss in generality in only considering the subexponential case when working with NEFs. In particular, self-concordance in NEFs is ``almost free''.


\if0
The result in \cref{thm:informal-self-conc} might not be order tight in $\frac{1}{(c_1-u)}\vee \frac{1}{(c_2-u)}$ but there exists distributions such that as $u$ goes to $c_1$ or $c_2$, the ratio $\frac{|\ddot\mu(u)|}{\dot\mu(u)}$ blows up, which we show in the following example. \todoc{this is way too long for what it is. we also had the one-sided version of this; the reader already knows about the blow-up..}
\begin{example}\label{ex:thm-chi-laplace}
    Let $\cQ$ be a NEF with base distribution $Q$. For $Q$ being a zero-mean Laplace distribution with variance $2/\lambda^2$, it satisfies \cref{ass:se} with $c_1=c_2=\lambda$. \cref{thm:informal-self-conc} states that $|\ddot\mu(u)|\lesssim \dot\mu(u) \cdot \left(\left(\frac{1}{\lambda-u}\right)^2\vee \frac{1}{\lambda-u}\right)$ for $u\in (-\lambda,\lambda)$.  We can calculate the exact mean function. The MGF takes the form of $M_Q(u)=\frac{\lambda^2}{\lambda^2-u^2}$ and by definition, the tilted distribution is
\begin{equation*}
    Q_u(dy)=\frac{1}{M_Q(u)}\exp(uy)Q(dy)=\begin{cases}
        \frac{\lambda^2-u^2}{2\lambda}\exp(-(\lambda-u) y)m(dy)&\text{if }y>0\\
        \frac{\lambda^2-u^2}{2\lambda}\exp((\lambda+u) y)m(dy)&\text{if }y\le 0
    \end{cases}
\end{equation*}\todos{The calculation steps can be omitted}
where $m(\cdot)$ is the Lebesgue measure. Calculate the mean function
\begin{align*}
    \mu(u)&=\frac{\lambda^2-u^2}{2\lambda}\left(\int_{-\infty}^0 y\exp((\lambda+u) y)dy+\int_{0}^\infty y\exp(-(\lambda-u) y)dy\right)\\&=\frac{1}{2\lambda}\left(\frac{\lambda+u}{\lambda-u}- \frac{\lambda-u}{\lambda+u}\right)
\end{align*}
Take the first and second order derivatives,
\begin{equation*}
    \dot\mu(u)=\frac{1}{(\lambda+u)^2}+\frac{1}{(\lambda-u)^2}\quad\quad \ddot\mu(u)=\frac{2}{(\lambda-u)^3}-\frac{2}{(\lambda+u)^3},
\end{equation*}
as well as the ratios
\begin{equation*}
    \frac{|\ddot\mu(u)|}{\dot\mu(u)}=\frac{2|u|(3\lambda^2+u^2)}{(\lambda-u)(\lambda+u)(\lambda^2+u^2)}
\end{equation*}
Hence we can see that $\frac{|\ddot\mu(u)|}{|\dot\mu(u)|}\to\infty$ as $u\to\lambda$ and $u\to-\lambda$.
\end{example}
\fi

\paragraph{Proof sketch for \cref{thm:informal-self-conc}}\label{section:proof_sketch_se} \todoc{Need to be checked by me}We sketch here the result for $\mu(0)=0$, $\dot\mu(0)>0$ and $u\geq 0$. The arguments used to extend the result to all cases can be found in \cref{section:self-conc-appendix}. By \cref{prop:moments_of_NEF},  bounding the stretch function of a NEF amounts to showing
\begin{equation*}
    \selfconc_Q(u):=\frac{\int |(y-\mu(u))|^3Q_u(dy)}{\int (y-\mu(u))^2Q_u(dy)}\le \scfunc(u),
\end{equation*}
where the division by $\var(Q_u)=\int (y-\mu(u))^2Q(dy)$ is justified by \cref{lem:mu_positive}.  We split the proof of that upper bound into two steps: controlling the variance and the absolute third moment.
\paragraph{Step 1: Controlling the variance}
Since $\dot\mu(0) = \var(Q)>0$, there exists a $Q$-dependent constant $\eta>0$ and an interval  $[-b,-a]\subset \mathbb{R}_{<0}$ s.t. $Q([-b,-a])\ge \eta$ (\cref{lem:lb_probability_base_measure}). With this observation, we can show (\cref{lem:LBdenominator}): 
    \begin{equation}
     \dot\mu(u) \ge a^2\eta\frac{e^{-ub}}{M_Q(u)}.\label{eq:LB-denominatorsketch}
    \end{equation}
Thus, the second moment decreases at most exponentially with the parameter $u$. 

\paragraph{Step 2: Controlling the absolute third central moment} First, we use a classical result on moments of random variables (see the proof of $i\Rightarrow ii$ for prop.2.5.2 in \cite{vershynin2018high}):
\begin{align}
    \int |y-\mu(u)|^3Q_u(dy) &= \int_0^B3t^2\PP(|Y-\mu(u)|\ge t)dt + \int_B^\infty 3t^2\PP(|Y-\mu(u)|\ge t)dt\nonumber\\
    &\le \frac{3}{2}B\dot\mu(u) + \int_B^\infty 3t^2\PP(|Y-\mu(u)|\ge t)dt\label{eq:int_decomp_2},
\end{align}
where $Y\sim Q_u$
and $B$ is a constant to be optimized. It should remain small enough for the first term not to blow up. On the other hand, it should be large enough for the second term divided by the lower bound obtained on $\dot \mu (u)$ to remain controlled.

To upper bound the second term in \cref{eq:int_decomp_2}, we start by showing that the right tail of the tilted distribution $Q_u$ is also subexponential (\cref{lem:UBuppertailQuexp}):
    \begin{equation}\label{eq:UB-uppertailQuexp}
        Q_u((t,\infty))\lesssim \frac{1}{M_Q(u)}e^{-(c_1-u)t}\frac{1}{c_1-u}\quad \text{for} \quad 0\le u< c_1\,.
    \end{equation}
Following this lemma, we get an upper bound on $\mu(u)$ (\cref{lem:UBmuuexp}):
    \begin{equation}\label{eq:UB-muuexp}
        0\le \mu(u)\lesssim \left(\frac{1}{c_1-u}\right)^2 \quad \text{for} \quad 0\le u< c_1\,.
    \end{equation}
In the proof, we bound separately the positive and negative values in the second term of \cref{eq:int_decomp_2}:
\begin{align*}
    \int_B^\infty 3t^2\PP(|Y-\mu(u)|\ge t)dt&=\underbrace{\int_B^\infty 3t^2\PP(Y\ge \mu(u)+t)dt}_{\spadesuit} + \underbrace{\int_B^\infty 3t^2\PP(Y\le \mu(u)-t)dt}_{\heartsuit}.
\end{align*}
We give here the sketch of proof on the bound of $\spadesuit$ as the proof for bounding $\heartsuit$ is nearly identical. From plugging in \cref{eq:UB-uppertailQuexp}, we get:
\begin{align*}
    \spadesuit&\lesssim  \frac{1}{M_Q(u)}\frac{1}{c_1-u}\int_{B}^\infty 3t^2e^{-(c_1-u)(t+\mu(u))}dt.
\end{align*}

By choosing $B\gtrsim \frac{ub}{c_1-u}+\left(\frac{1}{c_1-u}\right)^2$, some algebra gives:
\begin{align*}
    \spadesuit
    &\lesssim \frac{e^{-ub}}{M_Q(u)}\left(\frac{1+u^2b^2}{c_1^3 C_1^3}\right).
\end{align*}

Setting $B\gtrsim\left(\frac{1}{c_1-u}\right)^2+\frac{ub}{c_2+u}$ gives a similar bound on $\heartsuit$. Chaining the bounds on $\spadesuit$ and $\heartsuit$ with \cref{eq:int_decomp_2} and \cref{lem:LBdenominator} finishes the proof. \hfill \( \Box\)

\subsection{Self-concordance with a subgaussian base}\label{sec:sg}
In this section we refine the previous result for NEFs by considering the case when the base distribution is subgaussian. \todoc{Probably after the deadline: We can shift the domain by tilting the base. Hence, if $0\in \cU_Q^{\circ}$ there are really $4$ cases, based on whether $\sup\cU_Q$ and $\inf \cU_Q$ are finite or not. The previous section considered, finite both, finite right, infinite left, this considers infinite both. How about the case that is left out? And could the analysis of the single sided finite case be improved by doing the calculations like in this section for the infinite side? Or that is already done?}
 Let $\sigma>0$. Recall that a centered distribution $Q$ is $\sigma$-subgaussian if
for all $u\in \R$,
$M_Q(u)\le e^{\sigma^2 u^2/2}$ (or, equivalently, $\psi_Q(u) \le \sigma^2 u^2/2$ for any $u\in \R$).
A non-centered distribution is $\sigma$-subgaussian, if it is subgaussian after centering.
Similarly to the subexponential case, one can show that a centered distribution $Q$ is subgaussian if and only if for some $\tau,C>0$, it holds that for any $t\ge 0$, $\Prob{|X|\ge t } \le C \exp(- t^2/(2\tau^2) )$ 
where $X\sim Q$ (cf. Proposition 2.5.2 of \cite{vershynin2018high}).\footnote{As for the quantitative relation between the parameters, it can be shown (\cite{rigollet2023highdimensional}) that if
for all $u\in \R$,
$M_Q(u)\le e^{\sigma^2 u^2/2}$, then $\Prob{|X|\ge t } \le 2 \exp(- t^2/(2\sigma^2)$, and if $\Prob{|X|\ge t } \le 2 \exp(- t^2/(2\sigma^2))$, then for all $u\in \R$,
$M_Q(u)\le e^{4\sigma^2 u^2}$.} 

Our promised result is as follows:
\begin{restatable}{theorem}{ScgUBThm}\label{thm:informal-self-conc-sg-ub}
Let $Q$ be subgaussian. Then, the NEF $\cQ = (Q_u)_{u\in \RR}$ is self-concordant 
and $\selfconc_Q(u) = O(|u|)$, $u\in\R$.
\end{restatable}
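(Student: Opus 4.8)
\textbf{Proof proposal for \cref{thm:informal-self-conc-sg-ub}.}

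The plan is to follow the same two-step template as in the subexponential case (\cref{thm:informal-self-conc}), but to exploit the much faster tail decay to get the sharper $O(|u|)$ rate. By \cref{prop:moments_of_NEF}, it suffices to bound
$\selfconc_Q(u) = \int |y-\mu(u)|^3 Q_u(dy) \,/\, \int (y-\mu(u))^2 Q_u(dy)$,
and by symmetry (reflecting $Q$ about the origin) I can assume $u\ge 0$; I will also assume WLOG that $Q$ is centered, i.e. $\mu(0)=0$, and that $Q$ is not a Dirac (else $\selfconc_Q\equiv 0$). The first step is to lower bound the variance $\dot\mu(u)$. As in the subexponential argument, since $\var(Q)>0$ there is an interval $[-b,-a]\subset \R_{<0}$ with $Q([-b,-a])\ge \eta$ for some $Q$-dependent $\eta,a,b>0$, which yields $\dot\mu(u)\ge a^2\eta\, e^{-ub}/M_Q(u)$; but now, since $Q$ is $\sigma$-subgaussian, $M_Q(u)\le e^{\sigma^2 u^2/2}$, so the denominator is at least $a^2\eta\,e^{-ub-\sigma^2 u^2/2}$. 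This controls how fast the variance can shrink as $u$ grows.

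The second step is to upper bound the absolute third central moment. I would split the integral at a threshold $B$ exactly as in \cref{eq:int_decomp_2}: the near part contributes at most $\tfrac32 B\,\dot\mu(u)$, and the far part $\int_B^\infty 3t^2\,\Prob{|Y-\mu(u)|\ge t}\,dt$ with $Y\sim Q_u$. The crucial sub-step is a subgaussian tail bound for the tilted distribution $Q_u$: since for any $c\ge 0$, $Q_u((t,\infty)) = \int_t^\infty e^{uy-\psi_Q(u)}Q(dy) \le e^{-\psi_Q(u)}\int_t^\infty e^{uy}Q(dy)$, and $\int_t^\infty e^{uy}Q(dy) \le e^{-(c-u)t}\int e^{cy}Q(dy) = e^{-(c-u)t}M_Q(c)$ for $c>u$, choosing $c = u + s$ and optimizing over $s$ (using $M_Q(u+s)\le e^{\sigma^2(u+s)^2/2}$) gives a Gaussian-type bound $Q_u((t,\infty)) \lesssim \exp\!\big(-(t-\sigma^2 u)^2/(2\sigma^2)\big)$ up to the $e^{-\psi_Q(u)}$ prefactor; the left tail is handled symmetrically. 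In particular this also yields $|\mu(u)|\lesssim \sigma^2 u$ (the mean of $Q_u$ drifts at most linearly). Plugging the tail bound into the far integral and choosing $B\asymp \sigma^2 u + \sigma$, the Gaussian integral $\int_B^\infty 3t^2 e^{-(t-\sigma^2 u)^2/(2\sigma^2)}dt$ is, after shifting, of order $(\sigma^3 + \sigma^4 u^2)\,e^{-\psi_Q(u)}$ up to the prefactor — note $e^{-\psi_Q(u)} = 1/M_Q(u)$, which is exactly the factor appearing in the variance lower bound. Dividing the far part by $\dot\mu(u)\ge a^2\eta e^{-ub}/M_Q(u)$, the $1/M_Q(u)$ cancels and the residual $e^{ub}(\sigma^3+\sigma^4 u^2)$ is $O(u)$ once one is careful — wait, this is the subtle point: $e^{ub}$ is \emph{not} polynomially bounded, so one cannot afford the crude variance lower bound here. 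Instead I must use a better lower bound on $\dot\mu(u)$ for the far-tail term, or more precisely bound the far term directly against $\dot\mu(u)$ without passing through $M_Q(u)$.

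This is the main obstacle, and I expect the resolution to be: rather than lower-bounding $\dot\mu(u)$ by a fixed interval of $Q$, one shifts the reference point — use that $Q_u = (Q_{u_0})_{u-u_0}$, so it is enough to bound $\selfconc_{Q_u}(0)$-type quantities where the "local" interval is chosen relative to $Q_u$ itself, or equivalently to observe that $Q_u$ remains $\sigma$-subgaussian \emph{with the same $\sigma$} after recentering (this is essentially \cref{lem:tiltsubexp} / \cref{coro:regular_NEF_subexp} specialized to the subgaussian case, giving $\psi_{Q_u}(s)\le s\mu(u) + s^2\dot\mu(u)$ on an interval whose length is controlled by $\selfconc$). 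Then the far-tail integral for $Q_u$ is directly of order $\dot\mu(u)^{3/2}$ times a Gaussian integral, giving $\int|y-\mu(u)|^3 Q_u(dy) \lesssim \sqrt{\dot\mu(u)}\,\dot\mu(u) + B\dot\mu(u)$ with $B\asymp \sqrt{\dot\mu(u)}\,(1+u)$ — but to close this I need $\dot\mu(u) = O(\mathrm{poly}(u))$, which follows from $\dot\mu(u)=\ddot\psi_Q(u)$ and the subgaussian bound $\psi_Q(u)\le \sigma^2 u^2/2$ together with convexity/three-point arguments bounding a second derivative of a convex function by its values (e.g. $\ddot\psi_Q(u)\le \tfrac{2}{\delta^2}(\psi_Q(u+\delta)+\psi_Q(u-\delta)-2\psi_Q(u))$ is false in general, but $\dot\mu$ is itself monotone-ish and can be bounded via $\psi_Q(2u)-2\psi_Q(u)$ type quantities). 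Once $\dot\mu(u)=O(u^2)$ (hence $\sqrt{\dot\mu(u)}=O(u)$) and $|\mu(u)|=O(u)$ are in hand, assembling $\selfconc_Q(u) \lesssim B + \sqrt{\dot\mu(u)} \lesssim \sqrt{\dot\mu(u)}(1+u) \lesssim u^2$ is \emph{not} quite $O(u)$ — so the genuinely delicate part is squeezing the final bound down to linear, which I expect requires matching the $B\asymp \sqrt{\dot\mu(u)}\cdot u$ against a \emph{lower} bound $\dot\mu(u)\gtrsim \mathrm{const}$ on the variance (true since $\var(Q_u)$ cannot collapse for $u$ in a compact set, and for large $u$ one argues directly), turning $B/\dot\mu(u)^{1/2}$ into $O(u)$ rather than $O(u^2)$. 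I would therefore carry out the steps in the order: (1) recenter and reduce to $u\ge 0$; (2) establish $Q_u$ is $\sigma$-subgaussian after recentering and derive its Gaussian tail; (3) deduce $|\mu(u)| = O(u)$ and two-sided bounds $c \le \dot\mu(u) = O(u^2)$; (4) split the third-moment integral at $B\asymp \sqrt{\dot\mu(u)}(1+u)/\sqrt{\dot\mu(u)}$... and obtain $\selfconc_Q(u)=O(u)$; the place I expect to have to be most careful is exactly the interplay in step (4) between the near-term $B$ and the variance denominator, ensuring the exponential-in-$u$ factors truly cancel and only a linear factor survives.
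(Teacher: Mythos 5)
Your skeleton matches the paper's proof (reduce to centered $Q$ and $u\ge 0$, split the third absolute central moment at a threshold $B$, use the crude variance lower bound $\dot\mu(u)\ge a^2\eta\,e^{-ub}/M_Q(u)$ from \cref{lem:LBdenominator}, and a Gaussian-type tail bound for $Q_u$). You also correctly identify the crux — the $e^{ub}$ coming from the variance lower bound — but you do not resolve it, and the resolution is in fact available within the crude bound you abandon. The paper's tilted tail bound (\cref{lem:UBuppertailQu}) reads $Q_u((t,\infty))\le \frac{\mfC_1}{M_Q(u)}e^{-ct^2/4}$ for all $t\ge(\tfrac4c+1)u+\tfrac4c$: a Gaussian tail centered at \emph{zero}, obtained at the price of restricting to $t\gtrsim u$, rather than your shifted form $e^{-(t-\sigma^2u)^2/(2\sigma^2)}$. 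With the threshold chosen as $B=\mu(u)+(\tfrac4c+1)u+\tfrac4c+b$ (note the extra $+b$, which you omit and which is exactly what is needed), one has $B\ge\tfrac4c u+b$, hence $\tfrac{c}{4}B^2\ge\tfrac{4}{c}u^2+2ub$, and the far integral carries a factor $e^{-cB^2/4}\le e^{-4u^2/c-2ub}$. Dividing by $\dot\mu(u)\ge a^2\eta e^{-ub}/M_Q(u)$, the $M_Q(u)$ cancels exactly and $e^{ub}\cdot e^{-2ub}\le 1$, so the far part contributes $O\bigl(B^2e^{-4u^2/c}\bigr)\to0$; the entire $O(u)$ bound comes from the near term $\tfrac32B$ with $B=O(u)$ by \cref{lem:UBmuu}. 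The exponential cancellation you worry about is thus handled by the choice of $B$ and the form of the tail bound, not by replacing the variance lower bound.

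The alternative route you sketch instead does not close and rests on false or circular claims. First, $\dot\mu(u)\gtrsim\mathrm{const}$ for large $u$ is false: for $Q$ uniform on $\{-1/2,1/2\}$ (subgaussian), $\dot\mu(u)=e^u/(1+e^u)^2\to0$ exponentially, so the variance really can collapse and the $e^{-ub}/M_Q(u)$ lower bound is the right shape. Second, asserting that the recentered $Q_u$ is subgaussian "with the same $\sigma$" is also false in general (the paper's own lower-bound construction in \cref{thm:informal-self-conc-lb} has $\var(Q_u)$ growing like $u^2$), and invoking \cref{lem:tiltsubexp} to get a quadratic bound on $\psi_{Q_u}$ presupposes a finite stretch function on the relevant range — precisely what is being proved — so that step is circular. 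Third, by your own accounting the assembly yields only $\selfconc_Q(u)=O(u^2)$, and the "delicate squeezing to linear" you defer is the actual content of the theorem. As written, the proof is not complete.
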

As it turns out, the linear growth exhibited in the previous result is tight for NEFs
with a subgaussian base distribution: 
\begin{restatable}{theorem}{ScgLBThm}\label{thm:informal-self-conc-lb}
    There exists a distribution $Q$ that is subgaussian such that  
    $\limsup_{u\to \infty} \selfconc_Q(u)/ u >0$.
\end{restatable}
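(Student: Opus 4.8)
The plan is to exhibit an explicit subgaussian base distribution whose associated NEF has $\selfconc_Q(u)$ growing linearly in $u$ along a subsequence $u\to\infty$. The natural candidate is a distribution supported on two points chosen so that the tilted family interpolates between them: let $Q$ place mass $p$ at some point $r>0$ and mass $1-p$ at some point $\ell<0$. Any finitely-supported (hence bounded) distribution is subgaussian, so the hypothesis is automatic. For such a two-point base, one can compute $\mu(u)$, $\dot\mu(u)$ and $\ddot\mu(u)$ in closed form: writing $w(u)=\frac{p\,e^{ur}}{p\,e^{ur}+(1-p)e^{u\ell}}$, we have $\mu(u)=w(u)r+(1-w(u))\ell$, $\dot\mu(u)=w(u)(1-w(u))(r-\ell)^2$, and $\ddot\mu(u)=w(u)(1-w(u))(1-2w(u))(r-\ell)^3$, exactly as in the Bernoulli example in the excerpt (up to the affine reparametrization $y\mapsto (y-\ell)/(r-\ell)$). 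Hence $\selfconc_Q(u)=|1-2w(u)|\,|r-\ell|$, which is \emph{bounded} — so a pure two-point distribution will not work, and the construction must be refined.

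The fix is to add a light, well-placed ``third'' piece of mass far out on the positive axis so that as $u\to\infty$ the tilt drags the mean toward that far point while the variance stays comparatively small, inflating the third moment relative to the variance. Concretely I would take $Q$ supported on $\{0,1\}$ with a geometrically (or faster) decaying sequence of extra atoms at points $2,4,8,\dots$ with masses decaying fast enough to keep $Q$ bounded — or, cleaner, use a base distribution with a density that decays like $e^{-t^2}$ but is \emph{skewed}, e.g. proportional to $e^{-(t-g(t))^2}$-type tilts, engineered so that under the tilt $Q_u$ the relevant moments scale as $\dot\mu(u)\asymp c$ and $\ddot\mu(u)\asymp u$. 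A transparent choice: let $Q$ be the law of $X$ where $X$ is Gaussian conditioned to be skewed — but the simplest rigorous route is an atomic construction. Let $Q=\sum_{k\ge 0} q_k \delta_{a_k}$ with $a_k = \sqrt{k}$ and $q_k \propto e^{-k}$ (so $Q$ is subgaussian since the atoms at distance $\sqrt k$ carry mass $e^{-k}=e^{-a_k^2}$, matching a Gaussian tail). Then for $u>0$, $Q_u$ reweights atom $a_k$ by $e^{u\sqrt k - k}$, which as a function of $k$ is maximized near $k^\star \asymp u^2/4$, i.e. near the point $a_{k^\star}\asymp u/2$; a Laplace/saddle-point analysis of the sums defining $\mu(u)$, $\dot\mu(u)$, $\ddot\mu(u)$ should then yield $\mu(u)\asymp u$, $\dot\mu(u)=\Theta(1)$ but with a persistent skewness so that $\ddot\mu(u)=\Theta(u)$, giving $\selfconc_Q(u)=\Theta(u)$.

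The main steps, in order, would be: (1) fix the explicit base distribution $Q$ and verify it is subgaussian via the equivalence recalled before \cref{thm:informal-self-conc-sg-ub} (bound the tail probabilities by $Ce^{-t^2/(2\tau^2)}$, which is immediate from the exponential decay of the masses against the $\sqrt k$ spacing); (2) write $\mu(u)$, $\dot\mu(u)=\Var(Q_u)$ and $\ddot\mu(u)=\E_{Q_u}[(Y-\mu(u))^3]$ (using \cref{prop:moments_of_NEF}) as ratios of explicit sums in $u$; (3) perform a Laplace-type asymptotic analysis of these sums as $u\to\infty$, identifying the dominant atoms and the Gaussian-like fluctuation around the saddle; (4) conclude that $\ddot\mu(u)/\dot\mu(u)\to\infty$ linearly, i.e. $\limsup_{u\to\infty}\selfconc_Q(u)/u>0$.

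The hard part will be step (3): controlling the asymptotics of the discrete sums with enough precision to show that the \emph{third central} moment does not vanish relative to the variance — a symmetric fluctuation around the saddle would make the third central moment lower order, killing the bound, so the construction's skewness must be shown to survive in the limit. I would either engineer explicit asymmetry into the atom locations/masses (e.g. a one-sided support, atoms only on $[0,\infty)$, so the tilted distribution is genuinely skewed for every $u$) to make this robust, or, as a cleaner alternative, fall back to a semi-explicit argument: pick $Q$ so that $Q_u$ converges, after recentering and rescaling by $\sqrt{\dot\mu(u)}$, to a fixed non-symmetric limit law with nonzero third moment, and transfer the moment convergence via uniform integrability guaranteed by the subexponentiality of tilted distributions (\cref{coro:regular_NEF_subexp}). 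Either way the crux is ruling out asymptotic symmetry; everything else is bookkeeping with Laplace's method and tail bounds.
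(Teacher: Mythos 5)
Your proposal correctly identifies the crux --- you must rule out asymptotic symmetry of the recentered tilted laws --- but the explicit construction you offer does not achieve this, and the fallback is left unresolved. Take your candidate $Q=\sum_k q_k\delta_{\sqrt k}$ with $q_k\propto e^{-k}$. In the variable $t=\sqrt k$ the atom spacing near $t$ is $\approx 1/(2t)$, so $Q$ is a discretization of the one-sided density $\propto t e^{-t^2}$; the tilt $Q_u$ then approximates the density $\propto t\,e^{ut-t^2}$, which after recentering at $\approx u/2$ converges to a \emph{symmetric} Gaussian of variance $1/2$ (the lattice spacing near the saddle is $\asymp 1/u\to 0$, so discreteness washes out). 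A direct computation with $\psi(u)\approx u^2/4+\log u+\mathrm{const}$ gives $\ddot\psi(u)\to 1/2$ and $\dddot\psi(u)=O(u^{-2})$, hence $\selfconc_Q(u)\to 0$ rather than $\Theta(u)$. One-sided support therefore does not "engineer asymmetry"; the skewness dies precisely because the effective support of $Q_u$ becomes locally dense at scale $o(1)$ relative to its standard deviation.

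What is missing is the observation (implicit in your own two-point computation, where $\selfconc_Q(u)=|1-2w(u)|\,|r-\ell|$ is bounded by the gap $|r-\ell|$) that to get $\selfconc_Q(u)\gtrsim u$ you need the tilted law to look like an \emph{asymmetric two-point distribution whose gap grows linearly in $u$}. The paper does exactly this: it places atoms at the geometrically spaced points $2^i$ with masses $p_i\approx e^{-(2^i)^2}$ (alternating between $C_1e^{-4^i}$ for even $i$ and $\tfrac{C_1}{4}e^{-3\cdot 4^{i-1}}$ for odd $i$, which keeps $Q$ subgaussian), so that at the tilts $u=2^{i+1}$ ($i$ even) the distribution $Q_u$ puts mass $\approx 4/5$ on $2^i$, mass $\approx 1/5$ on $2^{i+1}$, and negligible mass elsewhere. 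This yields variance $\asymp 2^{2i}$ and third central moment $\asymp 2^{3i}$ bounded away from zero after normalization, whence $\selfconc_Q(2^{i+1})\gtrsim 2^i\asymp u$ along that subsequence. Without a mechanism of this kind --- geometric spacing so that the gap between the two dominant atoms scales with $u$, plus mass ratios tuned so that exactly two atoms dominate with unequal weights --- your Laplace-method plan cannot close, since any construction whose tilts converge to a fixed continuous limit after recentering and $O(1)$ rescaling will have $\ddot\mu(u)/\dot\mu(u)$ bounded (indeed typically vanishing).
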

Again, this shows that even if we stay with subgaussian distributions, it would be limiting to only consider NEFs that are self-concordant with a bounded (or constant) stretch function over their natural parameter space.

\section{Generalized Linear Bandits}
In this section we apply the self-concordance property of subexponential NEFs in order to derive novel confidence sets and regret bounds for subexponential generalized linear bandits. To our knowledge, these are the first such results for parametric bandits with subexponential rewards. 

\subsection{Bandit model}\label{section:bandit_model}
Following \citet{Filippi2010parametric}, we consider
 stochastic generalized linear bandit (GLB) models
  $\cG$ specified by a tuple $(\cX,\Theta,\cQ)$, where
$\cX\subseteq \R^d$ is a non-empty arm set, 
$\Theta \subseteq \R^d$ is a non-empty set of potential parameters, both closed for convenience, 
$\cQ=(Q_u)_{u\in \cU_Q}$ is a NEF with base distribution $Q$. \todoc{An alternative to this is to 
assume only $\cX$ and 
$\cQ=(Q_u)_{u\in \cU}$ and then set
$\Theta= \{ \theta\in \R^d\,: \theta^\top x \subseteq \cU \text{ for all } x\in \cX \}$.
What are the pros and cons? After deadline.
}
Without the loss of generality we assume that $\cX$ is a compact subset of the Euclidean unit ball of $\R^d$. \todoc{note that $\cG$ is actually a generalized linear model.. and a special case of that.. (nothing in $\cG$ is ``bandit specific''). After deadline.}

In each round $t\in \NN^+$, the learner selects and plays an arm $X_t\in \cX$. 
As a response, they
receive a reward $Y_t$ sampled from 
the distribution $Q_{X_t^\top\theta_\star}$, where $\theta_\star\in \R^d$ is a parameter of the bandit environment, which is initially unknown to the learner. The learner's goal is to maximize its total expected reward. The GLB is \emph{well-posed} when 
\[
\cU\doteq \{ x^\top \theta \,: \, x\in \cX, \theta \in \Theta \}\subseteq \cU_Q^\circ
\] 
holds, which we assume from now on. 
The condition that $\cU \subseteq \cU_Q$ simply ensures that the reward distributions $Q_{x^\top \theta}$ are defined regardless of the value of $(x,\theta)\in \cX\times \Theta$.
We require the stronger condition $\cU \subseteq \cU_Q^\circ$ to exclude the boundaries of the interval $\cU_Q$. This way we avoid pathologies that arise when a parameter reaches the boundary of $\cU_Q$ (e.g., when $Q$ is the exponential distribution with parameter $\lambda$, 
the mean and variance of $Q_u$ grow unbounded as $u$ approaches $\lambda$ from below).
\todoc{I guess there is a stronger argument that shows that somehow this must be done? or not?\\SL: Just a note here: we discussed the arms with parameter $1-\varepsilon$ and $1-\varepsilon-\varepsilon^2$.}
%

The expected reward in round $t$ given that the learner plays $X_t$ is 
$
    \EE[Y_t|X_t] = \mu(X_t^\top\theta_\star).
$
%
%
%
%
The performance of the learner will be assessed by their pseudo regret $R(T)$, which is 
the total cumulative 
shortfall of the mean reward of the arms the learner chose relative to optimal choice:
\begin{equation*}
    R(T)=\sum_{t=1}^T \mu(x_\star^\top \theta_\star) - \mu(X_t^\top \theta_\star)\,.
\end{equation*}
Here,
 $x_\star\in \argmax_{x\in \cX}\mu(x^\top\theta_\star)$ is the arm that results in the best possible expected reward in a round. For simplicity, we assume that such an arm exists. We establish guarantees of our algorithm for a subclass of GLBs, captured by the following assumption:
\begin{assumption}[Subexponential base]\label{ass:model}
The base distribution $Q$ is subexponential both on the left and the right. In particular,
$Q\in \ER(c_1,C_1)\cap \EL(c_2,C_2)$ for some $c_i,C_i$ ($i=1,2$) positive numbers. 
Furthermore, $-c_2<\inf \cU\le \sup\cU <c_1$ and $c_1,c_2$ are known to the learner.
%
\end{assumption}
Note that in a well-posed GLB, $\inf \cU_Q <\inf \cU \le \sup \cU < \sup\cU_Q$.
In light of this, the assumption just stated boils down to whether $0\in \cU_Q^\circ$, which, as discussed, is free when $\cU_Q^\circ\ne \emptyset$. 
Indeed, when $0\in \cU_Q^\circ$ holds, $\inf \cU_Q <0 < \sup \cU_Q$ and one can always find positive values $c_1,c_2$
such that $\inf \cU_Q < -c_2 < \inf \cU \le \sup \cU < c_1 < \sup \cU_Q$.
Then, from \cref{prop:subexpeq} and some extra calculation one can conclude that 
the $Q\in \ER(c_1, e^{-c_1 \EE{X}} M_Q(c_1)) \cap \EL( c_2, e^{-c_2 \EE{X}} M_Q(-c_2))$. 
Since it is assumed that the learner has access to $Q$, we see that \cref{ass:model} can be satisfied 
whenever $0 \in \cU_Q^\circ$, which we think is a rather mild assumption. We will also for the sake of simplicity assume that the learner has access to $S_0=\sup \{ \|\theta\|\,:\, \theta\in \Theta \}$,
$S_2=\inf \cU$, $S_1 = \sup \cU$. These values will be used in setting the parameters of the algorithm. Note that it is not critical that the learner knows these exact values; appropriate bounds suffice. We also assume that the learner is given access to an upper bound on the worst-case variance over the parameter space $\cU$:
\begin{assumption}[Bounded Variance]
    \label{ass:bdd_var} 
    The learner is given $L\ge 1$ such that 
    $\sup_{u\in \cU} \dot\mu(u) \leq L$. 
\end{assumption}
Note that since the GLB is well-posed, $\sup_{u\in \cU} \dot\mu(u)<\infty$ is automatically satisfied.
Also, there  is no loss of generality in assuming $L\ge 1$.
A crude upper bound on $\sup_{u\in \cU} \dot\mu(u)$ is
$C_1 e /(c_1-S_1)^3\vee C_2 e/(c_2+S_2)^3$, \todoc{add result to appendix. low low low priority or after deadline.}
so this assumption is implied by the previous one. 
    
\subsection{The OFU-GLB algorithm and its regret}
Just like the previous works \citep{Filippi2010parametric,pmlr-v70-li17c,faury2020improved} 
which considered special cases of the generalized linear bandit problem, our algorithm follows the ``optimism in the face of uncertainty'' principle. In each time step,
the algorithm constructs a confidence set $\cC_t$, based on past information, that contains the unknown parameter $\theta_\star$ with a controlled probability. Next, the algorithm chooses a parameter $\theta_t$, in the confidence set $\cC_t$, and an underlying action $X_t\in \cX$ such that the mean reward underlying $X_t$ and $\theta_t$ is as large as plausibly possible. 
Since $\mu=\mu_Q$ is guaranteed to be an increasing function (recall that $\dot \mu(u)$ is the variance of $Q_u$ and is hence nonnegative), it suffices to find the maximizer of $x^\top \theta$ where $(x,\theta)\in \cX\times \Theta$. We call our algorithm,  shown in \cref{alg:OFU-GLB}, OFU-GLB (Optimism in the Face of Uncertainty in Generalized Linear Bandits).
The main novelty here is that our bandit model makes minimal assumptions.

\begin{algorithm*}[tbh]
    \caption{The OFU-GLB Algorithm}
    \label{alg:OFU-GLB}
    \begin{algorithmic}
    	\REQUIRE GLB instance $\cG =(\cX,\Theta,\cQ)$
		\FOR{$t= 1,2,\dots$}
			\STATE Construct $\cC_t\subset \Theta$ based on $((X_s,Y_s))_{s<t}$ and $\cG$
			\STATE  Compute $(X_t, \theta_t) \in \argmax_{(x, \theta) \in  \cX \times \cC_t} x^\top\theta$		
			\STATE  Select arm $X_t$ and receive reward $Y_t\sim Q_{X_t^\top \theta_\star}$\label{alg:arm}
		\ENDFOR
	\end{algorithmic}
\end{algorithm*} 

The confidence set is based on ideas from the work of \citet{janz2023exploration}.
Note that this paper analyzed a randomized method for those GLBs where $\cU_Q = \R$ and 
$\sup_{u\in \cU} \selfconc_Q (u) <\infty$.
The assumption that $\cU_Q=\R$ is restrictive, as it does not allow many common distributions (e.g., the exponential distribution).
Thanks to \cref{thm:informal-self-conc}, under our assumptions, 
$\sup_{u\in \cU} \selfconc_Q (u) <\infty$ follows. Then, 
an appropriate confidence set can be constructed based on 
\cref{lem:tiltsubexp}, which also extended the corresponding result of \citet{janz2023exploration}.
While the confidence set construction is based on the ideas of \citet{janz2023exploration},
the main steps of the analysis are taken from \cite{Abeille2020InstanceWiseMA} who analyzed logistic bandits, which are $1$-self-concordant.
Our result follows by carefully modifying the proof of \cite{janz2023exploration} and carefully propagating both the effect of replacing their confidence set with a different one, and the effect of $\sup_{u\in \cU} \selfconc_Q (u) >1$. This leads to the main result on GLBs: \todoc{Note if we have $Q$ Dirac then all algorithms have zero regret. In the other case $\kappa$ is finite.}

\begin{theorem}[Regret upper bound of OFU-GLB]\label{thm:informal-regret-bound}
    Let $\delta\in (0,1]$ and $T$ a positive integer and consider a well-posed 
    GLB model $\cG = (\cX,\Theta,\cQ)$ 
    and assume that \cref{ass:bdd_var,ass:model} hold.
    For $\theta_\star \in \Theta$, 
    let $\kappa(\theta_\star) = \frac{1}{\dot\mu(x_\star^\top\theta_\star)}$
	 and let $\mathrm{Regret}(T,\theta_\star)$  stand for the $T$-round regret of OFU-GLB
	 when it interacts with a GLB specified by $\theta_\star$.
    Then, with an appropriate construction of $\cC_t$, 
     for any $\theta_\star\in \Theta$, it holds that 
    with probability at least $1-\delta$, \todoc{where is the $\delta$ dependence?}
    \begin{equation*}
        \mathrm{Regret}(T)= \tilde {\mathcal O}\left(d\sqrt {\dot\mu(x_\star^\top\theta_\star)T}+d\kappa(\theta_\star)\right)\,,
    \end{equation*}
    where $\tilde O(\cdot)$ hides polylogarithmic factors in $T, d, L, 1/\delta$ and constants
    that depend on the base distribution $Q$. \todoc{add observation that the regret bound separates $Q$ dependent from $\theta_\star$ dependent constants.}
\end{theorem}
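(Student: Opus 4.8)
The plan is to follow the optimism-in-the-face-of-uncertainty template, building the confidence set $\cC_t$ along the lines of \citet{janz2023exploration} and importing the instance-dependent regret decomposition of \cite{Abeille2020InstanceWiseMA}, while carefully tracking the two quantities that are new relative to those works: the supremum $K\doteq\sup_{u\in\cU}\selfconc_Q(u)$ of the stretch function, which is finite by \cref{thm:informal-self-conc} under \cref{ass:model} (whereas \cite{Abeille2020InstanceWiseMA} has $K=1$ for logistic bandits and \citet{janz2023exploration} additionally assumed $\cU_Q=\R$), and the subexponential tail parameters of the tilted rewards, which \cref{lem:tiltsubexp} identifies as variance proxy $2\dot\mu(X_s^\top\theta_\star)$ together with a scale of order $\min(\log 2/K,\,\mathrm{dist}(\cU,\partial\cU_Q))$. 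Everything downstream is then a matter of propagating these two quantities and of never discarding the local-variance factor $\dot\mu(\cdot)$.

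First I would fix a regularizer $\gamma\gtrsim L$, define the penalized maximum-likelihood estimate $\hat\theta_t$ from $((X_s,Y_s))_{s<t}$, and show that the event $\{\theta_\star\in\cC_t\text{ for all }t\}$ holds with probability at least $1-\delta$. The key input is a self-normalized Bernstein-type concentration bound for the martingale $\sum_{s<t}(Y_s-\mu(X_s^\top\theta_\star))X_s$: by \cref{lem:tiltsubexp} each reward increment is subexponential with variance proxy $2\dot\mu(X_s^\top\theta_\star)$, which is exactly what forces the resulting confidence radius $\beta_t=\tilde O(\sqrt d)$ to scale with the \emph{local} variances rather than with a worst-case constant. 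Self-concordance (\cref{eq:scdef}) enters a second time to control the quadratic remainder in the Taylor expansion of $\mu$, so that the curvature-weighted design matrix $H_t=\gamma I+\sum_{s<t}\dot\mu(X_s^\top\theta_\star)X_sX_s^\top$ is a legitimate norm in which to state $\cC_t$; the factor $K$ surfaces here and must be carried through the algebra, together with \cref{ass:bdd_var} to bound stray $\dot\mu$ factors by $L$.

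Next, on the event $\theta_\star\in\cC_t$, optimism gives $\mu(X_t^\top\theta_t)\ge\mu(x_\star^\top\theta_\star)$, so the instantaneous regret is at most $\mu(X_t^\top\theta_t)-\mu(X_t^\top\theta_\star)$. Expanding $\mu$ to second order and invoking \cref{eq:scdef} bounds this by $\dot\mu(X_t^\top\theta_\star)\,|X_t^\top(\theta_t-\theta_\star)|+O\!\big(K\,\dot\mu(X_t^\top\theta_\star)\,|X_t^\top(\theta_t-\theta_\star)|^2\big)$; substituting $|X_t^\top(\theta_t-\theta_\star)|\le\beta_t\|X_t\|_{H_t^{-1}}$ from the definition of $\cC_t$, summing, and applying the generalized elliptical potential lemma to $\sum_t\dot\mu(X_t^\top\theta_\star)\|X_t\|_{H_t^{-1}}^2=\tilde O(d)$ together with Cauchy--Schwarz yields a leading term $\tilde O\!\big(d\sqrt{\sum_t\dot\mu(X_t^\top\theta_\star)}\big)$ plus a quadratic, hence lower-order, remainder $\tilde O(dK\kappa(\theta_\star))$.

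The final and most delicate step is to convert $\sum_t\dot\mu(X_t^\top\theta_\star)$ into $T\,\dot\mu(x_\star^\top\theta_\star)$, turning the played-arm variances into the optimal-arm variance so that the leading term becomes $\tilde O(d\sqrt{\dot\mu(x_\star^\top\theta_\star)T})$ with $\kappa(\theta_\star)=1/\dot\mu(x_\star^\top\theta_\star)$. As in \cite{Abeille2020InstanceWiseMA}, optimism forces $X_t^\top\theta_t\approx x_\star^\top\theta_\star$, and \cref{eq:scdef} makes $\log\dot\mu$ Lipschitz with constant $K$, so $\dot\mu(X_t^\top\theta_\star)\le\dot\mu(x_\star^\top\theta_\star)\,e^{K|X_t^\top(\theta_t-\theta_\star)|+K|X_t^\top\theta_t-x_\star^\top\theta_\star|}$; the excess over $\dot\mu(x_\star^\top\theta_\star)$ is summable and, using \cref{ass:bdd_var} again, gets absorbed into the $d\kappa(\theta_\star)$ term at the price of extra $K$- and $Q$-dependent polylogarithmic constants. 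I expect this variance-transfer bookkeeping — ensuring that every place where $\selfconc_Q>1$ or where a worst-case variance could sneak in is handled so the leading term stays proportional to $\dot\mu(x_\star^\top\theta_\star)$ and all $K$-dependence is pushed into $d\kappa(\theta_\star)$ — to be the main obstacle, rather than any single new inequality.
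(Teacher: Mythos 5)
Your proposal follows essentially the same route as the paper: a Janz-et-al-style confidence set whose radius is justified by the subexponential tail bound of \cref{lem:tiltsubexp}, an optimism plus second-order Taylor decomposition with the remainder controlled by self-concordance (giving the $d\kappa(\theta_\star)$ term via the elliptical potential lemma), and an Abeille-et-al-style variance transfer at the end. The one place where your mechanics differ from the paper's is the final step. You propose the pointwise bound $\dot\mu(X_t^\top\theta_\star)\le\dot\mu(x_\star^\top\theta_\star)e^{K|\cdot|}$ from the $K$-Lipschitzness of $\log\dot\mu$; taken literally, the exponent is only bounded by $K(S_1-S_2)$, so without further work this reinstates a multiplicative $e^{K(S_1-S_2)}$ on the leading term --- exactly the exponential factor the theorem is designed to avoid (it is technically hidden by the $\tilde O$, but defeats the point). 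The paper instead uses a \emph{self-bounding} inequality (\cref{lem:dot-mu-sum}), $\sum_t\dot\mu(X_t^\top\theta_\star)\le T\dot\mu(x_\star^\top\theta_\star)+K\sum_t\bigl(\mu(x_\star^\top\theta_\star)-\mu(X_t^\top\theta_\star)\bigr)=T\dot\mu(x_\star^\top\theta_\star)+K\,\regret(T)$, which turns the excess into the regret itself; one then solves the implicit inequality $\regret(T)\le A\sqrt{\regret(T)}+B$ to get $\regret(T)\le 2A^2+2B$. This is the piece you would need to make precise (your ``the excess is summable and gets absorbed'' is exactly where the argument could otherwise leak an exponential constant); everything else in your outline matches the paper's proof.
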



An essential quality of the result is that it makes the dependence of the regret on the instance $\theta_\star$ explicit.
Recalling that in a NEF, $\dot\mu(u)$ is the variance of the tilted distribution $Q_u$, we see
that the leading term (shown as the first term on the right-hand side of the last display) scales with the variance of the optimal arm's reward distribution. In queuing theory, the service times of agents (actions) in an environment are often modeled as exponentially distributed random variables \citep{queue_times}. When aiming to minimize service times (maximize negative reward) with an exponential bandit model (with mean function $\mu(x) = -1/x$), the variance of the optimal arm's service time lower bounds that of the other arms. 
Furthermore, the dependence on $\kappa$, a term that is inversely proportional to the optimal variance, is pushed to a second order term. In logistic bandits, $\kappa$ can be exponentially large in the size of the parameter set $S_0$ and thus much attention has been focused on mitigating its effect \citep{faury2020improved,jun2021improved,janz2023exploration}. Our regret bound also matches the lower bound in logistic bandits given by \cite{Abeille2020InstanceWiseMA}, thus our analysis is tight for this special case.

\section{Conclusions and Future Work}
The main contribution of this work establishes that all subexponential NEFs are self-concordant with a polynomial-sized stretch factor. We then applied this finding and derived regret bounds for subexponential GLBs that scale with the variance of the optimal arm's reward, which is the smallest  variance amongst all arm's rewards in problems such as: minimizing service times \citep{queue_times} or minimizing insurance claim severity (dollars lost per claim) \citep{goldburd2016generalized}. 

Our findings also have implications when performing maximum likelihood estimation with subexponential NEFs, which includes a rich family of generalized linear models (GLMs). Since the log loss in a NEF is the sum of a linear function and the NEF's CGF,  
the GLM's loss is self-concordant in the sense of (say) \cite{bach2014adaptivity} whenever the NEF is self-concordant. 
While this is outside of the scope of our paper, it follows that this family of GLMs enjoy: $(i)$ fast rates of convergence to the minimizer for regularized empirical risk minimization \cite{pmlr-v99-marteau-ferey19a}, $(ii)$ fast rates for averaged stochastic gradient descent \cite{bach2014adaptivity} and $(iii)$ fast rates for constrained optimization with first-order methods \cite{dvurechensky2020self}, without restrictive conditions on bounded responses, which previous works had to assume to achieve these results.

One interesting direction for future work would be either deriving a matching lower bound on the stretch function for subexponential NEFs or tighter analysis that matches the lower bound for subgaussian NEFs. 
Another potential avenue for future work would be in extending our results to other exponential families, beyond NEFs. 

\begin{ack}
The authors would like to thank Iosif Pinelis for his helpful insights to our work and Samuel Robertson for reviewing and earlier version of this manuscript. Csaba Szepesvári also gratefully acknowledges funding from the
Canada CIFAR AI Chairs Program, Amii and NSERC.
\end{ack}

\clearpage
\todoc[inline]{Someone should tidy up the bibliography.}
\printbibliography
\appendix

\clearpage

\section{Extra notation}
The following extra notation will be used in the appendix:
For vector $x \in \mbR^d$, we let $\|x\|$ denote its $\ell_2$-norm and for positive definite matrix $M \in \RR^{d\times d}$, we use $\|x\|_M = \sqrt{x^\top M x}$ to denote its $M$-weighted $\ell_2$-norm. 

\section{On subexponential (or ``light tailed'') distributions}
\label{sec:subexp}
We first prove \cref{prop:subexpeq}, which we repeat for the convenience of the reader:
\SubExpEq*
We follow the proof of  Theorem~2.13 from the book of \citet{wainwright2019high}.
\begin{proof} We start with the second part. For this let $t\ge 0$, $c>0$. Then, by Chernoff's method, we have
\[
\Prob{Y\ge t} \le \EE[e^{cX}] e^{-c t}= M_Q(c) e^{-c t}\,.
\]

The first part requires more work. Let us start by bounding the $p$-th moment of the positive part of $Y$, which we denote by $Y_+$ (hence, $Y_+ =\max(Y,0)$). We have
\begin{align*}
\mathbb{E}[Y_+^p] 
& =\int_0^{\infty} \Prob{Y_+^p \geq u} d u \\
& =p \int_0^{\infty} \Prob{Y_+ \geq t} t^{p-1} d t \tag{change of variables with $u=t^p$} \\
& =p \int_0^{\infty} \Prob{Y \geq t}  t^{p-1} d t \tag{for $t>0$, $\{Y_+\ge t\}=\{ Y\ge t \}$}\\
& \leq C_1 p \int_0^{\infty} e^{-c_1t} t^{p-1} d t \tag{assumption on $Y$}\\
& \leq \frac{C_1\,p}{c_1^p}\int_0^{\infty}  e^{-u} u^{p-1} d u \tag{change of variables with $u=c_1 t$}\\
& =  \frac{C_1\,p}{c_1^p} \Gamma (p-1) \tag{definition of the $\Gamma$ function}\\
& =\frac{C_1}{c_1^p} p! \tag{property of the $\Gamma$ function}
\end{align*}
Now let $0\le \lambda<c_1$. 
Since $Y\le Y_+$, we have
$M_Q(\lambda)=\E[e^{\lambda Y}]\le \E[e^{\lambda Y_+}]$. Hence,
by the power-series expansion of the exponential, we get
\begin{align*}
M_Q(\lambda)& \le \mathbb{E}\left[e^{\lambda Y_+}\right]\\
 & =1+\sum_{p=2}^{\infty} \lambda^p \frac{\mathbb{E}\left[Y_+^p\right]}{p!} \\
& \leq 1+C_1\sum_{p=2}^{\infty} \left(\frac{\lambda}{c_1}\right)^p \\
& \leq 1+C_1\frac{\lambda}{c_1}\frac{\lambda}{c_1-\lambda}\,.\qedhere
\end{align*}
\end{proof}
We note in passing
that since $\lambda<c_1$, 
$1+C_1\frac{\lambda}{c_1}\frac{\lambda}{c_1-\lambda}\le 1+C_1\frac{\lambda}{c_1-\lambda}$.
That \cref{eq:righttail} implies that $M_Q(\lambda)\le 1+C_1\frac{\lambda}{c_1-\lambda}$
can also be obtained by refining the proof of Theorem~2.13 from the book of \citet{wainwright2019high}. 

\section{Proof of \cref{thm:informal-self-conc}}
\label{section:self-conc-appendix}

For the convenience of the reader we restate the theorem to be proven:
\ScThm*

Note  that the function $\scfunc$ as defined above
is non-decreasing on $\cU\cap \RR^+$ and non-increasing on $\cU \cap \RR^-$.

The actual form of $G_Q$ is as follows: let $a,b,\eta>0$ be such that $Q([-b+\mu(0),-a+\mu(0)])>\eta$ and $-a<0$. Then, 
\begin{align}
        G_Q(M_1,M_2,m_1,m_2)&=\frac{3}{2}b + \frac{1}{a^2\eta}\left(\frac{204}{e^3m_1^3M_1^3}+\frac{6b^2}{e^3m_1M^3}+\frac{81M_2+9M_2m_1^2b^2}{m_2^3}\right)\,\label{eq:G_Q}
\end{align}

We note in passing that these values are not controlled by the tail behavior of the base distribution $Q$. 
This can be seen, for example, by considering
$Q(dy)=(1-\eta)\II(y\ge 0) e^{- y}dy+ \eta e^{-b} \delta_{\{-b\}}(dy)$.
Tedious calculation shows that $\lim_{u\to-\infty}\scfunc_Q(u)=\Omega(b)$ as $b\to \infty$. 
And because $Q\in  \ER(1,1)\cap \EL(1,1)$, this shows that the tail behavior of $Q$ is indeed insufficient to control the behavior of $\scfunc_Q$.

We will prove this result in three parts: {\em (i)} $\var(Q)=0$
{\em (ii)}   $\var(Q)>0$ and $\cU=[0,c_1)$, {\em (iii)} $\var(Q)>0$
and $\cU=(-c_2,0]$. The result follows from combining these cases.

The main work is to prove the result for $\cU = [0,c_1)$,
which is done in \cref{thm:self-concordance-ineq-single}.
Case {\em (iii)} is handled by ``reflection around the origin'' (\cref{coro:self-concordance-ineq-double}). Case {\em (i)} is handled in \cref{lem:mu_positive} by showing that $\scfunc_Q\equiv 0$  if $\var(Q)=0$.


We start with case {\em (i)}, the degenerate case when the variance of $Q$ is zero.
\begin{lemma}\label{lem:mu_positive}
If $\var(Q)=0$ then $\cU_Q = \R$, $Q_u=Q$ for all $u\in \RR$, and $\scfunc_Q\equiv 0$.  If $\var(Q)>0$ then $\dot\mu$ is strictly positive over the entire set $\cU_Q^\circ$.
\end{lemma}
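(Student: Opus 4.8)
The plan is to handle the two cases of the lemma separately, each reducing to elementary facts about Dirac laws and the exponential tilt.

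For the first case, suppose $\var(Q)=0$. First I would recall that a real random variable with zero variance equals its mean almost surely, so $Q=\delta_{\{a\}}$ with $a=\int y\,Q(dy)$. Then $M_Q(u)=\int e^{uy}\delta_{\{a\}}(dy)=e^{ua}$ is finite for every $u\in\R$, which gives $\cU_Q=\R$. Plugging into the tilt definition (or its form $Q_u(dy)=\exp(uy-\psi_Q(u))Q(dy)$), we get $Q_u(dy)=e^{uy-ua}\delta_{\{a\}}(dy)=\delta_{\{a\}}(dy)=Q(dy)$ for every $u$. Hence $\mu(u)\equiv a$, so $\dot\mu\equiv\ddot\mu\equiv 0$ on $\R$; the constant zero function trivially satisfies \cref{eq:scdef}, and by the convention adopted right after \cref{defn:self-conc-NEF} we may take $\scfunc_Q\equiv 0$.

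For the second case, fix $u\in\cU_Q^\circ$. The idea is to exploit that on the interior of the natural parameter space the tilt has a strictly positive, $Q$-integrable density: $\tfrac{dQ_u}{dQ}(y)=\exp(uy-\psi_Q(u))\in(0,\infty)$ (finiteness of $\psi_Q(u)$ on $\cU_Q^\circ$ is part of \cref{prop:moments_of_NEF}). Consequently $Q_u$ and $Q$ are mutually absolutely continuous and therefore share the same support. Since $\var(Q)>0$, $Q$ is not a point mass, so $\supp(Q)$, and hence $\supp(Q_u)$, contains at least two distinct points. Combined with the fact that $Q_u$ has a finite second moment on $\cU_Q^\circ$ (again \cref{prop:moments_of_NEF}), a non-degenerate support forces $\var(Q_u)>0$. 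Finally, \cref{prop:moments_of_NEF} identifies $\dot\mu(u)$ with $\var(Q_u)$, so $\dot\mu(u)>0$; as $u\in\cU_Q^\circ$ was arbitrary, $\dot\mu$ is strictly positive on all of $\cU_Q^\circ$.

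No step is technically deep; the only place needing a little care is the support argument, where one must make precise that mutual absolute continuity preserves supports and that a finite-variance law with more than one support point has strictly positive variance. An alternative route for the second case, which sidesteps supports entirely, is a contradiction argument: if $\dot\mu(u_0)=\var(Q_{u_0})=0$ for some $u_0\in\cU_Q^\circ$, then $Q_{u_0}$ is Dirac by the first case; but $Q=(Q_{u_0})_{-u_0}$ is itself an exponential tilt of $Q_{u_0}$ (the relation noted in \cref{section:NEF}), and a tilt of a Dirac law is the same Dirac law, so $Q$ would be Dirac, contradicting $\var(Q)>0$. I would include whichever of the two formulations is cleaner to write out.
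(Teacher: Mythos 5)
Your proposal is correct and follows essentially the same route as the paper: the Dirac case is handled identically, and your ``alternative'' contradiction argument for the second case (if $\dot\mu(u_0)=0$ then $Q_{u_0}$ is Dirac, hence so is $Q$, contradicting $\var(Q)>0$) is exactly the paper's proof, with your primary mutual-absolute-continuity argument being a direct rephrasing of the same idea.
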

\begin{proof}
If $\var(Q)=0$, then  $Q$ is a Dirac on some $\{v\}$. Then for all $u\in \RR$, $M_Q(u) = \exp(uv)<\infty$ hence $\psi_Q=  \log M_Q$ is supported on $\RR$ and 
    \begin{equation*}Q_u(A) =
        \begin{cases}
             \frac{1}{M_Q(u)}\exp(uv) = 1 & \text{if }v\in A\\
             0 &\text{otherwise.}
        \end{cases}
    \end{equation*}
    Hence $Q_u=Q$ and $\cQ = (Q_u)_{u\in \RR}$ is trivially self-concordant with the stretch function defined to be $\scfunc_Q\equiv 0$.
    
For the second part, by \cref{prop:moments_of_NEF}, we have that $\dot\mu(u) = \int (y-\mu(u))^2Q_u(dy)\ge 0$.
    We will show $\dot\mu(u)\neq 0$ by contradiction. Assume there exists $u_0\in \cU_Q^\circ$ such that $\dot\mu(u_0)=\int (y-\mu(u_0))^2Q_{u_0}(dy)=0$, then it follows that $Q_{u_0}(dy)$ is a Dirac on $\{\mu(u_0)\}$, which implies for all $A\in \mathcal{B}(\RR)$ 
    \[Q(A)=M_Q(u_0)\cdot Q_{u_0}(A)=\begin{cases}
        M_Q(u_0)& \text{  $\{\mu(u_0)\}\not\subseteq A$}\\
        0&\text{  otherwise}
    \end{cases}
    \]
    where $\mathcal B(\RR)$ denotes the Borel sets on $\RR$. Then it follows that $Q(dy)$ is also a Dirac on $\RR$, which contradicts that $\Var(Q)>0$.
\end{proof}

Consider now the case when $\var(Q)>0$. By the result just stated $\dot\mu$ is bounded away from zero over $\cU_Q^\circ$ and hence it is safe then to 
define $\scfunc_Q$ with the ratio $\frac{|\ddot\mu(u)|}{\dot\mu(u)}$:
\[
\selfconc_Q(u)=\frac{|\ddot\mu(u)|}{\dot\mu(u)}\,, 
\qquad u\in \cU_Q^\circ\,.
\]
Thus, in order to show our result, it suffices to show $\scfunc_Q\le \scfunc$ with the function $\scfunc$ as stated in the theorem.
Thus, we will study $\scfunc_Q$. First, notice that  for all $u\in \cU_Q^\circ$,
by \cref{prop:moments_of_NEF},
\[
\scfunc_Q(u)=
\frac{|\int (y-\mu(u))^3 Q_u(dy)|}{\int (y-\mu(u))^2 Q_u(dy)}\le\frac{\int |y-\mu(u)|^3 Q_u(dy)}{\int |y-\mu(u)|^2 Q_u(dy)}.\]

Let us now state the results that are concerned with cases {\em (ii)} and {\em (iii)} mentionned above.
For case {\em (ii)}, i.e., when $\cU = [0,c_1)$ we have the following result:
\begin{proposition}\label{thm:self-concordance-ineq-single}
Let $Q\in \ER(c_1,C_1)\cap \EL(c_2,C_2)$ and 
     $\cU=[0,c_1)$. Define $\scfunc:\cU \to \R_+$ by
    \begin{align*}
        \scfunc(u)&=\frac{3}{2}\left[2c_0\left(\frac{1}{c_1-u}\right)^2+\frac{ub}{c_1-u}+\frac{|u|b}{c_1+|u|}\right]+ \frac{1}{a^2\eta}\left(\frac{204+6u^2b^2}{c_0^3}+\frac{81C_2+9C_2u^2b^2}{(u+c_2)^3}\right)\\
        &\le \frac{3}{2}\left[2c_0\left(\frac{1}{c_1-u}\right)^2+\frac{ub}{c_1-u}\right]+G_Q(C_1,C_2,c_1,c_2)\,,
    \end{align*}
    where
     $c_0 = C_1\cdot c_1\cdot e$. Then, for appropriate values of 
     $\eta,a,b>0$ that depend on the base distribution $Q$, we have $\scfunc_Q\le \scfunc$ over $\cU$.
\end{proposition}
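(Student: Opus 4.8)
\textbf{Proof plan for Proposition~\ref{thm:self-concordance-ineq-single}.}

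The plan is to bound the ratio $\selfconc_Q(u)=\frac{\int |y-\mu(u)|^3 Q_u(dy)}{\int |y-\mu(u)|^2 Q_u(dy)}$ over $\cU=[0,c_1)$ by controlling the denominator from below and the numerator from above, exactly along the lines of the proof sketch in the main text. First I would fix $\eta,a,b>0$ with $Q([-b+\mu(0),-a+\mu(0)])>\eta$ and $-a<0$; such constants exist because $\var(Q)>0$ forces $Q$ to place positive mass on some interval strictly to the left of its mean (a fact I would isolate as the lemma labeled \cref{lem:lb_probability_base_measure} in the sketch). WLOG, by the shift-of-parameter remark in Section~2, I would assume $\mu(0)=0$, so the interval is $[-b,-a]$.

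\emph{Step 1 (lower bound on the variance).} Using $Q_u(dy)=\frac{1}{M_Q(u)}e^{uy}Q(dy)$ and restricting the integral defining $\dot\mu(u)=\int(y-\mu(u))^2Q_u(dy)$ to the event $\{y\in[-b,-a]\}$, on which $(y-\mu(u))^2\ge a^2$ (since $\mu(u)\ge\mu(0)=0$ as $\mu$ is increasing) and $e^{uy}\ge e^{-ub}$ (as $u\ge 0$, $y\le -a<0$ gives $uy\ge -ub$; more carefully $uy \ge u(-b)=-ub$ for $y\ge -b$), I obtain
\begin{equation*}
\dot\mu(u)\ \ge\ a^2\,\eta\,\frac{e^{-ub}}{M_Q(u)}\,,
\end{equation*}
which is \cref{eq:LB-denominatorsketch}. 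This is the easy direction.

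\emph{Step 2 (upper bound on the absolute third central moment).} Writing $Y\sim Q_u$ and splitting $\int|y-\mu(u)|^3Q_u(dy)=\int_0^B 3t^2\PP(|Y-\mu(u)|\ge t)dt+\int_B^\infty 3t^2\PP(|Y-\mu(u)|\ge t)dt$ via the layer-cake identity, the first piece is at most $\frac{3}{2}B\dot\mu(u)$ (bound the probability by $1$ on $[0,\sqrt?]$ — actually bound $t^2\PP(\cdots)\le \dot\mu(u)$ after using Markov, giving $\int_0^B 3t^2\PP\le \int_0^B 3t\,\dot\mu(u)\,dt$... I would instead follow the Vershynin-style estimate $\int_0^B 3t^2\PP(|Y-\mu(u)|\ge t)dt\le \tfrac32 B\int 3\cdot$ — the precise constant $\tfrac32 B\dot\mu(u)$ is what the sketch records and I would reproduce it). For the tail piece I need: (a) the right tail of $Q_u$ is subexponential, $Q_u((t,\infty))\lesssim \frac{1}{M_Q(u)}\frac{1}{c_1-u}e^{-(c_1-u)t}$ for $0\le u<c_1$ — this follows from $Q\in\ER(c_1,C_1)$, \cref{prop:subexpeq}, and the tilting identity, since $Q_u((t,\infty))=\frac{1}{M_Q(u)}\int_t^\infty e^{uy}Q(dy)$ and integration by parts against the tail bound $Q((y,\infty))\le C_1 e^{-c_1(y-\mu(0))}$... (strictly, $\ER$ is a statement about the centered variable, so I must be careful to track the shift $\mu(0)=0$); (b) an upper bound $0\le\mu(u)\lesssim (c_1-u)^{-2}$ from integrating that tail bound; and (c) for the left half, $Q_u((-\infty,-t))\le M_Q(u)^{-1}$-type bound using $Q\in\EL(c_2,C_2)$ and $e^{uy}\le 1$ for $y<0$, $u\ge 0$. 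Then I split $\int_B^\infty 3t^2\PP(|Y-\mu(u)|\ge t)dt=\spadesuit+\heartsuit$ into the $Y\ge\mu(u)+t$ and $Y\le\mu(u)-t$ contributions. Plugging in (a) and choosing $B\gtrsim \frac{ub}{c_1-u}+(c_1-u)^{-2}$, a direct computation of $\int_B^\infty t^2 e^{-(c_1-u)(t+\mu(u))}dt$ (three integrations by parts) yields $\spadesuit\lesssim \frac{e^{-ub}}{M_Q(u)}\frac{1+u^2b^2}{c_1^3C_1^3}$; an analogous computation with (c) and the choice of $B$ gives $\heartsuit\lesssim \frac{e^{-ub}}{M_Q(u)}\cdot\frac{C_2(1+u^2b^2)}{(c_2+u)^3}$ (here the $M_Q(u)^{-1}$ factor reappears because on the left tail $e^{uy}\le 1$).

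\emph{Step 3 (assemble).} Dividing the bound from Step 2 by the lower bound from Step 1, the common factors $\frac{e^{-ub}}{M_Q(u)}$ cancel, and the $\frac32 B\dot\mu(u)$ term divided by $\dot\mu(u)$ contributes $\frac32 B\lesssim \frac32[2c_0(c_1-u)^{-2}+\frac{ub}{c_1-u}+\frac{ub}{c_1+u}]$ with $c_0=eC_1c_1$; the rest contributes $\frac{1}{a^2\eta}(\frac{204+6u^2b^2}{c_0^3}+\frac{81C_2+9C_2u^2b^2}{(c_2+u)^3})$. This is exactly the stated $\scfunc(u)$. Finally, the claimed further upper bound $\scfunc(u)\le \frac32[2c_0(c_1-u)^{-2}+\frac{ub}{c_1-u}]+G_Q(C_1,C_2,c_1,c_2)$ follows by monotonicity: on $\cU=[0,c_1)$, $u<c_1$ so $\frac{u^2b^2}{c_0^3}\le \frac{b^2}{c_0^3}$-type crude bounds and $\frac{1}{(c_2+u)^3}\le \frac{1}{c_2^3}$, $\frac{u^2b^2}{(c_2+u)^3}\le \frac{b^2}{c_2 \cdot c_2^2}$... wait — $u$ is unbounded relative to $b$? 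No: $u<c_1$, so $u^2b^2<c_1^2 b^2$, giving constants; and the $\frac{ub}{c_1+u}\le b$ term gets absorbed. Collecting these into \cref{eq:G_Q} completes the argument.

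\textbf{Main obstacle.} The delicate part is the tail/shift bookkeeping in Step 2(a): $\ER(c_1,C_1)$ controls the tail of the \emph{centered} base variable $X-\EE{X}$, but the tilting identity and the integrals defining $\mu(u),\dot\mu(u)$ involve the raw $Q$ and the moving center $\mu(u)$. Keeping the normalization $\mu(0)=0$ throughout tames some of this, but one still must carefully verify that the $e^{uy}$ weight in the tilt interacts correctly with the exponential tail — i.e., that $\int_t^\infty e^{uy}Q(dy)$ really does decay like $e^{-(c_1-u)t}/(c_1-u)$ and not worse — and then choose $B$ (depending on $u$) large enough that both $\spadesuit$ and $\heartsuit$ are dominated by the Step~1 lower bound $a^2\eta e^{-ub}/M_Q(u)$, yet small enough that $\frac32 B\dot\mu(u)$ stays at the quadratic-in-$(c_1-u)^{-1}$ scale. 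Getting the constants $204,6,81,9$ exactly right is routine but tedious; the conceptual content is entirely in the tail-transfer lemma and the choice of $B$.
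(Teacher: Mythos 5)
Your plan is correct and follows essentially the same route as the paper's proof: a lower bound $\dot\mu(u)\ge a^2\eta e^{-ub}/M_Q(u)$ from a fixed negative-mass interval, a layer-cake split of the third absolute central moment at a threshold $B=2c_0(c_1-u)^{-2}+\tfrac{ub}{c_1-u}+\tfrac{ub}{u+c_2}$, tail-transfer lemmas for $Q_u$ on both sides, and cancellation of the common $e^{-ub}/M_Q(u)$ factors. The only cosmetic differences are that the paper proves the right-tail lemma by a chunking/geometric-series argument rather than integration by parts, and that your left-tail bound must use $e^{uy}\le e^{-ut}$ on $(-\infty,-t]$ (not merely $e^{uy}\le 1$) to obtain the decay rate $u+c_2$ appearing in the stated $\scfunc$ — which your final expression for $\heartsuit$ correctly reflects.
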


For case {\em (iii)}, i.e., when $\cU = (-c_2,0]$, we have the following result:
\begin{corollary}\label{coro:self-concordance-ineq-double}
Let $Q\in \ER(c_1,C_1)\cap \EL(c_2,C_2)$ and 
     $\cU=(-c_2, 0]$. 
     Define $\scfunc:\cU \to \R_+$ by
    \begin{align*}
        \scfunc(u)&=
            \frac{3}{2}\left[2c_0\left(\frac{1}{c_2-|u|}\right)^2+\frac{|u|b}{c_2-|u|}+\frac{|u|b}{c_1+|u|}\right]+\frac{1}{a^2\eta}\left(\frac{204+6u^2b^2}{c_0^3}+\frac{81C_1+9C_1u^2b^2}{(|u|+c_1)^3}\right)\\
            &\le \frac{3}{2}\left[2c_0\left(\frac{1}{c_2-|u|}\right)^2+\frac{|u|b}{c_2-|u|}\right]+G_Q(C_2,C_1,c_2,c_1)\,,
    \end{align*}
    where
     $c_0 = C_2\cdot c_2\cdot e$. Then, for appropriate values of 
     $\eta,a,b>0$ that depend on the base distribution $Q$, we have $\scfunc_Q\le \scfunc$ over $\cU$.
\end{corollary}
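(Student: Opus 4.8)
The plan is to deduce this from \cref{thm:self-concordance-ineq-single} by reflecting the base distribution about the origin, so that the left tail of $Q$ takes over the role played by the right tail in the proposition. Given $Q\in\ER(c_1,C_1)\cap\EL(c_2,C_2)$, introduce $\tilde Q$ defined by $\tilde Q(A)=Q(-A)$ for every Borel set $A$; equivalently, if $X\sim Q$ then $-X\sim\tilde Q$. The first step is to check that reflection swaps the two tail classes: since $-X-\E[-X]=-(X-\E[X])$, the right-tail bound for $\tilde Q$ is the left-tail bound for $Q$ and conversely, so $\tilde Q\in\ER(c_2,C_2)\cap\EL(c_1,C_1)$ straight from the definitions of $\ER$ and $\EL$.

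The second step is to record how the NEF generated by $\tilde Q$ relates to that generated by $Q$. From $M_{\tilde Q}(u)=M_Q(-u)$ we get $\cU_{\tilde Q}=-\cU_Q$, and a one-line manipulation of the tilting formula $Q_u(dy)=\exp(uy-\psi_Q(u))Q(dy)$ shows that $\tilde Q_u$ is the reflection about the origin of $Q_{-u}$: if $W\sim Q_{-u}$ then $-W\sim\tilde Q_u$. Hence $\mu_{\tilde Q}(u)=-\mu_Q(-u)$, and differentiating twice in $u$ gives $\dot\mu_{\tilde Q}(u)=\dot\mu_Q(-u)$ and $\ddot\mu_{\tilde Q}(u)=-\ddot\mu_Q(-u)$. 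In particular, for every $u\in\cU_{\tilde Q}^\circ$ we have $\selfconc_{\tilde Q}(u)=|\ddot\mu_{\tilde Q}(u)|/\dot\mu_{\tilde Q}(u)=\selfconc_Q(-u)$.

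The third step is the reduction itself: apply \cref{thm:self-concordance-ineq-single} to $\tilde Q$ with parameter set $[0,c_2)$ (legitimate since $\tilde Q\in\ER(c_2,C_2)\cap\EL(c_1,C_1)$, with $(c_2,C_2)$ now occupying the ``right-tail'' slot). This produces constants $\tilde a,\tilde b,\tilde\eta>0$ depending on $\tilde Q$ and a stretch bound on $\selfconc_{\tilde Q}$ over $[0,c_2)$. Writing $v=|u|=-u\in[0,c_2)$ for $u\in(-c_2,0]$ and using $\selfconc_Q(u)=\selfconc_{\tilde Q}(v)$ turns that bound into exactly the one displayed in the corollary: the constant $c_0$ becomes $C_2c_2e$ because $(c_2,C_2)$ is now the right-tail pair, the term with $(|u|+c_1)^3$ in the denominator is the reflected contribution governed by $(c_1,C_1)$, and the auxiliary constants $a,b,\eta$ in the statement are to be read as $\tilde a,\tilde b,\tilde\eta$, i.e. chosen so that $\tilde Q$ places mass exceeding $\tilde\eta$ on an interval $[-\tilde b+\mu_{\tilde Q}(0),-\tilde a+\mu_{\tilde Q}(0)]$ with $\tilde a>0$ — which, unwound, is a lower bound on the mass $Q$ assigns to an interval lying to the right of $\mu_Q(0)$. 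Combining cases {\em (i)}--{\em (iii)} then gives the piecewise stretch function of \cref{thm:informal-self-conc}.

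There is no analytic obstacle here; all the real work is in \cref{thm:self-concordance-ineq-single}. The only thing that needs care is the bookkeeping — keeping straight which of $(c_1,C_1)$ and $(c_2,C_2)$ lands in which position of the bound after the reflection, and remembering that $\eta,a,b$ in the corollary are the constants attached to the reflected distribution rather than to $Q$ itself.
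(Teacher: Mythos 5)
Your proposal is correct and follows essentially the same route as the paper: the paper's proof also reflects the base distribution about the origin (its Lemma on $Q^-$, the distribution of $-Y$, establishes exactly your identities $M_{Q^-}(v)=M_Q(-v)$ and $\selfconc_Q(u)=\selfconc_{Q^-}(-u)$), notes that reflection swaps the roles of $\ER(c_1,C_1)$ and $\EL(c_2,C_2)$, and then invokes \cref{thm:self-concordance-ineq-single} for the reflected distribution on $[0,c_2)$. Your remark that $a,b,\eta$ must be read as the constants attached to the reflected distribution is the right piece of bookkeeping and matches the paper's treatment.
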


\subsection{Proof of \cref{thm:self-concordance-ineq-single}}
The key idea is to convert $\ddot\mu(\cdot)$ and $\dot\mu(\cdot)$ to third and second central moments respectively by \cref{prop:moments_of_NEF} and bound the third central moment in terms of the second central moment (variance).

We start by an elementary observation that says that the self-concordance properties of a NEF do not change when the base distribution is shifted by a constant:
\begin{lemma}\label{lem:shift_mean}
Let $Y\sim Q$, $c\in \R$ and define $Q^{+c}$ to be the distribution of $Y+c$.
Then, $\cU_Q = \cU_{Q^{+c}}$, $M_{Q_{+c}}(u) = e^{-u c}M_Q(u)$ for all $u\in \R$,
and $\selfconc_Q=\selfconc_{Q^{+c}}$ (here, we take $\cU = \cU_Q = \cU_{Q^{+c}}$).

\end{lemma}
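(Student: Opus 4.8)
\textbf{Proof plan for \cref{lem:shift_mean}.}
The plan is to work directly from the definitions of the \MGF, the tilted distribution, and the mean function, tracking how each object transforms when we replace $Q$ by $Q^{+c}$, the law of $Y+c$ for $Y\sim Q$.

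First I would compute the \MGF. For $u\in\R$,
\begin{align*}
M_{Q^{+c}}(u)=\E[e^{u(Y+c)}]=e^{uc}\E[e^{uY}]=e^{uc}M_Q(u)\,.
\end{align*}
(Note: the statement in the excerpt has $e^{-uc}$; I would double-check the sign, but in any case the point is simply that the two \MGF{}s differ by the everywhere-finite, everywhere-positive factor $e^{\pm uc}$.) Since $e^{uc}\in(0,\infty)$ for all $u$, we have $M_{Q^{+c}}(u)<\infty$ iff $M_Q(u)<\infty$, hence $\cU_{Q^{+c}}=\cU_Q$, and likewise $\cU_{Q^{+c}}^\circ=\cU_Q^\circ$.

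Next I would identify the tilted distribution. Using $Q^{+c}_u(dy)=\exp(uy-\psi_{Q^{+c}}(u))\,Q^{+c}(dy)$ and the change of variables $y=z+c$ with $z\sim Q$, a direct substitution shows that $Q^{+c}_u$ is exactly the law of $Z+c$ where $Z\sim Q_u$; that is, $Q^{+c}_u=(Q_u)^{+c}$. Consequently $\mu_{Q^{+c}}(u)=\mu_Q(u)+c$, so $\dot\mu_{Q^{+c}}=\dot\mu_Q$ and $\ddot\mu_{Q^{+c}}=\ddot\mu_Q$ on $\cU_Q^\circ$ (differentiation kills the additive constant). Therefore $\selfconc_{Q^{+c}}(u)=|\ddot\mu_{Q^{+c}}(u)|/\dot\mu_{Q^{+c}}(u)=|\ddot\mu_Q(u)|/\dot\mu_Q(u)=\selfconc_Q(u)$ for all $u$ in the common domain, with the convention $\selfconc\equiv 0$ in the degenerate variance-zero case handled by \cref{lem:mu_positive}. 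Alternatively, and even more cleanly, one can observe directly from \cref{prop:moments_of_NEF} that $\dot\mu$ and $\ddot\mu$ are the second and third \emph{central} moments of the tilted law, and central moments are invariant under a deterministic shift — this bypasses differentiation entirely.

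There is no real obstacle here; the only mild care needed is bookkeeping with the change of variables in the tilted measure and a sanity check on the sign in the \MGF identity. The lemma is purely a normalization statement, and its role downstream is to let the proof of \cref{thm:self-concordance-ineq-single} assume without loss of generality that $\mu_Q(0)=0$.
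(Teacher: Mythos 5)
Your proposal is correct and follows essentially the same route as the paper: a direct computation of $M_{Q^{+c}}(u)=e^{uc}M_Q(u)$ (the paper's own proof derives this same sign, confirming that the $e^{-uc}$ in the lemma statement is a typo), followed by the observation that the additive term $uc$ in the \CGF (equivalently, the additive constant $c$ in $\mu$) disappears under the second and third differentiation, so $\selfconc_{Q^{+c}}=\selfconc_Q$. Your detour through the identity $Q^{+c}_u=(Q_u)^{+c}$ and the shift-invariance of central moments is a harmless, slightly longer variant of the paper's one-line \CGF argument.
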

\begin{proof}
By definition $M_Q(u) = \EE{e^{uY}}$ and
$M_{Q^{+c}}(u) = \EE{ e^{u(Y+c)} }$. 
Hence,
\begin{align*}
M_{Q^{+c}}(u) = \EE{ e^{u(Y+c)} } = e^{uc} \EE{e^{uY}} = e^{uc} M_Q(u)\,.
\end{align*}
This shows that $\cU_Q = \cU_{Q^{+c}}$ and that the desired relationship between $M_Q$ and $M_{Q^{+c}}$ hold.
Now, from the definition that the CGF is the logarithm of the MGF,
it follows that $\psi_{Q^{+c}}(u)=uc + \psi_Q(u)$.
Hence, $\ddot \psi_{Q^{+c}}=\ddot \psi_{Q}$ and $\dot \psi_{Q^{+c}}=\dot \psi_{Q}$, which implies that $\selfconc_Q = \selfconc_{Q^{+c}}$.
\end{proof}
Thanks to this result, from  a bound on the self-concordance function of centered distributions, we can deduce a bound on the self-concordance function of the non-centered ones.

We thus first work on establishing the bound when $Q$ is centered.

Since the theorem statement holds trivially when the variance $\var(Q)$ of $Q$ is zero, we will also assume with no loss of generality in some of the results below that $Q$ has positive variance.
\begin{lemma}\label{lem:lb_probability_base_measure}
Take a distribution $Q$ with zero mean and positive variance. 
Then, there exist $\eta>0$ and $0<a\le b$ distribution dependent constants such that $Q([-b,-a])\ge \eta$.
\end{lemma}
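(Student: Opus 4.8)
The plan is to prove the contrapositive in spirit: if no such interval of positive mass existed to the left of the origin, then $Q$ would have to be supported on the nonnegative half-line, and combined with zero mean this forces $Q$ to be a Dirac at $0$, contradicting positive variance. First I would observe that since $Q$ has zero mean and positive variance, $Q$ cannot be supported on $\{0\}$, and in fact $Q((-\infty,0))>0$: indeed, if $Q([0,\infty))=1$ then a nonnegative random variable with zero mean is almost surely $0$, giving zero variance. So there is genuine mass strictly to the left of $0$.

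Next I would convert ``mass strictly left of $0$'' into ``mass on a closed interval bounded away from $0$.'' Since $Q((-\infty,0)) > 0$ and $(-\infty,0) = \bigcup_{n\ge 1} [-n, -1/n]$ is an increasing union of closed intervals, continuity of measure from below gives $\lim_{n\to\infty} Q([-n,-1/n]) = Q((-\infty,0)) > 0$. Hence there exists some $n_0$ with $Q([-n_0, -1/n_0]) > 0$. Setting $a = 1/n_0 > 0$, $b = n_0 \ge a$, and $\eta = Q([-b,-a])/2 > 0$ (or just $\eta = Q([-b,-a])$, with the strict inequality in the statement replaced by $\ge$; a factor of $2$ gives strictness cleanly) yields the claimed constants, which depend only on $Q$.

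The only mildly delicate point is the very first step — ruling out that all the mass sits at or to the right of $0$ — but this is immediate: a random variable $Y \ge 0$ with $\E[Y] = 0$ satisfies $\E[Y] = 0$ only if $Y = 0$ almost surely, since $\E[Y] \ge t\,\Prob{Y \ge t}$ for every $t > 0$ forces $\Prob{Y \ge t} = 0$ for all $t>0$, i.e. $Y=0$ a.s., contradicting $\var(Q)>0$. Everything else is routine measure-theoretic bookkeeping (continuity from below of the measure), so I do not anticipate a real obstacle here; the lemma is essentially a normalization convenience that isolates a ``witness'' interval in the left tail for use in the subsequent lower bound on $\dot\mu$ (cf. \cref{eq:LB-denominatorsketch} and \cref{lem:LBdenominator}).

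Remark: the same argument applied to the right tail, or after reflection $Y \mapsto -Y$, would give a symmetric witness interval $[a',b'] \subset \R_{>0}$ with $Q([a',b'])\ge\eta'$; only the left-tail version is needed here because the proof of \cref{thm:self-concordance-ineq-single} lower-bounds the variance of the tilted distribution $Q_u$ for $u \ge 0$, where tilting by $e^{uy}$ depresses the left tail and it is precisely the surviving left-tail mass that must be controlled from below.
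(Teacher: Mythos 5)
Your proposal is correct and follows essentially the same route as the paper: both first use that a nonnegative random variable with zero mean is almost surely zero (so positive variance forces $Q((-\infty,0))>0$), and then use continuity of measure to localize this mass to a compact interval $[-b,-a]$ bounded away from the origin. The only cosmetic difference is that you extract the interval in one step via the increasing union $(-\infty,0)=\bigcup_n[-n,-1/n]$, whereas the paper first fixes $a$ with $Q((-\infty,-a])>\alpha$ and then trims the far tail to choose $b$.
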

\begin{proof}
Since $\dot\mu(0)=\var(Q)>0$,  there exists $a>0$ and $\alpha>0$ such that $Q((-\infty, -a]))> \alpha$. As $\lim_{x\rightarrow 0}Q((-\infty, -x]))=0$, we can find some $b>0$ s.t. 
\[
Q\left(\left(-\infty,-b\right]\right) \le \frac{\alpha}{2}.
\]

This implies:
\[
   Q\left(\left[-b,-a\right]\right) \ge \frac{\alpha}{2}.
\]
The lemma thus holds with $a$ and $b$ described above, and $\eta =\frac{\alpha}{2}$.
\end{proof}
\begin{lemma}\label{lem:LBdenominator}
Take a distribution $Q$ with zero mean and positive variance.  
With $\eta,a,b$ as in \cref{lem:lb_probability_base_measure}, for all $u\in \cU_Q^\circ$, it holds that
    \begin{equation*}
        \dot\mu(u) \ge a^2\eta\frac{e^{-ub}}{M_Q(u)}\,.
    \end{equation*}
\end{lemma}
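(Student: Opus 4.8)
\textbf{Proof plan for \cref{lem:LBdenominator}.}
The idea is to lower bound the variance $\dot\mu(u)=\int(y-\mu(u))^2 Q_u(dy)$ by restricting the integral to the interval $[-b,-a]$ provided by \cref{lem:lb_probability_base_measure}, and then controlling both the integrand and the mass $Q_u([-b,-a])$ from below on that interval. First I would note that since $Q$ has zero mean, $\mu(u)$ is increasing in $u$ (as $\dot\mu\ge 0$) and $\mu(0)=0$, so for $u\ge 0$ we have $\mu(u)\ge 0$; for such $u$, every $y\in[-b,-a]$ satisfies $y-\mu(u)\le -a<0$, hence $(y-\mu(u))^2\ge a^2$. (For $u<0$ one argues symmetrically, or one simply observes that the needed bound will be applied for $u$ in the relevant range; in any case the interval can be chosen on the appropriate side — but since the lemma as stated asserts the bound for all $u\in\cU_Q^\circ$, I would either restrict attention to $u\ge 0$ here, matching how the lemma is used in the proof sketch, or note that for $u<0$ the bound $\dot\mu(u)\ge a^2\eta e^{-ub}/M_Q(u)$ still follows because $e^{-ub}/M_Q(u)$ only grows while the left side stays positive — actually the clean route is to keep $[-b,-a]\subset\R_{<0}$ and handle $u\ge0$, which is all that Step 1 of the proof sketch needs.)

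The key step is rewriting the restricted integral in terms of $Q$ rather than $Q_u$. Using $Q_u(dy)=\frac{1}{M_Q(u)}e^{uy}Q(dy)$, we get
\begin{align*}
\dot\mu(u)\ge \int_{[-b,-a]} (y-\mu(u))^2 Q_u(dy) \ge a^2 \int_{[-b,-a]} \frac{e^{uy}}{M_Q(u)} Q(dy) \ge \frac{a^2}{M_Q(u)} e^{-ub} Q([-b,-a])\,,
\end{align*}
where the last inequality uses that for $y\in[-b,-a]$ and $u\ge 0$, $e^{uy}\ge e^{-ub}$. Finally applying $Q([-b,-a])\ge \eta$ from \cref{lem:lb_probability_base_measure} yields $\dot\mu(u)\ge a^2\eta\, e^{-ub}/M_Q(u)$, as claimed.

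I do not anticipate a genuine obstacle here — this is an elementary truncation argument. The only point requiring a little care is the sign/monotonicity bookkeeping: making sure that on the chosen interval the factor $e^{uy}$ is bounded below by $e^{-ub}$ and that $(y-\mu(u))^2\ge a^2$, both of which need $\mu(u)\ge -a$ (equivalently $\mu(u)$ not too negative), which holds for $u\ge 0$ since $\mu$ is nondecreasing with $\mu(0)=0$. For $u<0$ one would instead pick the bounding interval on the positive side (using the symmetric version of \cref{lem:lb_probability_base_measure}, whose proof only used $\var(Q)>0$), but since \cref{lem:LBdenominator} is invoked in Step~1 only for $u\ge 0$ after the reduction to centered $Q$ with the domain $\cU=[0,c_1)$, restricting to that case suffices for the downstream argument.
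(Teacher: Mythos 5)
Your proposal is correct and follows essentially the same route as the paper's proof: restrict the variance integral to $[-b,-a]$, lower bound $(y-\mu(u))^2$ by $a^2$ using $\mu(u)\ge\mu(0)=0$ and $y\le -a<0$, lower bound $e^{uy}$ by $e^{-ub}$, and finish with $Q([-b,-a])\ge\eta$. The sign caveat you flag is real but shared with the paper, whose own proof also only justifies the inequality for $u\ge 0$ (it explicitly invokes $u\ge 0$ in the third step), which is all that is needed downstream.
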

\begin{proof}
    For $a,b$ described in \cref{lem:lb_probability_base_measure}, we have:
    \begin{align*}
        \int_{\mathbb R}(y-\mu(u))^2Q_u(dy) &\ge \int_{-b}^{-a} (y-\mu(u))^2 Q_u(dy),\\
        &\ge \int_{-b}^{-a} (y-\mu(0))^2\frac{\exp(uy)}{M_Q(u)}Q(dy),\\
        &\ge a^2\frac{\exp(-ub)}{M_Q(u)}\int_{-b}^{-a} Q(dy),\\
        &\ge \frac{a^2e^{-ub}}{M_Q(u)}\eta.
    \end{align*}
The first inequality holds as $(y-\mu(u))^2$ is non-negative; the second as $\mu(u)$ increases with $u$; the third as $-b\le -a<\mu(0)$, $\mu(0)=0$ and $u\ge0$; and the last one by \cref{lem:lb_probability_base_measure}.
\end{proof}

For the upper bound, we present lemmas that bound the (upper and lower) tails of $Q_u$ and the mean $\mu(u)$.
\begin{lemma}\label{lem:UBuppertailQuexp}
Take $Q\in \ER(c_1,C_1)$ a centered distribution. Then, for all $0\le u < c_1$ and $t\ge 0$, we have
    \[
    Q_u\left((t,+\infty)\right)\le \frac{C_1e}{M_Q(u)}e^{-(c_1-u)t}\left(1+\frac{u}{c_1-u}\right)\,.
    \]
\end{lemma}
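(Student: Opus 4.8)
The plan is to bound the right tail of the tilted distribution $Q_u$ by comparing it against the base distribution's tail and then summing a geometric-type series against the tilting factor. First I would use the definition $Q_u(dy) = \frac{1}{M_Q(u)}\exp(uy)Q(dy)$, so that for $t\ge 0$,
\begin{align*}
Q_u((t,\infty)) = \frac{1}{M_Q(u)}\int_t^\infty e^{uy}\, Q(dy)\,.
\end{align*}
The idea is to integrate by parts, or more elementarily to slice $[t,\infty)$ into unit-length (or $1/(c_1-u)$-length) intervals $[t+k,t+k+1)$, on each of which $e^{uy}\le e^{u(t+k+1)}$ and $Q([t+k,\infty))\le C_1 e^{-c_1(t+k)}$ by the hypothesis $Q\in\ER(c_1,C_1)$ (note $Q$ is centered, so the tail bound \cref{eq:righttail} applies directly to $Q$ itself). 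This yields
\begin{align*}
\int_t^\infty e^{uy}\, Q(dy) \le \sum_{k\ge 0} e^{u(t+k+1)} C_1 e^{-c_1(t+k)} = C_1 e^{u} e^{-(c_1-u)t}\sum_{k\ge 0} e^{-(c_1-u)k} = \frac{C_1 e^u}{1-e^{-(c_1-u)}}\, e^{-(c_1-u)t}\,.
\end{align*}

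Next I would simplify the prefactor. Since $0\le u<c_1$, we have $e^u\le e^{c_1}$, which is not quite the form stated; instead the cleaner route is to exploit $1-e^{-x}\ge x/(1+x)$ — wait, more useful here is $1 - e^{-x} \ge \frac{x}{1+x}$ for $x\ge 0$ — applied with $x = c_1-u$, giving $\frac{1}{1-e^{-(c_1-u)}} \le \frac{1+(c_1-u)}{c_1-u} = 1 + \frac{1}{c_1-u}$. Hmm, but the target has $1+\frac{u}{c_1-u}$ and a factor $e$, not $e^u$. So the slicing should instead be done with step size scaled so the geometric ratio is controlled: slice into intervals of length $1/u$ (when $u>0$), or better, keep step $1$ but bound $e^u \le e$ only after noting $u<c_1$ and folding a $c_1$... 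Actually the slickest argument: change variables / integrate by parts to get $\int_t^\infty e^{uy}Q(dy) = e^{ut}Q((t,\infty)) + u\int_t^\infty e^{uy}Q((y,\infty))\,dy \le C_1 e^{ut}e^{-c_1 t} + u C_1\int_t^\infty e^{uy}e^{-c_1 y}\,dy = C_1 e^{-(c_1-u)t}\big(1 + \frac{u}{c_1-u}\big)$, and then the stray factor $e$ in the statement comes from a slightly lossy earlier step or is simply a safe over-estimate. Dividing through by $M_Q(u)$ gives exactly the claimed bound
\begin{align*}
Q_u((t,\infty)) \le \frac{C_1 e}{M_Q(u)} e^{-(c_1-u)t}\left(1 + \frac{u}{c_1-u}\right)\,.
\end{align*}

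I expect the main obstacle to be purely bookkeeping: getting the constant exactly as stated (the factor $e$ and the precise form $1+\frac{u}{c_1-u}$) rather than some equivalent $1+\frac{1}{c_1-u}$ type expression. The integration-by-parts identity $\int_t^\infty e^{uy}Q(dy) = e^{ut}\bar{F}(t) + u\int_t^\infty e^{uy}\bar{F}(y)\,dy$, where $\bar F(y)=Q((y,\infty))$, is the clean engine; it avoids the geometric-series slack entirely and delivers the $\frac{u}{c_1-u}$ term naturally. The factor $e$ is presumably inserted for uniformity with a companion lemma or to absorb an edge case (e.g.\ the possibility that the tail bound \cref{eq:righttail} is stated for $Q$ after recentering with a nonzero shift handled via \cref{lem:shift_mean}); I would verify whether it is genuinely needed or a deliberate over-bound, and state the lemma with whichever constant the subsequent proof of \cref{thm:self-concordance-ineq-single} actually consumes.
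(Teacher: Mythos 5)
Your final argument is correct, but it is a genuinely different route from the paper's. The paper discretizes $[t,\infty)$ into slices $[t+kv,t+(k+1)v)$ of width $v$, bounds $e^{uy}\le e^{u(t+kv+v)}$ on each slice, applies the tail bound slice by slice, and then chooses $v=1/u$; the factor $e$ is exactly $e^{uv}$ under this choice, and the term $1+\tfrac{u}{c_1-u}$ comes from bounding the resulting geometric sum via $\tfrac{e^x}{e^x-1}\le 1+\tfrac1x$ with $x=(c_1-u)/u$. Your first attempt (unit-width slices) is the dead end you correctly diagnosed, but your pivot to the layer-cake identity
\begin{align*}
\int_{(t,\infty)} e^{uy}\,Q(dy) \;=\; e^{ut}\,Q((t,\infty)) \;+\; u\int_t^\infty e^{us}\,Q((s,\infty))\,ds
\end{align*}
(which follows exactly from Fubini applied to $e^{uy}=e^{ut}+\int_t^y ue^{us}\,ds$, with no boundary issue since $u<c_1$ makes $e^{uy}Q((y,\infty))\to0$) then yields $C_1e^{-(c_1-u)t}\bigl(1+\tfrac{u}{c_1-u}\bigr)$ with \emph{no} factor of $e$. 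So your bound is strictly sharper than the lemma as stated, and the claimed inequality follows since $e>1$; your suspicion that the $e$ is slack is right --- in the paper it is purely an artifact of the discretization. Your approach also handles $u=0$ uniformly (the second term vanishes), whereas the paper treats that case separately. The only things to tighten in a write-up are to actually prove the Fubini identity rather than gesture at ``integration by parts,'' and to note that $Q((s,\infty))\le Q([s,\infty))\le C_1e^{-c_1 s}$ since $Q$ is centered so \cref{eq:righttail} applies to $Q$ directly.
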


\begin{proof}\label{eq:UBuppertailinter}
    The inequality is trivially satisfied when $u=0$. Indeed, in this case $Q= Q_0$, $M_Q(0)=1$, which together
    with $Q\in \ER(c_1,C_1)$ implies the inequality.
    
    Let us now assume that  $0<u<c_1$.
    Let $v>0$ be a constant to be chosen later. Then we have that
    \begin{align}
        M_Q(u)Q_u((t,\infty))&=\int_{t}^\infty e^{uy}Q(dy)\notag\\
        &=\sum_{k= 0}^\infty\int_{t+kv}^{t+(k+1)v} e^{uy}Q(dy)\notag\\
         &\le \sum_{k= 0}^\infty e^{u(t+kv+v)}\int_{t+kv}^{+\infty} Q(dy)\notag\\
        &\le \sum_{k= 0}^\infty e^{u(t+kv+v)}C_1e^{-c_1(t+kv)}\notag\\
        &=C_1e^{-(c_1-u)t} e^{uv}\sum_{k= 0}^\infty e^{-(c_1-u)vk}.\notag
    \end{align}
    We choose $v=1/u>0$. Then $M_Q(u)Q_u((t,\infty))$ can be upper bounded by
    \begin{align}  
        M_Q(u)Q_u((t,\infty))&\le C_1e\cdot e^{-(c_1-u)t}\sum_{k= 0}^\infty e^{-\frac{c_1-u}{u}k}\notag\\
        &= C_1e\cdot e^{-(c_1-u)t}\frac{1}{1-e^{-\frac{c_1-u}{u}}}\notag\\
        &\le C_1e\cdot e^{-(c_1-u)t}\left(1+\frac{u}{c_1-u}\right)\,\notag,
    \end{align}
    where in the last inequality, we used the fact that for all $x>0$, $\frac{e^x}{e^x-1}\le 1+\frac{1}{x}$.
\end{proof}
\begin{remark}
    For $Q$ a centered exponential distribution $\mathrm{Exp}(c)$ with rate parameter $c$, the moment generating function is $M_Q(u)=e^{-\frac{u}{c}}\frac{c}{c-u}$. On the other hand,  $Q_u(t,\infty)=e^{-\frac{u}{c}}e^{-(c-u)t}$. So \cref{lem:UBuppertailQuexp} is order tight in its dependency on $c-u$. 
\end{remark}

\begin{lemma}\label{lem:UBlowertailQuexp}
Take $Q\in \EL(c_2,C_2)$ a centered distribution. Then,  for all $u,t\ge 0$, we have
    \begin{equation*}
        Q_u\left((-\infty,-t)\right)\leq \frac{1}{M_Q(u)}C_2e^{-(u+c_2)t}\,.
    \end{equation*}
\end{lemma}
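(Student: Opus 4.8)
\textbf{Proof plan for \cref{lem:UBlowertailQuexp}.} The statement to prove is that for a centered $Q\in \EL(c_2,C_2)$, for all $u,t\ge 0$,
\[
Q_u\left((-\infty,-t)\right)\le \frac{1}{M_Q(u)}C_2 e^{-(u+c_2)t}\,.
\]
The plan is to exploit the fact that on the event $\{Y\le -t\}$ (with $t\ge 0$), the tilting weight $e^{uy}$ is bounded by $e^{-ut}\le 1$ because $y\le -t\le 0$ and $u\ge 0$. Thus the exponential tilt only \emph{shrinks} the left tail, which is the key difference from the right-tail case in \cref{lem:UBuppertailQuexp} where the tilt enlarges the tail and one needs the geometric-slicing argument.

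Concretely, I would write
\[
M_Q(u) Q_u\bigl((-\infty,-t)\bigr) = \int_{(-\infty,-t)} e^{uy}\, Q(dy) \le e^{-ut}\int_{(-\infty,-t)} Q(dy) = e^{-ut}\, Q\bigl((-\infty,-t)\bigr)\,,
\]
where the inequality uses $e^{uy}\le e^{-ut}$ for $y<-t$ and $u\ge 0$. Then apply the left-subexponential assumption: since $Q\in \EL(c_2,C_2)$ and $Q$ is centered (so $Y = X - \EE{X} = X$), we have $Q((-\infty,-t)) \le Q((-\infty,-t]) \le C_2 e^{-c_2 t}$ for $t\ge 0$ by definition of $\EL$ (via \cref{eq:righttail} applied to $-Y$). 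Combining, $M_Q(u) Q_u((-\infty,-t)) \le C_2 e^{-(u+c_2)t}$, and dividing by $M_Q(u)>0$ gives the claim. One should note $M_Q(u)<\infty$ for $u\in [0,c_1)$ by \cref{prop:subexpeq}, but actually the bound also holds for any $u\ge 0$ at which $M_Q(u)=\infty$ trivially (then $Q_u$ is undefined, so we restrict to $u$ in the natural parameter space, which is fine since the lemma is applied with $u\in\cU\subseteq[0,c_1)$).

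There is essentially no obstacle here — this is the ``easy side'' of the tail bound. The only mild subtlety is making sure the strict versus non-strict inequalities in the interval endpoints and in the tail assumption line up (the assumption \cref{eq:righttail} is stated for $\Prob{Y\ge t}$ on closed half-lines, and here we need $\Prob{Y\le -t} = \Prob{-Y\ge t}$, which is covered by the definition of $\EL$), and handling $t=0$ (where both the probability is $\le 1$ and the right-hand side is $C_2\ge$ something, but the bound $M_Q(u)Q_u((-\infty,0))\le Q((-\infty,0))\le C_2$ still follows directly). I would present the two-line display above, cite the definition of $\EL(c_2,C_2)$ for the tail bound on $Q$, and conclude.
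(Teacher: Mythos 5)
Your proposal is correct and is essentially identical to the paper's proof: bound $e^{uy}\le e^{-ut}$ on $(-\infty,-t)$ for $u\ge 0$, then apply the left-tail subexponential bound $Q((-\infty,-t))\le C_2 e^{-c_2 t}$. The paper merely splits off the trivial $u=0$ case for presentation; no substantive difference.
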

\begin{proof}
We again separate the $u=0$ case. When $u=0$, $Q=Q_0$, $M_Q(0)=1$ and the inequality is equivalent to $Q\in \EL(c_2,C_2)$.

Consider now  $u>0$. Then,
    \begin{align}
   M_Q(u) Q_u\left((-\infty,-t)\right)
   =& \int_{-\infty}^{-t}e^{uy}Q(dy),\notag \\
   \leq &e^{-ut}\int_{-\infty}^{-t}Q(dy),\notag \\
   \leq &C_2e^{-ut-c_2t}.\notag 
\end{align}
\end{proof}

\begin{lemma}\label{lem:UBmuuexp}
Take  $Q\in \ER(c_1,C_1)$  with zero mean and positive variance.
     Define $c_0=c_1\cdot C_1\cdot e$. Then, for all $0\le u < c_1$, it holds that 
    \[
    0=\mu(0)\leq \mu(u)\leq c_0\left(\frac{1}{c_1-u}\right)^2\,.
    \]
\end{lemma}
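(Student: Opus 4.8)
The plan is to bound $\mu(u) = \int y\, Q_u(dy)$ from above using the subexponential right-tail bound on $Q_u$ that was just established in \cref{lem:UBuppertailQuexp}, together with the fact that $\mu(0) = 0$ and that $\mu$ is nondecreasing (since $\dot\mu$ is a variance, hence nonnegative). The lower bound $0 = \mu(0) \le \mu(u)$ for $0 \le u < c_1$ is then immediate from monotonicity, so the work is entirely in the upper bound.

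For the upper bound, first I would write $\mu(u) = \int y\, Q_u(dy)$ and split the integral at $0$. The negative part $\int_{-\infty}^{0} y\, Q_u(dy)$ is nonpositive, so it can simply be dropped: $\mu(u) \le \int_{0}^{\infty} y\, Q_u(dy)$. Then I would use the layer-cake (tail integral) representation $\int_0^\infty y\, Q_u(dy) = \int_0^\infty Q_u\big((t,\infty)\big)\, dt$ — more precisely $\int_0^\infty \mathbb{P}(Y > t)\, dt$ for $Y \sim Q_u$, which equals $\mathbb{E}[Y_+]$ and dominates $\int_0^\infty y\, Q_u(dy)$ minus the mass $Q_u(\{0\})$ contribution, but cleanly $\mathbb{E}[Y \mathbb{I}\{Y>0\}] \le \mathbb{E}[Y_+] = \int_0^\infty Q_u((t,\infty))\,dt$. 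Now plug in \cref{lem:UBuppertailQuexp}: for $0 < u < c_1$,
\[
Q_u\big((t,\infty)\big) \le \frac{C_1 e}{M_Q(u)} e^{-(c_1-u)t}\left(1 + \frac{u}{c_1-u}\right) = \frac{C_1 e}{M_Q(u)}\cdot \frac{c_1}{c_1-u}\, e^{-(c_1-u)t}.
\]
Integrating over $t \in [0,\infty)$ gives $\int_0^\infty e^{-(c_1-u)t}\,dt = \frac{1}{c_1-u}$, so
\[
\mu(u) \le \frac{C_1 e}{M_Q(u)} \cdot \frac{c_1}{c_1-u} \cdot \frac{1}{c_1-u} = \frac{c_1 C_1 e}{M_Q(u)} \left(\frac{1}{c_1-u}\right)^2.
\]
Finally, since $Q$ is centered, Jensen's inequality gives $M_Q(u) = \mathbb{E}[e^{uY}] \ge e^{u\,\mathbb{E}[Y]} = e^0 = 1$, so $\frac{1}{M_Q(u)} \le 1$, yielding $\mu(u) \le c_0 \left(\frac{1}{c_1-u}\right)^2$ with $c_0 = c_1 C_1 e$, as claimed. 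The $u = 0$ case is handled separately and trivially: $\mu(0) = 0 \le c_0/c_1^2$.

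I do not anticipate a serious obstacle here; the only minor care points are (i) justifying that the negative tail contributes nonpositively and can be discarded without affecting the inequality direction, (ii) getting the tail-integral identity right (working with $\mathbb{E}[Y_+]$ rather than $\mathbb{E}[Y]$ to stay clean, and noting $\mathbb{E}[Y\mathbb{I}\{Y>0\}] \le \mathbb{E}[Y_+]$), and (iii) the bound $M_Q(u) \ge 1$ which needs $Q$ centered — this is exactly the standing assumption in this block of lemmas, consistent with \cref{lem:shift_mean} reducing everything to the centered case. If one prefers to avoid the layer-cake step, an alternative is to integrate by parts directly, but the tail-integral route is cleanest and mirrors the moment computation already used in the proof of \cref{prop:subexpeq}.
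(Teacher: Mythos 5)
Your proposal is correct and follows essentially the same route as the paper's proof: discard the nonpositive negative part of the mean, use the tail-integral representation together with \cref{lem:UBuppertailQuexp} (noting $1+\tfrac{u}{c_1-u}=\tfrac{c_1}{c_1-u}$), integrate the exponential to pick up the second factor of $\tfrac{1}{c_1-u}$, and finish with Jensen's inequality $M_Q(u)\ge 1$. The only cosmetic differences are that you make the lower bound explicit via monotonicity of $\mu$ and spell out the $\E[Y_+]$ bookkeeping, both of which the paper leaves implicit.
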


\begin{proof}
For $u=0$, $\mu(0)=0$ which satisfies the inequality. We now consider $0<u<c_1$. We have
    \begin{align}
    \mu(u) =& \int_{0}^{+\infty}Q_u\left((y,+\infty)\right)dy-\int_{-\infty}^{0}Q_u\left((-\infty,-y)\right)dy\notag\\
    \leq&\int_{0}^{+\infty}Q_u\left((y,+\infty)\right)dy.\notag
\end{align}
By Lemma \ref{lem:UBuppertailQuexp}, this implies:
\begin{align*}
    \mu(u)& \le \frac{C_1e}{M_Q(u)}\int_{0}^\infty e^{-(c_1-u)t}\left(\frac{c_1}{c_1-u}\right)dt\\
    &=\frac{C_1e}{M_Q(u)}\left(\frac{c_1}{c_1-u}\right)\frac{1}{c_1-u}\\
    &= \frac{c_1C_1e}{M_Q(u)}\left(\frac{1}{c_1-u}\right)^2.
\end{align*}
By Jensen's inequality, $M_Q(u)=\E_Q[e^{uY}]\geq e^{u\E_Q[Y]}=1$, finishing the proof.
\end{proof}

\begin{proof}[Proof of \cref{thm:self-concordance-ineq-single}]
As noted beforehand, since the statement holds trivially when the variance of $Q$ is zero, we assume it is positive. By our discussion beforehand, we also assume first that $Q$ is centered, so $\mu(0)=0$.

Let $Y\sim Q_u$.
    Take $B>0$ a constant to be optimized later. We have 
\begin{align*}
    \int \left|y-\mu(u)\right|^3 \,Q_u(dy)=& \int_{0}^{+\infty} 3t^2\,\PP\left(|Y-\mu(u)|\ge t\right)dt,\\
    =& \underbrace{\int_{0}^{B} 3t^2\,\PP\left(|Y-\mu(u)|\ge t\right)dt}_{(i)}+\underbrace{\int_{B}^{+\infty} 3t^2\,\PP\left(|Y-\mu(u)|\ge t\right)dt}_{(ii)}.\\
\end{align*}
The following bound holds:
\begin{align}
    (i)\leq &3B\int_{0}^{B} t\,\PP\left(|Y-\mu(u)|\ge t\right)dt,\notag\\
    \leq & \frac{3B}{2} \dot\mu(u).\label{eq:boundi}
\end{align}
We also have:
\begin{align*}
    (ii)\leq &\underbrace{\int_{B}^{+\infty} 3t^2\,\PP\left(Y-\mu(u)\ge t\right)dt}_{(ii,a)}+\underbrace{\int_{B}^{+\infty} 3t^2\,\PP\left(Y\le -\left(t-\mu(u)\right) \right)dt}_{(ii,b)}.
\end{align*}
Set $B=2c_0\left(\frac{1}{c_1-u}\right)^2+\frac{ub}{c_1-u}+\frac{ub}{u+c_2}$ and $B'=2c_0\left(\frac{1}{c_1-u}\right)^2$. Then we can upper bound $(ii,a)$ using \cref{lem:UBuppertailQuexp}.

\begin{align*}
    (ii,a)&\le \frac{c_0}{M_Q(u)}\left(\frac{1}{c_1-u}\right)\int_B^\infty 3t^2e^{-(c_1-u)(t+\mu(u))}dt\\
    &\le \frac{c_0}{M_Q(u)}\left(\frac{1}{c_1-u}\right)\int_B^\infty 3t^2e^{-(c_1-u)t}dt\\
    &\le \frac{c_0e^{-ub}}{M_Q(u)}\left(\frac{1}{c_1-u}\right)\int_B^\infty 3t^2e^{-(c_1-u)(t-\frac{ub}{c_1-u})}dt\\
    &=\frac{c_0e^{-ub}}{M_Q(u)}\left(\frac{1}{c_1-u}\right)\int_{B-\frac{ub}{c_1-u}}^\infty 3\left(t+\frac{ub}{c_1-u}\right)^2e^{-(c_1-u)t}dt\\
    &\le \frac{6c_0e^{-ub}}{M_Q(u)}\left(\frac{1}{c_1-u}\right)\int_{B'}^\infty \left[t^2+\left(\frac{ub}{c_1-u}\right)^2\right]e^{-(c_1-u)t}dt\\
    &= \frac{6c_0e^{-ub}}{M_Q(u)}\left(\underbrace{\left(\frac{1}{c_1-u}\right)\int_{B'}^\infty t^2e^{-(c_1-u)t}dt}_{(iii,a)} + \underbrace{\left(\frac{1}{c_1-u}\right)\int_{B'}^\infty \left(\frac{ub}{c_1-u}\right)^2e^{-(c_1-u)t}dt}_{(iii,b)} \right)
\end{align*}
where in the third inequality we used the fact that $(a+b)^2\le 2a^2+2b^2$ for all $a,b\in \RR$ and that $B'<B-\frac{ub}{c_1-u}$.
We now bound $(iii,a)$. We have that $B'(c_1-u)=2c_0\left(\frac{1}{c_1-u}\right)$ and
\begin{align*}
    (iii,a)&=\frac{1}{(c_1-u)^4}e^{-B'(c_1-u)}\left(\left[B'(c_1-u)+1\right]^2+1\right)\le\frac{2}{(c_1-u)^4}e^{-B'(c_1-u)}(B'(c_1-u)+1)^2\\
    &\le \frac{4}{(c_1-u)^4}e^{-B'(c_1-u)}\left(\left[B'(c_1-u)\right]^2+1\right)\\
    &=\frac{4}{(c_1-u)^4}e^{-2c_0\left(\frac{1}{c_1-u}\right)}\left(4c_0^2\left(\frac{1}{c_1-u}\right)^2+1\right)\\
    &=\frac{16c_0^2}{(c_1-u)^6}e^{-2c_0\left(\frac{1}{c_1-u}\right)} + \frac{4}{(c_1-u)^4}e^{-2c_0\left(\frac{1}{c_1-u}\right)}\\
    &\le 16\cdot c_0^2\cdot \frac{1}{c_0^6}\left(c_0\left(\frac{1}{c_1-u}\right)\right)^6\cdot e^{-2c_0\left(\frac{1}{c_1-u}\right)} + 4\cdot \frac{1}{c_0^4}\left(c_0\left(\frac{1}{c_1-u}\right)\right)^4\cdot e^{-2c_0\left(\frac{1}{c_1-u}\right)}\\
    &\le \frac{32}{c_0^4} + \frac{2}{c_0^4}\tag{$x^6e^{-2x}\le 2$ and $x^4e^{-2x}\le 0.5$ for all $x\ge 0$.}
\end{align*}

Similarly, for $(iii,b)$, we have that
\begin{align*}
    (iii,b)&\le \frac{2}{c_1-u}\int_{B'}^\infty \left(\frac{u^2b^2}{(c_1-u)^2}\right)e^{-(c_1-u)t}dt\\
    &= \frac{2u^2b^2}{(c_1-u)^3}\frac{e^{-B'(c_1-u)}}{c_1-u}\\
    &\le \frac{2u^2b^2}{(c_1-u)^4}e^{-2c_0\left(\frac{1}{c_1-u}\right)}\\
    &\le  \frac{2u^2b^2}{c_0^4}\left(\frac{c_0}{c_1-u}\right)^4e^{-2c_0\left(\frac{1}{c_1-u}\right)}\\
    &\le \frac{u^2b^2}{c_0^4}\tag{$x^4e^{-2x}\le 0.5$ for all $x\ge 0$.}
\end{align*}
Putting the result together, $(ii,a)$ can be upper bounded as 
\begin{align*}
    (ii,a)&\le \frac{6c_0e^{-ub}}{M_Q(u)}\left(\frac{32}{c_0^4}+\frac{2}{c_0^4}+\frac{u^2b^2}{c_0^4}\right)\\
    &\le \frac{e^{-ub}}{M_Q(u)}\left(\frac{204+6u^2b^2}{c_0^3}\right).
\end{align*}
By Lemma \ref{lem:UBmuuexp}, $B\geq\mu(u)+\frac{ub}{c_2+u}$. Hence 
By Lemma \ref{lem:UBlowertailQuexp} we have:\begin{align*}
    (ii,b)\leq& \frac{C_2}{M_Q(u)}\int_{B}^{+\infty} 3t^2C_2e^{-(u+c_2)(t-\mu(u))}dt,\\
    \leq& \frac{C_2e^{-ub}}{M_Q(u)}\int_{B}^{+\infty} 3t^2C_2e^{-(u+c_2)(t-\mu(u)-\frac{ub}{u+c_2})}dt,\\
    \leq & \frac{9C_2e^{-ub}}{M_Q(u)}\int_{B-\mu(u)-\frac{ub}{u+c_2}}^{+\infty} \left(t^2+\mu(u)^2+\left(\frac{ub}{u+c_2}\right)^2\right)e^{-(u+c_2)t}dt.
\end{align*}
We now focus on 
\[\underbrace{\int_{B-\mu(u)-\frac{ub}{u+c_2}}^{+\infty} t^2e^{-(u+c_2)t}dt}_{(iv,a)}+\underbrace{\int_{B-\mu(u)-\frac{ub}{u+c_2}}^{+\infty}\mu(u)^2e^{-(u+c_2)t}dt}_{(iv,b)}+\underbrace{\int_{B-\mu(u)-\frac{ub}{u+c_2}}^\infty \left(\frac{ub}{u+c_2}\right)^2e^{-(u+c_2)t}dt}_{(iv,c)}.\]
By definition of $B$, we have that $B-\mu(u)-\frac{ub}{u+c_2}\ge c_0\left(\frac{1}{c_1-u}\right)^2=:B''$. We then have that
\begin{align*}
    (iv,a) &\le \int_{B''}^{+\infty} t^2e^{-(u+c_2)t}dt\\
    &\le \frac{e^{-B''(u+c_2)}}{(u+c_2)^3}((B''(u+c_2)+1)^2+1)\\
    &\le \frac{2e^{-B''(u+c_2)}}{(u+c_2)^3}(B''(u+c_2)+1)^2\\
    &\le \frac{4e^{-B''(u+c_2)}}{(u+c_2)^3}([B''(u+c_2)]^2+1)\\
    &\le \frac{4}{(u+c_2)^3}\left(e^{-B''(u+c_2)}[B''(u+c_2)]^2 + e^{-B''(u+c_2)}\right)\\
    &\le \frac{8}{(u+c_2)^3}
\end{align*} 
For $(iv,b)$, note that $B''\ge \mu(u)$ by \cref{lem:UBmuuexp} and we have that
\begin{align*}
    (iv,b)&\le \mu(u)^2\int_{B''}^\infty e^{-(u+c_2)t}dt\\
    &\le \frac{{B''}^2}{(u+c_2)}e^{-B''(u+c_2)}\\
    &\le \frac{1}{(u+c_2)^3}(B''(u+c_2))^2e^{-B''(u+c_2)}\\
   &\le \frac{1}{(u+c_2)^3}.
\end{align*}
For $(iv,c)$, 
\begin{align*}
    (iv,c)&\le \frac{u^2b^2}{(u+c_2)^2}\int_{B''}^\infty e^{-(u+c_2)t}dt\\
    &\le \frac{u^2b^2}{(u+c_2)^2} \frac{1}{u+c_2}e^{-B''(u+c_2)}\\
    &\le \frac{u^2b^2}{(u+c_2)^3}.
\end{align*}
Putting bounds on $(iv,a)$, $(iv,b)$ and $(iv,c)$ together, we have that
\begin{equation*}
    (ii,b)\le \frac{9C_2e^{-ub}}{M_Q(u)}\frac{9+u^2b^2}{(u+c_2)^3}\le \frac{e^{-ub}}{M_Q(u)}\frac{81C_2+9C_2u^2b^2}{(u+c_2)^3}
\end{equation*}
Combining the bounds on $(ii,a)$ and $(ii,b)$ with Lemma \ref{lem:LBdenominator} we get:
\begin{align*}
    \frac{(ii)}{\dot\mu(u)}&\leq \frac{1}{a^2\eta}\left(\frac{204+6u^2b^2}{c_0^3}+\frac{81C_2+9C_2u^2b^2}{(u+c_2)^3}\right).\\
\end{align*}
Chaining the result with the bound on $(i)$ together as well as the choice that $B=2c_0\left(\frac{1}{c_1-u}\right)^2+\frac{ub}{c_1-u}+\frac{ub}{u+c_2}$, we obtain
\begin{align*}
    \frac{\ddot\mu(u)}{\dot\mu(u)}
    &\le \frac{3}{2}\left[2c_0\left(\frac{1}{c_1-u}\right)^2+\frac{ub}{c_1-u}\right]+\frac{3}{2}b+\frac{1}{a^2\eta}\left(\frac{204+6u^2b^2}{c_0^3}+\frac{81C_2+9C_2u^2b^2}{(u+c_2)^3}\right)\\
    &\le \frac{3}{2}\left[2c_0\left(\frac{1}{c_1-u}\right)^2+\frac{ub}{c_1-u}+\frac{ub}{u+c_2}\right]+\underbrace{\frac{1}{a^2\eta}\left(\frac{204+6c_1^2b^2}{(c_1\cdot C_1\cdot e)^3}+\frac{81C_2+9C_2c_1^2b^2}{c_2^3}\right)}_{=G_Q(C_1,C_2,c_1,c_2)}.\\
\end{align*}

Let us now study the case $\mu_Q(0)\neq 0$. Let $a,b,\eta>0$ be such that $Q([-b+\mu(0),-a+\mu(0)])>\eta$ and $-a<0$. With $Q^{-\mu(0)}$ the centered version of $Q$, this gives $Q^{-\mu(0)}([-b,-a])>\eta$. We have just shown that for all $u\in[0;c_1)$,:
\begin{align*}
 \Gamma_{Q^{-\mu(0)}}(u)
    &\le \frac{3}{2}\left[2c_0\left(\frac{1}{c_1-u}\right)^2+\frac{ub}{c_1-u}+\frac{ub}{u+c_2}\right]+G_Q(C_1,C_2,c_1,c_2).
\end{align*}
By \cref{lem:shift_mean}, we have $ \Gamma_{Q^{-\mu(0)}}= \Gamma_{Q}$, hence:
\begin{align*}
 \Gamma_{Q}(u)
    &\le \frac{3}{2}\left[2c_0\left(\frac{1}{c_1-u}\right)^2+\frac{ub}{c_1-u}+\frac{ub}{u+c_2}\right]+G_Q(C_1,C_2,c_1,c_2).
\end{align*}
\end{proof}

\subsection{Proof of \cref{coro:self-concordance-ineq-double}}



\begin{lemma}\label{lem:symmetric}
    Let $Y\sim Q$, and $Q^-$ let be the distribution of $-Y$. 
    Then $\cU_Q = -\cU_{Q^{-}}$ and for any $u\in \cU_Q^\circ$, we have
    \begin{equation*}
        \selfconc_Q(u)=\selfconc_{Q^-}(-u).
    \end{equation*}
\end{lemma}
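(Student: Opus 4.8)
\textbf{Proof plan for \cref{lem:symmetric}.} The statement concerns how the self-concordance ratio $\selfconc_Q$ transforms under reflection of the base distribution around the origin, and the natural strategy is to track how each object in the definition of $\selfconc_Q$ behaves under $Y \mapsto -Y$. First I would record the basic identity for moment generating functions: if $Y \sim Q$ and $Q^-$ is the law of $-Y$, then $M_{Q^-}(u) = \EE{e^{u(-Y)}} = \EE{e^{(-u)Y}} = M_Q(-u)$, which is finite exactly when $-u \in \cU_Q$. Hence $\cU_{Q^-} = -\cU_Q$ (equivalently $\cU_Q = -\cU_{Q^-}$), and the same relation passes to the interiors. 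Taking logarithms gives $\psi_{Q^-}(u) = \psi_Q(-u)$ for all $u \in \cU_{Q^-}^\circ$.

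Next I would differentiate this identity. By \cref{prop:moments_of_NEF}, $\psi_Q$ is infinitely differentiable on $\cU_Q^\circ$, so the chain rule applies: $\dot\psi_{Q^-}(u) = -\dot\psi_Q(-u)$, $\ddot\psi_{Q^-}(u) = \ddot\psi_Q(-u)$, and $\dddot\psi_{Q^-}(u) = -\dddot\psi_Q(-u)$. Since $\mu_Q = \dot\psi_Q$, $\dot\mu_Q = \ddot\psi_Q$ and $\ddot\mu_Q = \dddot\psi_Q$, this reads $\dot\mu_{Q^-}(u) = \ddot\psi_Q(-u) = \dot\mu_Q(-u)$ and $\ddot\mu_{Q^-}(u) = -\dddot\psi_Q(-u) = -\ddot\mu_Q(-u)$. (Intuitively: the variance of $Q_u^-$ equals the variance of $Q_{-u}$, while the third central moment flips sign under reflection — exactly what one expects.)

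Finally I would assemble the ratio. For $u \in \cU_Q^\circ$, we have $-u \in \cU_{Q^-}^\circ$, and
\begin{align*}
\selfconc_{Q^-}(-u) = \frac{|\ddot\mu_{Q^-}(-u)|}{\dot\mu_{Q^-}(-u)} = \frac{|-\ddot\mu_Q(u)|}{\dot\mu_Q(u)} = \frac{|\ddot\mu_Q(u)|}{\dot\mu_Q(u)} = \selfconc_Q(u)\,,
\end{align*}
which is the claim (after relabelling, $\selfconc_Q(u) = \selfconc_{Q^-}(-u)$). One should also dispatch the degenerate case: if $\var(Q) = 0$ then $Q$, and hence $Q^-$, is a Dirac, so by \cref{lem:mu_positive} both sides are identically $0$ and the identity holds trivially; otherwise \cref{lem:mu_positive} guarantees $\dot\mu_Q$ is strictly positive on $\cU_Q^\circ$ so all the divisions above are legitimate. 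There is no real obstacle here — the only point requiring a little care is bookkeeping the sign in the third derivative and confirming that the domains match up under reflection, which is why I would state the MGF identity explicitly at the outset rather than manipulating central moments directly.
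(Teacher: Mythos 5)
Your proposal is correct and follows essentially the same route as the paper's proof: the MGF identity $M_{Q^-}(v)=M_Q(-v)$ giving $\cU_Q=-\cU_{Q^-}$ and $\psi_{Q^-}(v)=\psi_Q(-v)$, differentiation three times with the alternating sign, and dispatching the Dirac case via \cref{lem:mu_positive}. The sign bookkeeping and domain matching are exactly as in the paper, so there is nothing to add.
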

\begin{proof}
Recall that if $Q$ has zero variance, $\selfconc_Q\equiv 0$ and hence the statement is trivial.
Otherwise, for $u\in \cU_Q^\circ$, $\selfconc_Q(u) = |\dddot \psi_Q(u)|/\ddot \psi_Q(u)$.
Now, for $v\in \R$,
    \[
    M_{Q^-}(v) = \EE[ e^{(-Y)v}] = \EE[ e^{(-v)Y} ]= M_Q(-v)\,.
    \]
Hence, $\cU_Q = -\cU_{Q^-}$ and for any $v\in \cU_Q$, $\psi_Q(v)=\psi_{Q^-}(-v)$. 
Taking derivatives of both sides,
\begin{align*}
\dot\psi_Q(v) &= - \dot\psi_{Q^-}(-v)\,, \\
\ddot\psi_Q(v) &=  \ddot\psi_{Q^-}(-v)\,,\\
\dddot\psi_Q(v) &=  -\dddot\psi_{Q^-}(-v)\,,
\end{align*}
which immediately implies the statement, noting that the variance of $Q$ is positive if and only if the variance of $Q^-$ is positive.
%
\end{proof}
\begin{proof}[Proof of \cref{coro:self-concordance-ineq-double}]
Assume that $Q\in \ER(c_1,C_1)\cap \EL(c_2,C_2)$. Since the statement holds trivially when $\Var(Q)=0$, assume $\Var(Q)>0$.
Then, $Q^{-}\in \ER(c_2,C_2)\cap \ER(c_1,C_1)$.
We then get the stated result by 
 applying \cref{thm:self-concordance-ineq-single}, combined with \cref{lem:symmetric}.
 To be more specific, for all $u\in (-c_2,0]$, from these two results it follows that
\begin{align*}
    \selfconc_Q(u) &= \selfconc_{Q^-}(-u)\\
    &\le \frac{3}{2}\left[2c_0\left(\frac{1}{c_2-|u|}\right)^2+\frac{|u|b}{c_2-|u|}+\frac{|u|}{c_1+|u|}b\right]+\frac{1}{a^2\eta}\left(\frac{204+6u^2b^2}{c_0^3}+\frac{81C_1+9C_1u^2b^2}{(u+c_1)^3}\right)\\
    &\le \frac{3}{2}\left[2c_0\left(\frac{1}{c_2-|u|}\right)^2+\frac{|u|b}{c_2-|u|}+b\right]+\frac{1}{a^2\eta}\left(\frac{204+6c_2^2b^2}{c_0^3}+\frac{81C_1+9C_1c_2^2b^2}{c_1^3}\right)\,,
\end{align*}
where $c_0=C_2\cdot c_2\cdot e$.

\end{proof}
\subsection{Proof of \cref{coro:regular_NEF_subexp}}\label{section:appendix_proof_regular_NEF_subexp}
Let $\cQ$ be a regular NEF with base distribution $Q$. By definition, this means that $\cU_Q$ is an open interval. Take any $u\in \cU_Q^\circ$. There exist some $\epsilon>0$ s.t. $M_Q(u-\epsilon), M_Q(u+\epsilon)<\infty$. We also have:

\begin{align*}
     M_Q(u+\epsilon) &= \int \exp(\epsilon u)\exp(uy)Q(dy)\\
     &= \int \exp(\epsilon u)M_Q(u)Q_u(dy)\\
     &= M_Q(u)M_{Q_u}(\epsilon).
\end{align*}

Similarly, $M_{Q_u}(-\epsilon)=\frac{M_Q(u-\epsilon)}{M_Q(u)}$. By \cref{prop:subexpeq}, this implies $Q_u\in \EL\left(\frac{M_Q(u-\epsilon)}{M_Q(u)},\epsilon\right)\cap\ER\left(\frac{M_Q(u+\epsilon)}{M_Q(u)},\epsilon\right)$.

\section{Self concordance for subgaussian base distributions}

For the convenience of the reader we restate the theorems to be proven. 
\ScgUBThm*
\ScgLBThm*

We start by introducing some notations reminiscent of the ones used for subexponential distributions. Recall that  a centered distribution $Q$ is subgaussian if and only if for some $c,C>0$, it holds that
\begin{align} \label{eq:suggausstail}
    \Prob{|X|\ge t } \le C \exp(- \frac{ct^2}{2}) \qquad \text{for all } t\ge 0
\end{align}
where $X\sim Q$.
 Let 
\[
\cG(c,C) = \{ Q \,: \,  Y=X-\EE{X} \text{ satisfies  \cref{eq:suggausstail},  where } X\sim Q \}\,.
\]

This definition is very close to that of $\ER$. In words, $\cG$ is the class of distributions over the reals whose  left and right-tail display a subgaussian decay governed with the rate parameter $c>0$ and scaling constant $C>0$.

For the interested reader, we also report here without proof some classical results on the quantitative relation between the $\MGF$ and the tail bounds of subgaussian distributions. Details can be found in the textbook \cite{rigollet2023highdimensional}.

\begin{proposition}[\cite{rigollet2023highdimensional}]
If for all $u\in \R$,$M_Q(u)\le e^{\sigma^2 u^2/2}$, then $\Prob{|X|\ge t } \le 2 \exp(- t^2/(2\sigma^2)$, and if $\Prob{|X|\ge t } \le 2 \exp(- t^2/(2\sigma^2))$, then for all $u\in \R$,
$M_Q(u)\le e^{4\sigma^2 u^2}$.
\end{proposition}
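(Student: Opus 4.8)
The plan is to prove the two implications separately; both are classical Cram\'er--Chernoff / layer-cake estimates, so the work is essentially bookkeeping of constants. For the implication ``MGF bound $\Rightarrow$ tail bound'' I would run the standard Chernoff argument with an optimized free parameter: fix $t\ge 0$; for every $u>0$, Markov's inequality applied to $e^{uX}$ gives $\Prob{X\ge t}\le e^{-ut}M_Q(u)\le \exp(-ut+\sigma^2u^2/2)$, and the choice $u=t/\sigma^2$ minimizes the exponent, yielding $\Prob{X\ge t}\le \exp(-t^2/(2\sigma^2))$. The hypothesis $M_Q(u)\le e^{\sigma^2u^2/2}$ is even in $u$, so it is simultaneously the subgaussianity bound for $-X$ (and it also forces $\EE{X}=0$, by matching the first-order terms of the two sides near $u=0$); the same computation then gives $\Prob{X\le -t}\le\exp(-t^2/(2\sigma^2))$, and a union bound over $\{|X|\ge t\}=\{X\ge t\}\cup\{X\le -t\}$ finishes this direction.

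For the converse, ``tail bound $\Rightarrow$ MGF bound'', I would expand the MGF in a power series and bound the absolute moments via the tail hypothesis (the series converges for every $u$ since the tail is at least Gaussian). With $X\sim Q$ and $\EE{X}=0$,
\[
M_Q(u)=1+u\,\EE{X}+\sum_{k\ge 2}\frac{u^k\,\EE{X^k}}{k!}\le 1+\sum_{k\ge 2}\frac{|u|^k\,\EE{|X|^k}}{k!}\,,
\]
the linear term dropping by centering. The layer-cake identity $\EE{|X|^k}=\int_0^\infty k t^{k-1}\Prob{|X|\ge t}\,dt$, the hypothesis, and the substitution $t=\sqrt{2\sigma^2}\,s$ give $\EE{|X|^k}\le 2(2\sigma^2)^{k/2}\Gamma(k/2+1)$. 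Substituting this in and splitting the sum by the parity of $k$: the even part ($k=2m$) resums, using $m!/(2m)!\le 1/(2^m m!)$, to at most $2(e^{\sigma^2u^2}-1)$, while the odd part ($k=2m+1$) contributes a lower-order term of order $|\sigma u|\,(e^{\sigma^2u^2/2}-1)$. Collecting terms yields an explicit bound of the shape $M_Q(u)\le 2e^{\sigma^2u^2}-1+\sqrt{2\pi}\,|\sigma u|\,(e^{\sigma^2u^2/2}-1)$, which one checks is at most $e^{4\sigma^2u^2}$ for all $u\in\R$ by elementary estimates (the constant $4$ is far from tight).

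The routine ingredients are the Gaussian integral $\int_0^\infty s^{k-1}e^{-s^2}\,ds=\tfrac12\Gamma(k/2)$ and the inequality $m!/(2m)!\le 1/(2^m m!)$. The one step that needs care is the final resummation in the converse: the resulting bound must hold for \emph{all} $u$, not just small $u$, and one must verify that the slack constant $4$ in the exponent absorbs both the factor $2$ from the two-sided tail bound and the odd-index correction. A variant that avoids the parity bookkeeping uses the cruder moment bound $\EE{|X|^k}\le (c\sigma)^k k!$ (also a consequence of the tail hypothesis), turning $\sum_{k\ge 2}|u|^k\EE{|X|^k}/k!$ into a geometric series: one handles $|u|<1/(c\sigma)$ this way and bounds $M_Q$ for $|u|\ge 1/(c\sigma)$ separately. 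The direct Gaussian-integral route above is cleaner and gives the stated constant.
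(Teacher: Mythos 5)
The paper does not prove this proposition: it is stated explicitly ``without proof'' as a classical fact imported from the cited lecture notes of Rigollet and H\"utter, so there is no in-paper argument to compare against. Your proof is correct and is essentially the standard textbook derivation that the citation points to: Chernoff's bound with the optimizer $u=t/\sigma^2$ plus a union bound for the first implication, and for the converse the power-series expansion of the MGF with the layer-cake moment bound $\EE[|X|^k]\le 2(2\sigma^2)^{k/2}\Gamma(k/2+1)$, resummed by parity (your even/odd bookkeeping and the final comparison $2e^{x}-1+\sqrt{2\pi}\sqrt{x}\,(e^{x/2}-1)\le e^{4x}$ with $x=\sigma^2u^2$ all check out). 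You are also right to flag, and to use, the fact that the converse direction requires $\EE[X]=0$ -- the paper's surrounding text defines subgaussianity for centered distributions, so this matches the intended reading -- and your observation that the MGF hypothesis in the forward direction itself forces centering is a nice touch.
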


\subsection{Proof of \cref{thm:informal-self-conc-sg-ub}}

\cref{lem:mu_positive} still holds for $Q$ a subgaussian distribution, hence the theorem holds trivially for $\dot\mu(0)=0$. \cref{lem:shift_mean} can also be applied for $Q$ a subgaussian distribution. Thus, as in the proof for subexponential distributions, the first step is to show the result when $\mu(0)=0$ and $\dot\mu(0)>0$.

Note that the lower bound on $\dot\mu(u)$ of \cref{lem:LBdenominator} still holds when $\dot\mu(0)>0$. We turn to upper bounding $\int |y-\mu(u)|^3Q_u(dy)$. The following Lemmas are first steps in that direction.
\begin{lemma}\label{lem:UBuppertailQu}
    Take $Q\in \cG(c,C)$ a centered distribution. For all $u \geq 0$ and $t\ge \left(\frac{4}{c}+1\right)u+\frac{4}{c}$, we have:
    \begin{equation*}
    Q_u\left((t,+\infty)\right)\leq \frac{\mfC_1}{M_Q(u)}e^{-\frac{c t^2}{4}},
    \end{equation*}
    where $\mfC_1 = C\left(1+\sqrt{\frac{\pi}{c}}\right)$.
\end{lemma}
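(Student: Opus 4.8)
The plan is to mimic the proof of \cref{lem:UBuppertailQuexp}, but adapted to the Gaussian-type tail, which is precisely what forces the lower bound on $t$. First I would write $M_Q(u)\,Q_u((t,\infty)) \le \int_{[t,\infty)} e^{uy}\,Q(dy)$ and, using the identity $e^{uy} = e^{ut} + \int_t^y u e^{us}\,ds$ valid for $y\ge t$ together with Tonelli's theorem (all integrands are nonnegative), turn this into
\[
\int_{[t,\infty)} e^{uy}\,Q(dy) = e^{ut}\,Q([t,\infty)) + u\int_t^\infty e^{us}\, Q([s,\infty))\,ds\,.
\]
Since $Q\in\cG(c,C)$ is centered, $Q([s,\infty))\le\Prob{|X|\ge s}\le C e^{-cs^2/2}$ for all $s\ge 0$, so the right-hand side is at most $C e^{ut - ct^2/2} + C u \int_t^\infty e^{us - cs^2/2}\,ds$.

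Next I would cash in the hypothesis $t\ge(4/c+1)u+4/c$ in two ways. It implies $t\ge 4u/c$, hence $ut\le ct^2/4$ and therefore $e^{ut-ct^2/2}\le e^{-ct^2/4}$, handling the first term. For the integral, note that on $[t,\infty)$ we have $cs-u\ge ct-u>0$ (positivity of $ct-u$ following from $t\ge 4/c>0$), so
\[
\int_t^\infty e^{us-cs^2/2}\,ds \;\le\; \frac{1}{ct-u}\int_t^\infty (cs-u)\,e^{us-cs^2/2}\,ds \;=\; \frac{1}{ct-u}\,e^{ut-ct^2/2}\;\le\;\frac{1}{ct-u}\,e^{-ct^2/4}\,,
\]
the middle equality being just $(cs-u)e^{us-cs^2/2}=-\frac{d}{ds}e^{us-cs^2/2}$ integrated (the boundary term at $+\infty$ vanishes). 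Multiplying by $u$ and using the other consequence of the hypothesis, namely $ct-u\ge(3+c)u$, gives $u\int_t^\infty e^{us-cs^2/2}\,ds\le \frac{1}{3+c}e^{-ct^2/4}$ (this is $0$ when $u=0$).

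Combining the two pieces yields $M_Q(u)\,Q_u((t,\infty))\le C\bigl(1+\tfrac{1}{3+c}\bigr)e^{-ct^2/4}$, and since $\tfrac{1}{3+c}\le\sqrt{\pi/c}$ for every $c>0$ (squaring, this is $c\le\pi(3+c)^2$, which holds because $6\pi c\ge c$), the bound is at most $\mfC_1 e^{-ct^2/4}$ with $\mfC_1=C(1+\sqrt{\pi/c})$. Dividing through by $M_Q(u)$ gives the claim; the degenerate check $u=0$ is immediate since then $Q_u=Q$, $M_Q(0)=1$ and $Q((t,\infty))\le Ce^{-ct^2/2}\le \mfC_1 e^{-ct^2/4}$.

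I do not expect a serious obstacle here: the only points requiring a little care are the Tonelli interchange (routine, by nonnegativity) and the reconciliation of the constant produced by the argument, $1+\tfrac{1}{3+c}$, with the stated $1+\sqrt{\pi/c}$ via the elementary inequality $\tfrac{1}{3+c}\le\sqrt{\pi/c}$. An essentially equivalent alternative would be the block-summation device used for \cref{lem:UBuppertailQuexp} (split $[t,\infty)$ into intervals of width $v$, bound $e^{uy}$ on each block, and sum a geometric-times-Gaussian series), but that route requires optimizing over $v$ and does not produce the clean constant as directly, so I would present the integration-by-parts form above.
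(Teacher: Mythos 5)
Your proof is correct, but it takes a genuinely different route from the paper's. The paper discretizes $[t,\infty)$ into unit blocks $[t+k,t+k+1]$, bounds $e^{uy}\le e^{u(t+k+1)}$ on each block, uses the same consequence of the hypothesis that you do (namely $u(t+k+1)\le \tfrac{c}{4}(t+k)^2$) to absorb the tilt into half the Gaussian exponent, and then compares the resulting series $\sum_{k\ge 0}e^{-ck^2/4}$ with $1+\int_0^\infty e^{-cx^2/4}\,dx$ — which is exactly where the constant $\mfC_1=C\bigl(1+\sqrt{\pi/c}\bigr)$ comes from. You instead use the layer-cake identity $\int_{[t,\infty)}e^{uy}Q(dy)=e^{ut}Q([t,\infty))+u\int_t^\infty e^{us}Q((s,\infty))\,ds$ and evaluate the resulting Gaussian-type integral exactly via the antiderivative of $(cs-u)e^{us-cs^2/2}$; this avoids discretization altogether and yields the sharper constant $C\bigl(1+\tfrac{1}{3+c}\bigr)$, which you then correctly relax to $\mfC_1$ to match the statement. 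All the steps check out: the Tonelli interchange is legitimate for $u\ge 0$, the two uses of the hypothesis ($ut\le ct^2/4$ and $ct-u\ge(3+c)u$) are valid, and the comparison $\tfrac{1}{3+c}\le\sqrt{\pi/c}$ holds. One cosmetic point: your parenthetical justification that $ct-u>0$ ``follows from $t\ge 4/c$'' is not by itself sufficient for large $u$; the correct one-line reason is $ct\ge(4+c)u+4$, hence $ct-u\ge(3+c)u+4>0$, which you effectively use a sentence later anyway.
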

\begin{proof}
    We have 
        \begin{align}
       M_Q(u) Q_u\left((t,+\infty)\right)=& \int_{t}^{+\infty}e^{uy}Q(dy),\notag \\
       =&\sum_{k=0}^{+\infty}\int_{t+k}^{t+k+1}e^{uy}Q(dy), \notag \\
       \leq& \sum_{k=0}^{+\infty}e^{u(t+k+1)}\int_{t+k}^{t+k+1}Q(dy), \notag \\
       \leq& \sum_{k=0}^{+\infty}e^{u(t+k+1)}Q\left([t+k;+\infty)\right),\notag \\
       \leq& C\sum_{k=0}^{+\infty}e^{-\left(\frac{c(t+k)^2}{2}-u(t+k+1)\right)}. \label{eq:UBuppertailinter}
    \end{align}
    The first inequality holds as $u\ge 0$, the second by subgaussianity assumption. As $t\geq \left(\frac{4}{c}+1\right)u+\frac{4}{c}$, we have $u\leq \frac{c}{4}t$ and:
    \begin{equation}\label{eq:boundexponentinter}
    t^2\geq\left(\frac{4}{c}+1\right)u t=  \frac{4t}{c}u+\underbrace{t}_{\geq \frac{4}{c}}u\ge \frac{4}{c}(t+1)u. 
    \end{equation}
    This implies $u(t+1)\leq \frac{c}{4}t^2$ on top of $u\leq \frac{c}{4}t$. Then
    \[
    u(t+1+k)\leq \frac{c}{4}\left(t^2+kt\right)\leq \frac{c}{4}\left(t+k\right)^2.
    \]
    Reinjecting in \cref{eq:UBuppertailinter}, we get:
    \begin{align*}
       M_Q(u) Q_u\left((t,+\infty)\right)
       \leq& C\sum_{k=0}^{+\infty}e^{-\frac{c(t+k)^2}{4}},\\
       \leq& Ce^{-\frac{ct^2}{4}}\sum_{k=0}^{+\infty}e^{-\frac{ck^2}{4}}. \\
       \leq& Ce^{-\frac{ct^2}{4}}\left(1+\int_0^\infty e^{-\frac{cx^2}{4}}dx\right)\\
       \le& C\left(1+\sqrt{\frac{\pi}{c}}\right)e^{-\frac{ct^2}{4}}.\label{eq:UBuppertailinter}
    \end{align*}
\end{proof}
\begin{lemma}\label{lem:UBlowertailQu}
   Take $Q\in \cG(c,C)$ a centered distribution. For all $ u,t\ge 0$, we have:
    \begin{equation*}
           Q_u\left((-\infty,-t)\right)\leq \frac{C}{M_Q(u)}e^{-ut-\frac{c t^2}{2}}.
    \end{equation*}
\end{lemma}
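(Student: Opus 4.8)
The plan is to prove \cref{lem:UBlowertailQu}, which is the left-tail analogue of \cref{lem:UBuppertailQu} but (just as in the subexponential case, compare \cref{lem:UBlowertailQuexp} versus \cref{lem:UBuppertailQuexp}) it is much easier because tilting by a nonnegative $u$ \emph{suppresses} mass on the negative half-line rather than inflating it. So no partitioning of the tail into unit slabs and no change-of-variable trickery is needed; a single crude bound suffices.

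First I would write, for $u,t\ge 0$, the definition of the tilted measure:
\begin{align*}
    M_Q(u)\, Q_u\left((-\infty,-t)\right)
    &= \int_{-\infty}^{-t} e^{uy}\,Q(dy)\,.
\end{align*}
On the domain of integration $y\le -t\le 0$, and since $u\ge 0$ we have $uy\le -ut$, so $e^{uy}\le e^{-ut}$. Pulling this constant factor out gives
\begin{align*}
    M_Q(u)\, Q_u\left((-\infty,-t)\right)
    &\le e^{-ut}\int_{-\infty}^{-t} Q(dy)
    = e^{-ut}\, Q\left((-\infty,-t)\right)\,.
\end{align*}
Then I would invoke the subgaussian tail assumption $Q\in \cG(c,C)$, which by \cref{eq:suggausstail} (taking only the left tail, $\{X\le -t\}\subseteq\{|X|\ge t\}$) gives $Q((-\infty,-t))\le C\exp(-ct^2/2)$. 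Substituting yields $M_Q(u)\,Q_u((-\infty,-t))\le C\,e^{-ut-ct^2/2}$, and dividing through by $M_Q(u)>0$ finishes the claim.

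There is essentially no obstacle here; the only thing to be slightly careful about is that the statement is asserted for all $u,t\ge 0$ with no restriction (unlike \cref{lem:UBuppertailQu}, which needs $t$ bounded below), and indeed the argument above imposes none, since the bound $e^{uy}\le e^{-ut}$ on $y\le -t$ holds for every $u\ge 0$. One could also note that $M_Q(u)<\infty$ automatically for a subgaussian $Q$, so the division is harmless. I would keep the proof to these three displayed lines mirroring \cref{lem:UBlowertailQuexp}.
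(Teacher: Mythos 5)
Your argument is correct and coincides with the paper's own proof: both bound $e^{uy}\le e^{-ut}$ on the domain of integration $y\le -t$ for $u\ge 0$, then apply the subgaussian tail bound to $Q((-\infty,-t))$ and divide by $M_Q(u)$. Nothing further is needed.
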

\begin{proof}
    We have that 
    \begin{align}
       M_Q(u) Q_u\left((-\infty,-t)\right)
       =& \int_{-\infty}^{-t}e^{uy}Q(dy),\notag \\
       \leq & e^{-ut}\int_{-\infty}^{-t}Q(dy),\notag \\
         \leq & Ce^{-ut-\frac{c t^2}{2}}\,,
\end{align}
where the last inequality holds by subgaussianity of $Q$.
\end{proof}
\begin{lemma}\label{lem:UBmuu}
    Take $Q\in \cG(c,C)$ a centered distribution. For all $u \geq 0$ the following holds:
    \begin{equation*}
    0=\mu(0)\leq \mu_Q(u)\leq \left(\frac{4}{c}+1\right)u+\frac{4}{c}+\mfC_3e^{-\frac{4u^2}{c}},
    \end{equation*}
    where $\mfC_3 = \frac{\sqrt{\pi}}{\sqrt c}\mfC_1$.
\end{lemma}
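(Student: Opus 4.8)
The plan is to follow the template of the subexponential proof of \cref{lem:UBmuuexp}, replacing the exponential tail bound there by its subgaussian counterpart \cref{lem:UBuppertailQu}. First I would dispose of the lower bound: since $Q$ is centered, $\mu(0)=\E_Q[Y]=0$, and by \cref{prop:moments_of_NEF} the derivative $\dot\mu(v)$ is the variance of $Q_v$, hence nonnegative, so $\mu$ is nondecreasing and $\mu(u)\ge\mu(0)=0$ for every $u\ge 0$.

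For the upper bound I would write, exactly as in the proof of \cref{lem:UBmuuexp},
\[
\mu(u)=\int_0^\infty Q_u\big((y,\infty)\big)\,dy-\int_{-\infty}^0 Q_u\big((-\infty,y)\big)\,dy\le\int_0^\infty Q_u\big((y,\infty)\big)\,dy,
\]
and split the surviving integral at the threshold $t_0:=\big(\tfrac{4}{c}+1\big)u+\tfrac{4}{c}$ that appears in the hypothesis of \cref{lem:UBuppertailQu}. On $[0,t_0]$ I simply bound $Q_u((y,\infty))\le 1$, which contributes the term $\big(\tfrac{4}{c}+1\big)u+\tfrac{4}{c}$. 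On $[t_0,\infty)$ I apply \cref{lem:UBuppertailQu} together with Jensen's inequality $M_Q(u)=\E_Q[e^{uY}]\ge e^{u\E_Q[Y]}=1$, obtaining
\[
\int_{t_0}^\infty Q_u\big((y,\infty)\big)\,dy\le \mfC_1\int_{t_0}^\infty e^{-cy^2/4}\,dy.
\]

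The only step requiring care is turning this Gaussian tail integral into the claimed factor $\mfC_3 e^{-4u^2/c}$. The key elementary observation is that for $y\ge t_0\ge 0$ one has $y^2\ge (y-t_0)^2+t_0^2$ (equivalently $2t_0 y\ge 2t_0^2$); substituting $s=y-t_0$ then gives
\[
\int_{t_0}^\infty e^{-cy^2/4}\,dy\le e^{-ct_0^2/4}\int_0^\infty e^{-cs^2/4}\,ds=\frac{\sqrt\pi}{\sqrt c}\,e^{-ct_0^2/4}.
\]
Finally, $t_0\ge\tfrac{4}{c}u$ yields $ct_0^2/4\ge 4u^2/c$, hence $e^{-ct_0^2/4}\le e^{-4u^2/c}$, and since $\mfC_3=\tfrac{\sqrt\pi}{\sqrt c}\,\mfC_1$ the three pieces add up to the stated bound. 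The main (mild) obstacle is choosing the decomposition $y^2\ge (y-t_0)^2+t_0^2$ rather than the cruder $y^2\ge t_0 y$: the latter would only produce a factor exponential in $u$, which is too weak to be useful in the downstream estimate of the third central moment.
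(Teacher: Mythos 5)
Your proof is correct and follows essentially the same route as the paper's: the same split of $\int_0^\infty Q_u((y,\infty))\,dy$ at the threshold $t_0=(\tfrac{4}{c}+1)u+\tfrac{4}{c}$, the same application of \cref{lem:UBuppertailQu} with $M_Q(u)\ge 1$ by Jensen, and the same Gaussian tail estimate $\int_{t_0}^\infty e^{-cy^2/4}dy\le e^{-ct_0^2/4}\sqrt{\pi/c}\le e^{-4u^2/c}\sqrt{\pi/c}$. No substantive differences.
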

\begin{proof}
    We have:
\begin{align}
    \mu(u) =& \int_{0}^{+\infty}Q_u\left((y,+\infty)\right)dy-\int_{-\infty}^{0}Q_u\left((-\infty,-y)\right)dy\notag\\
    \leq&\int_{0}^{+\infty}Q_u\left((y,+\infty)\right)dy\notag\\
    \leq &\left(\frac{4}{c}+1\right)u+\frac{4}{c}+\int_{\left(\frac{4}{c}+1\right)u+\frac{4}{c}}^{+\infty}Q_u\left((y,+\infty)\right)dy\notag.
\end{align}
By \cref{lem:UBuppertailQu}, this implies:
\begin{align}
    \mu(u) 
    \leq &\left(\frac{4}{c}+1\right)u+\frac{4}{c}+\frac{\mfC_1}{M_Q(u)}\int_{\left(\frac{4}{c}+1\right)u+\frac{4}{c}}^{+\infty}e^{-\frac{cy^2}{4}}dy\notag\\
    \leq & \left(\frac{4}{c}+1\right)u+\frac{4}{c}+\frac{\mfC_1 e^{-c\frac{\left(\left(\frac{4}{c}+1\right)u+\frac{4}{c}\right)^2}{4}}}{M_Q(u)}\int_{0}^{+\infty}e^{-\frac{cy^2}{4}}dy\notag\\
    \leq& \left(\frac{4}{c}+1\right)u+\frac{4}{c}+\frac{\mfC_1}{M_Q(u)}e^{-\frac{4u^2}{c}}\int_{0}^{+\infty}e^{-\frac{cy^2}{4}}dy\notag.
\end{align}
By Jensen's inequality, $M_Q(u)=\E_Q[e^{uY}]\geq e^{u\E_Q[Y]}=1$. Noting that $\int_{0}^{+\infty}e^{-\frac{cy^2}{4}}dy\le \frac{\sqrt {\pi}}{\sqrt{c}}$ terminates the proof.
\end{proof}
\begin{proof}[Proof of \cref{thm:informal-self-conc-sg-ub}] 
As previously noted, we first show the result when $\mu(0)=0$ and $\dot\mu(0)>0$. We start with $u> 0$, then by \cref{lem:symmetric} will extend the bound to $u< 0$.

Take $u>0$ and $B>0$ a constant to be optimized later. We have:
\begin{align*}
    \int \left|Y-\mu_Q(u)\right|^3 \,Q_u(dy)=& \int_{0}^{+\infty} 3t^2\,\PP\left(|Y-\mu_Q(u)|\ge t\right)dt,\\
    \leq& \underbrace{\int_{0}^{B} 3t^2\,\PP\left(|Y-\mu_Q(u)|\ge t\right)dt}_{(i)}+\underbrace{\int_{B}^{+\infty} 3t^2\,\PP\left(|Y-\mu_Q(u)|\ge t\right)dt}_{(ii)}.\\
\end{align*}

The first equality is a classical result on the relationship between moments and tails of r.v., see for instance Exercise 1.2.3 of \cite{vershynin2018high}. 
The following bound holds:
\begin{align}
    (i)\leq &3B\int_{0}^{B} t\,\PP\left(|Y-\mu_Q(u)|\ge t\right)dt,\notag\\
    \leq & \frac{3B}{2} \dot\mu_Q(u).\label{eq:boundi}
\end{align}
We also have:
\begin{align*}
    (ii)\leq &\underbrace{\int_{B}^{+\infty} 3t^2\,\PP\left(Y-\mu_Q(u)\ge t\right)dt}_{(ii,a)}+\underbrace{\int_{B}^{+\infty} 3t^2\,\PP\left(Y\le -\left(t-\mu_Q(u)\right) \right)dt}_{(ii,b)}.
\end{align*}
Set $B= \mu_Q(u)+\left(\frac{4}{c}+1\right)u+\frac{4}{c}+b$, where $b$ was defined in \cref{lem:LBdenominator}. As $B\ge\left(\frac{4}{c}+1\right)u+\frac{4}{c}$, by Lemma \ref{lem:UBuppertailQu}, we have:

\begin{align*}
    (ii,a)\leq&  \int_{B}^{+\infty} 3t^2\,Q_u\left((t,+\infty)\right)dt,\\
    \leq &\frac{\mfC_1}{M_Q(u)}\int_{B}^{+\infty} 3t^2e^{-\frac{ct^2}{4}}dt,\\
    \leq& \frac{3\mfC_1}{M_Q(u)}\int_{0}^{+\infty} (B+t)^2e^{-\frac{c(B+t)^2}{4}}dt,\\
    \leq& \frac{6\mfC_1e^{-\frac{cB^2}{4}}}{M_Q(u)}\int_{0}^{+\infty} (B^2+t^2)e^{-\frac{ct^2}{4}}dt,\\
    =& \frac{6\mfC_1e^{-\frac{cB^2}{4}}}{M_Q(u)}(B^2+\frac{2}{c})\int_{0}^{+\infty} e^{-\frac{ct^2}{4}}dt,\\
    =& \frac{6\mfC_1}{M_Q(u)}\sqrt{\frac{\pi}{c}}(B^2+\frac{2}{c})e^{-\frac{cB^2}{4}}.\\
\end{align*}
where the first equality holds as for any $a>0$, using integration by part we have that
\begin{equation}
    \int_{0}^{+\infty}e^{-at^2}dt=2a\int_{0}^{+\infty}t^2e^{-at^2dt}.\label{eq:sg_IBP}
\end{equation}
Denote $B'= B-\mu_Q(u)=\left(\frac{4}{c}+1\right)u+\frac{4}{c}+b$.  By \cref{lem:UBlowertailQu} we have:
\begin{align*}
    (ii,b)\leq& \int_{B}^{+\infty} 3t^2\frac{\mfC_2}{M_Q(u)}e^{-u(t-\mu_Q(u))-\frac{c (t-\mu_Q(u))^2}{2}}dt,\\
    =& \frac{3\mfC_2}{M_Q(u)}\int_{B'}^{+\infty} \left(t+\mu_Q(u)\right)^2e^{-ut-\frac{c t^2}{2}}dt,\\
    \le& \frac{3\mfC_2e^{-\frac{4}{c}u^2-bu}}{M_Q(u)}\int_{B'}^{+\infty} \left(t+\mu_Q(u)\right)^2e^{-\frac{c t^2}{2}}dt,\\
    \leq &\frac{3\mfC_2e^{-\frac{4}{c}u^2-bu}}{M_Q(u)}e^{-\frac{c (B')^2}{2}}\int_{0}^{+\infty} \left(t+B'+\mu_Q(u)\right)^2e^{-\frac{c t^2}{2}}dt,\\
    \leq &\frac{6\mfC_2e^{-\frac{4}{c}u^2-bu}}{M_Q(u)}e^{-\frac{c (B')^2}{2}}\int_{0}^{+\infty} \left(t^2+B^2\right)e^{-\frac{c t^2}{2}}dt,\\
     \leq &\frac{6\mfC_2e^{-\frac{4}{c}u^2-bu}}{M_Q(u)}e^{-\frac{c (B')^2}{2}} \left(\frac{1}{c}+B^2\right)\int_{0}^{+\infty}e^{-\frac{c t^2}{2}}dt,\\
     \leq &\frac{6e^{-\frac{12}{c}u^2-bu}}{M_Q(u)} \left(\frac{1}{c}+B^2\right)\underbrace{\int_{0}^{+\infty}\mfC_2e^{-\frac{c t^2}{2}}dt}_{\mfC_4}.
\end{align*}
where in the second line we use change of variable; third line we use the fact that $B'>\frac{4}{c}u+b$, hence $e^{-ut}\le e^{-B'u}\le e^{-\frac{4}{c}u^2-ub}$ for all $t\ge B'$; fourth line change of variable and the fact that $-(a+b)^2\le -a^2-b^2$ for all $a,b\ge 0$; fifth line the fact that $(a+b)^2\le 2a^2+2b^2$ for all $a,b\in \RR$; sixth line \cref{eq:sg_IBP}; and seventh line $B'>\frac{4}{c}u$.

Combining the bounds on $(ii,a)$ and $(ii,b)$ with Lemma \ref{lem:LBdenominator_subg} we get:
\begin{align*}
    \frac{(ii)}{\dot\mu_Q(u)}\leq \frac{6}{a^2\eta}\left(\mfC_4\left(\frac{1}{c}+B^2\right)e^{-\frac{12}{c}u^2}+\mfC_3\left(\frac{2}{c}+B^2\right)e^{-\frac{cB^2}{4}+ub}\right).
\end{align*}
As $B\geq \frac{4}{c}u+b$, we have $\frac{cB^2}{4}+ub\geq \frac{4}{c}u^2$. This implies
\begin{align*}
    \frac{(ii)}{\dot\mu_Q(u)}\leq \frac{6(\mfC_3+\mfC_4)}{a^2\eta}\left(\frac{2}{c}+B^2\right)e^{-\frac{4}{c}u^2}.
\end{align*}

With Equation \ref{eq:boundi}, we get
\begin{align*}
    \selfconc_Q(u)\leq \frac{6(\mfC_3+\mfC_4)}{a^2\eta}\left(\frac{2}{c}+B^2\right)e^{-\frac{4}{c}u^2}+\frac{3B}{2}.
\end{align*}

By Lemma \ref{lem:UBmuu}, $\mu_Q(u)\leq \left(\frac{4}{c}+1\right)u+\frac{4}{c}+\mfC_3$, which implies :
\[
B\leq 2\left(\frac{4}{c}+1\right)u+\frac{8}{c}+b+\mfC_3.
\]

For any constants $w_1,w_2>0$, we have $w_1 u^2 e^{-w_2 u^2}\leq \frac{w_1}{\sqrt{w_2}}u$. This implies that for  $u>0$:
\begin{align*}
    \selfconc_Q(u) = O(u).
\end{align*}

Note that if $Q \in \cG(c,C)$, with $Q^-$ the distribution of $-Y$, $Y\sim Q$, we have $Q^- \in \cG(c,C)$. By \cref{lem:symmetric}, that implies:
\begin{align*}
    \selfconc_Q(-u) = \selfconc_{Q^-}(u)=O(u).
\end{align*}
Therefore, for $u \in \mathbb{R}$:
\begin{align*}
    \selfconc_Q(u) = O(|u|).
\end{align*}

 As, by \cref{lem:mu_positive}, $\selfconc_Q$ 
 is constant for $\dot\mu(0)=0$, it remains now only to show the theorem if $\mu(0)\neq 0$ and $\dot\mu(0)> 0$. We have just shown
\begin{align*}
    \selfconc_{Q^{-\mu(0)}}(u) = O(|u|),
\end{align*}
with $Q^{-\mu(0)}$ the centered version of $Q$. by \cref{lem:shift_mean}, we have  $\selfconc_Q=\selfconc_{Q^{-\mu(0)}}$. Hence,
for any $Q \in \cG(c,C)$:
 \begin{align*}
    \selfconc_Q(u) = O(|u|).
\end{align*}

\end{proof}
\subsection{Proof for \cref{thm:informal-self-conc-lb}}
We construct a distribution $Q$ s.t. for any $s>0$,  we can find some $u >s$ with :
\[
\frac{\int (y-\mu(u))^3Q_u(dy)}{\int (y-\mu(u))^2Q_u(dy)}\geq 0.038 u.
\]
Let $Q$ the base measure be
\begin{equation}\label{eqn:counter-example}
    Q(dy) = \sum_{i\geq 1} p_i \delta_{2^i}(dy),
\end{equation}
with
\begin{equation}
 p_i =
    \begin{cases}
      C_1\exp(-4^i) & \text{if $i$ is even},\\
      \frac{C_1}{4}\exp(-3 \times 4^{i-1})  & \text{if  $i$ is odd},
    \end{cases}  
\end{equation}
where $C_1$ is a normalizing constant. By definition, we have that $Q_u(dy) = \sum_{i\ge 1}q_i\delta_{2^i}(dy)$ where
\begin{equation}
    q_k=
    \begin{cases}
        \frac{C_1}{M_Q(u)}\exp(u2^{k})\exp(-4^k) &\text{if $k$ is even}\\
        \frac{C_1}{4M_Q(u)}\exp(u2^{k})\exp(-3\times 4^{k-1})&\text{if $k$ is odd.}
    \end{cases}
\end{equation}
We are going to inspect $\frac{|\EE[(Y-\mu(u))^3]|}{\EE[(Y-\mu(u))^2]}$ for $i$ a positive even number large enough and $u=2^{i+1}$. By definition, we have that $Q_u(dy) = \sum_{i\ge 1}q_i\delta_{2^i}(dy)$ where
\begin{equation}\label{eq:lb_tilted}
    q_k=
    \begin{cases}
        \frac{C_1}{M_Q(u)}\exp(2^{i+k+1})\exp(-4^k) &\text{if $k$ is even},\\
        \frac{C_1}{4M_Q(u)}\exp(2^{i+k+1})\exp(-3\times 4^{k-1})&\text{if $k$ is odd}.
    \end{cases}
\end{equation}
First, we remark a few simple equalities and inequalities that will prove useful in the subsequent computations.

Not that $\mathbb{P}\left(U=2^i\right)= \frac{C_1}{M_Q(u)}e^{4^i}$, hence:
\begin{equation}\label{eq:ivsi+1}
    \frac{\mathbb{P}\left(U=2^{i+1}\right)}{\mathbb{P}\left(U=2^i\right)}= \frac{1}{4}e^{2^{2(i+1)}-3\times 4^{i}-4^i}=\frac{1}{4}.
\end{equation}
This implies: 
    \begin{equation}\label{eq:lb_prob_U=u}
 \mathbb{P}\left(U=2^i\right)\leq \frac{4}{5}\,,
    \end{equation}
and, combining with Equation \cref{eq:lb_tilted},  
\begin{equation}\label{lb:ratio}
    \frac{\mathbb{P}\left(U=2^j\right)}{\mathbb{P}\left(U=2^i\right)}=
    \begin{cases}
        e^{2^{i+j+1}-4^j-4^i} &\text{if }j \text{ is even},\\
        \frac{1}{4}e^{2^{i+j+1}-3 \times 4^{j-1}-4^i}&\text{if }j \text{ is odd}.
    \end{cases}
\end{equation}
From this last equation we get upper bound:
\begin{equation}\label{eq:ubratio}
    \frac{\mathbb{P}\left(U=2^j\right)}{\mathbb{P}\left(U=2^i\right)}\leq e^{2^{i+j+1}-3\times4^{j-1}-4^i}.
\end{equation}

The two next Lemma combined show that for any even $i\geq 4$, $1.24 \times 2^i\leq \mu(u)\leq 1.26 \times2^{i}$.
\begin{lemma}\label{lem:lower_mean_bounds}
    Let $i\geq 4$ be an even number and $u=2^{i+1}$. For random variable $U \sim Q_u(dy)$, it holds that:
    \begin{equation}
        \mu(u)\ge1.24 \times  2^i.
    \end{equation}
\end{lemma}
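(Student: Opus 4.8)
The plan is to show that, under $Q_u$ with $u = 2^{i+1}$, the variable $U$ puts essentially all of its mass on the two atoms $2^i$ and $2^{i+1}$, so that $\mu(u) = \E[U] = \sum_{k\ge 1} 2^k\,\PP(U = 2^k)$ is, up to a correction that is exponentially small in $4^i$, the two-point average dictated by \cref{eq:ivsi+1}.

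\textbf{Step 1: reduce to a single probability.} Since $\mu(u)$ is the mean of $Q_u$ (by definition of the mean function) and $Q_u$ is supported on $\{2^k : k \ge 1\}$, every term of $\sum_{k\ge1} 2^k\,\PP(U=2^k)$ is nonnegative, hence $\mu(u) \ge 2^i\,\PP(U = 2^i) + 2^{i+1}\,\PP(U = 2^{i+1})$. Substituting the exact ratio $\PP(U = 2^{i+1}) = \tfrac14\,\PP(U = 2^i)$ from \cref{eq:ivsi+1} gives $\mu(u) \ge \tfrac32\,2^i\,\PP(U = 2^i)$, so it suffices to lower-bound $\PP(U = 2^i)$.

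\textbf{Step 2: lower-bound $\PP(U = 2^i)$.} Write $\PP(U = 2^i)^{-1} = \sum_{j\ge1}\frac{\PP(U=2^j)}{\PP(U=2^i)}$; the $j = i$ and $j = i+1$ terms contribute exactly $1 + \tfrac14$. For the remaining terms I would use \cref{eq:ubratio}: for $1 \le j \le i-1$ the exponent $2^{i+j+1} - 3\cdot 4^{j-1} - 4^i$ is at most $-\tfrac{3}{16}4^i$ (it equals $-\tfrac{3}{16}4^i$ at $j = i-1$, and for $j \le i-2$ one has $2^{i+j+1} \le 2^{2i-1}$, so the exponent is $\le -\tfrac12 4^i$), while for $j \ge i+2$ it is at most $-5\cdot 4^i$ (there $2^{i+j+1} \le 2\cdot 4^{j-1}$, so the exponent is $\le -4^{j-1} - 4^i$). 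Summing, $\sum_{j \notin \{i,i+1\}} \PP(U=2^j)/\PP(U=2^i) \le \varepsilon_i$ with $\varepsilon_i = O\!\left(i\,e^{-\frac{3}{16}4^i}\right)$, hence $\PP(U = 2^i) \ge \left(\tfrac54 + \varepsilon_i\right)^{-1}$, which is the lower bound complementing the upper bound $\PP(U=2^i)\le \tfrac45$ of \cref{eq:lb_prob_U=u}.

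\textbf{Step 3: conclude.} Combining the two steps, $\mu(u) \ge \tfrac32\,2^i\,\left(\tfrac54 + \varepsilon_i\right)^{-1}$, and since $\varepsilon_i$ is negligible for $i \ge 4$ (already $\varepsilon_4 \le 4\,e^{-48}$), this yields the claimed lower bound on $\mu(u)$. The main obstacle — really just careful bookkeeping — is the uniform tail estimate in Step 2: one must verify that $\PP(U=2^j)/\PP(U=2^i)$ is exponentially small in $4^i$ for \emph{every} $j \ne i,i+1$, handling both the small-$j$ regime (where the exponent is dominated by the $-4^i$ term) and the $j = i-1$ term (the largest leftover, contributing the $-\tfrac{3}{16}4^i$), and then checking that the resulting correction $\varepsilon_i$ is small enough at the base case $i = 4$ for the numerical inequality to go through.
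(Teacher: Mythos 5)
Your route is essentially the paper's: isolate the two dominant atoms $2^i$ and $2^{i+1}$, use the exact ratio $\PP(U=2^{i+1})=\tfrac14\PP(U=2^i)$ from \cref{eq:ivsi+1}, and show that every other atom carries mass exponentially small in $4^i$. Steps 1 and 2 are correct, and your tail bookkeeping (the $-\tfrac{3}{16}4^i$ exponent at $j=i-1$, the $-4^{j-1}-4^i$ exponent for $j\ge i+2$) matches the estimates the paper derives for $\PP(U<2^i)/\PP(U=2^i)$.

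The gap is in Step 3: the arithmetic does not reach the stated constant. You have $\mu(u)\ge \tfrac32\,2^i\,\PP(U=2^i)$ and $\PP(U=2^i)\ge(\tfrac54+\varepsilon_i)^{-1}$, but $\PP(U=2^i)$ is also at most $\tfrac45$ (this is \cref{eq:lb_prob_U=u}, forced by $\PP(U=2^i)+\PP(U=2^{i+1})\le 1$ together with the $4{:}1$ ratio), so the best your chain can yield is $\tfrac32\cdot\tfrac45\cdot 2^i=1.2\times 2^i$, strictly below the claimed $1.24\times 2^i$. This is not a defect of your route that sharper estimates would repair: since essentially all mass sits on the two atoms in ratio $4{:}1$, one has $\mu(2^{i+1})=\tfrac65\,2^i+O(e^{-c\,4^i})$, so the constant $1.24$ in the lemma statement is simply not attainable. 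The paper's own proof contains the mirror-image slip --- its final line $2^i+2^i(\tfrac14-0.001)\PP(U=2^i)\ge 1.24\times 2^i$ would require $\PP(U=2^i)\ge 0.96$, contradicting $\PP(U=2^i)\le\tfrac45$; carried out correctly it too gives only about $1.199\times 2^i$. The statement should be weakened to, say, $\mu(u)\ge 1.19\times 2^i$, which your argument does deliver. Note that in the proof of \cref{thm:informal-self-conc-lb} the case analysis for $U-\mu(u)$ in fact only invokes the companion upper bound $\mu(u)\le 1.26\times 2^i$ of \cref{lem:ubmean}, so the theorem survives with the constants adjusted.
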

\begin{proof}
Consider $j<i$, $j$ even. By \cref{eq:ubratio}, 
\[
\frac{\mathbb{P}\left(U=2^j\right)}{\mathbb{P}\left(U=2^i\right)}\leq e^{2^{i+j+1}-3\times4^{j-1}-4^i}.
\]
Denote $f_1(j)=2^{i+j+1}-3\times4^{j-1}-4^i$. The monotonicity of the right hand side, $e^{f_1(j)}$, is the same as $f_1(j)$. Take derivative of $f_1(j)$, we have that
\begin{equation*}
    f_1'(j)=2^j\cdot 2^{i+1}\ln 2 - 3\times 4^{j-1} \ln 4 = 2^{j}(2^{i+1}\ln 2 - 3\times 2^{j-2}\ln 4)=2^{j}\ln 4(2^{i} - 3\times2^{j-2}).
\end{equation*}
Since $i>j$ we have that $f_1'(j)\ge 0$ for $j\in [0,i]$.
Hence the right hand side increases with $j$, as $j\leq i-1$, we have that
\begin{equation}
    \frac{\mathbb{P}\left(U=2^j\right)}{\mathbb{P}\left(U=2^i\right)}\leq e^{-3\times 4^{i-2}}.
\end{equation}

This implies:
\begin{equation}\label{eq:P_U<2^i}
     \frac{\mathbb{P}\left(U<2^i\right)}{\mathbb{P}\left(U=2^i\right)}=  \frac{\sum_{j<i}\mathbb{P}\left(U=2^j\right)}{\mathbb{P}\left(U=2^i\right)} \leq (i-1) e^{-3\times 4^{i-2}}\le 0.001\,.
\end{equation}
We now prove the lower bound on the mean.
\begin{align*}
    \mu(u) \geq& 2^{i+1} \mathbb{P}(U=2^{i+1})+2^{i} \mathbb{P}(U=2^{i})+2^{i} \mathbb{P}(U>2^{i+1})\\
    = & 2^{i+1}\mathbb{P}(U=2^{i+1})+2^i\mathbb{P}(U=2^i)+2^i(1-\mathbb{P}(U<2^{i})-\mathbb{P}(U=2^{i})-\mathbb P(U=2^{i+1}))\\
    \geq& 2^{i} +\left[2^{i+1}-2^{i}\right]\mathbb{P}(U=2^{i+1})-2^i\mathbb{P}(U<2^{i})\\
    =& 2^{i} +2^i\mathbb{P}(U=2^{i})\left(\frac{\mathbb{P}(U=2^{i+1})}{\mathbb{P}(U=2^{i})}-\frac{\mathbb{P}(U<2^{i})}{\mathbb{P}(U=2^{i})}\right)\\
    \geq & 2^i+ 2^i\left(\frac{1}{4}-0.001\right)\mathbb{P}(U=2^{i})\\
    \geq& 1.24 \times 2^i,
\end{align*}
where the fourth line holds because of \cref{eq:P_U<2^i,eq:ivsi+1}.
\end{proof}
\begin{lemma}\label{lem:ubmean}
    Let $i$ be a positive even number and $u=2^{i+1}$. For random variable $U \sim Q_u(dy)$, it holds that:
    \begin{equation}
        \mu(u)\le 1.26\cdot 2^i\,.
    \end{equation}
\end{lemma}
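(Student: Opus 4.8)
\textbf{Proof plan for \cref{lem:ubmean}.} Fix an even $i\ge 2$ and $u=2^{i+1}$, and write the mean directly as $\mu(u)=\sum_{k\ge 1}2^k\,\PP(U=2^k)$ with $U\sim Q_u$. The plan is to split this sum into three pieces — the low part $\sum_{k<i}$, the two dominant terms $k\in\{i,i+1\}$, and the upper tail $\sum_{k\ge i+2}$ — and to show that the two tails are negligible while the dominant terms contribute at most $1.2\cdot 2^i$.

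For the dominant terms I would use the exact identity \eqref{eq:ivsi+1}, namely $\PP(U=2^{i+1})=\tfrac14\PP(U=2^i)$, to get $2^i\PP(U=2^i)+2^{i+1}\PP(U=2^{i+1})=\tfrac32\,2^i\PP(U=2^i)$, and then \eqref{eq:lb_prob_U=u}, $\PP(U=2^i)\le\tfrac45$, to bound this by $1.2\cdot 2^i$. For the low part I would bound $\sum_{k<i}2^k\PP(U=2^k)\le 2^{i-1}\PP(U<2^i)$; for $i\ge 4$ the estimate $\PP(U<2^i)\le 0.001\,\PP(U=2^i)$ from \eqref{eq:P_U<2^i} applies, while for the single remaining value $i=2$ one has $\PP(U<4)=\PP(U=2)\le\tfrac14 e^{-3}\PP(U=4)$ directly from \eqref{lb:ratio}; in every case $\sum_{k<i}2^k\PP(U=2^k)\le 0.01\cdot 2^i$.

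The only substantive step is controlling the upper tail, where I would show $\PP(U=2^k)\le\PP(U=2^i)\,e^{-2^{i+k+1}}$ for every $k\ge i+2$. This follows from \eqref{lb:ratio}: if $k$ is even then the exponent $2^{i+k+1}-4^k-4^i$ is at most $-2^{i+k+1}$, because $k\ge i+2$ gives $2k\ge i+k+2$ and hence $4^k\ge 2\cdot 2^{i+k+1}$; if $k$ is odd (so $k\ge i+3$, as $i$ is even) the exponent $2^{i+k+1}-3\cdot 4^{k-1}-4^i$ is even more negative, since $2k-2\ge i+k+1$, and the extra factor $\tfrac14$ only helps. Consequently $\sum_{k\ge i+2}2^k\PP(U=2^k)\le\sum_{k\ge i+2}2^k e^{-2^{i+k+1}}$, and since $i\ge 2$ implies $2^{i+k+1}\ge 8\cdot 2^k$, using $2^k\le e^{2^k}$ this is bounded by $\sum_{k\ge 4}e^{-7\cdot 2^k}\le 2e^{-112}$, which is negligible. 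Adding the three bounds gives $\mu(u)\le 1.2\cdot 2^i+0.01\cdot 2^i+2e^{-112}<1.26\cdot 2^i$, as claimed.

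I expect the only mildly fiddly point to be the bookkeeping in the upper-tail estimate: tracking the parity of $k-i$ and checking the two elementary inequalities $2k\ge i+k+2$ (even case) and $2k-2\ge i+k+1$ (odd case), both immediate from $k\ge i+2$. Every other step is a direct substitution into the identities \eqref{eq:ivsi+1}, \eqref{eq:lb_prob_U=u}, \eqref{eq:P_U<2^i} and \eqref{lb:ratio} already established above, so no genuinely new idea is required.
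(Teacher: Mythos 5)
Your proposal is correct and follows essentially the same route as the paper's proof: the same three-way split into the terms below $2^i$, the dominant terms at $2^i$ and $2^{i+1}$ (bounded via $\PP(U=2^{i+1})=\tfrac14\PP(U=2^i)$ and $\PP(U=2^i)\le\tfrac45$), and the tail $k\ge i+2$ controlled through the ratio bound \eqref{eq:ubratio}. The only differences are cosmetic — the paper sums the tail via an integral comparison $\sum_j 2^je^{-2^{j-1}}\le 2/(e^4\ln 2)$ rather than your cruder $2e^{-112}$, and your separate treatment of $i=2$ is a welcome touch since the paper's use of \eqref{eq:P_U<2^i} implicitly assumes $i\ge 4$.
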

\begin{proof}
    By \cref{eq:ubratio}, for any  $j\geq i+2$, we have: 
\begin{align}
     \frac{\mathbb{P}\left(U=2^{j}\right)}{\mathbb{P}\left(U=2^i\right)}&\leq e^{2^{i+j+1}-3\times 4^{j-1}-4^i}=e^{-2^{j-1}\left(3\times 2^{j-1}-2 \times 2^{i+1}\right)-4^i},\nonumber\\
     &\leq e^{-4^i-2^{j-1}}\label{eq:bound2^j}.
\end{align}

This implies:
\[
\sum_{j> i+1}2^j \frac{\mathbb{P}_u\left(U=2^{j}\right)}{\mathbb{P}\left(U=2^i\right)} \leq e^{-4^i}\sum_{j> i+1}2^je^{-2^{j-1}}\leq \left(\sum_{j=4}^\infty 2^j e^{-2^{j-1}}\right) e^{-4^i}.
\]
We turn to bounding $\sum_{j=4}^\infty 2^j e^{-2^{j-1}}$:
\begin{align*}
    \sum_{j=4}^\infty 2^j e^{-2^{j-1}} &= 2\sum_{j=4}^\infty 2^{j-1} e^{-2^{j-1}} = 2\sum_{j=3}^\infty 2^{j} e^{-2^{j}},\\
    &\le 2\int_{2}^\infty 2^xe^{-2^x}dx,\\
    &=2\int_{2}^\infty \frac{1}{y\ln 2}ye^{-y}dy,\\
    &=\frac{2}{e^4\ln 2},
\end{align*}
where the second inequality holds as $2^x e^{-2^x}$ is decreasing for $x>0$. Hence, for any $i\geq 2$:

\begin{equation}\label{eq:boundlargenumbers}
\sum_{j> i+1}2^j \frac{\mathbb{P}_u\left(U=2^{j}\right)}{\mathbb{P}\left(U=2^i\right)}\leq \frac{2}{e^4\ln 2}e^{-4^i}\leq \frac{1}{100}2^i.
\end{equation}

We are now ready to upper bound $\mu_u$:
\begin{align*}
   \mu_u &\leq 2^{i-1}\mathbb P(U<2^i)+2^i\mathbb P(U=2^i)+2^{i+1}\mathbb P(U=2^{i+1})+\sum_{j>i+1}2^j\mathbb P(U=2^j) \\
   &\le 2^i P(U=2^i) \left(\frac{1}{2}\frac{P(U<2^i)}{P(U=2^i)}+1+ 2\frac{P(U=2^{i+1})}{P(U=2^i)}+\frac{1}{2^i}\sum_{j> i+1}2^j \frac{\mathbb{P}_u\left(U=2^{j}\right)}{\mathbb{P}\left(U=2^i\right)}\right)\\
      &\le 2^i \frac{4}{5} \left(\frac{1}{2}\times 0.001+1+ \frac{1}{2}+\frac{1}{100}\right)\\
   &<1.26 \cdot 2^i\,,
\end{align*}

where we get the third inequality from \cref{eq:lb_prob_U=u,eq:P_U<2^i,eq:ivsi+1,eq:boundlargenumbers}.

\end{proof}
    \begin{proof}[Proof for \cref{thm:informal-self-conc-lb}]
        Let $U\sim Q_u$ where $u=2^{i+1}$ for $i\ge 4$ an even number. We first derive an upper bound on the variance $\EE[(U-\mu(u))^2]\le \EE[U^2]$.
        \begin{align}
        \mathbb E[U^2]&\leq 2^{2i}\cdot \mathbb P(U<2^i)+2^{2i}\cdot \mathbb P(U=2^i)+2^{2i+2}\cdot \mathbb P(U=2^{i+1})+\sum_{j>i+1}2^{2j}\mathbb P(U=2^j)\nonumber\\
   &\leq 2^{2i}P(U=2^i)\left(\frac{\mathbb P(U<2^i)}{\mathbb P(U=2^i)}+1+4\frac{\mathbb P(U=2^{i+1})}{\mathbb P(U=2^i)}+\frac{1}{2^{2i}}\sum_{j>i+1}2^{2j}\frac{\mathbb P(U=2^j)}{\mathbb P(U=2^i)}\right)\nonumber\\
     &\leq 2^{2i}\frac{4}{5}\left(0.112+1+1+\frac{e^{-4^i}}{2^{2i}}\sum_{j>i+1}2^{2j}e^{-2^{j-1}}\right)\,,\label{eq:U2inter}
    \end{align}
    where the last inequality is obtained from \cref{eq:lb_prob_U=u,eq:P_U<2^i,eq:ivsi+1,eq:bound2^j}.

    We inspect the infinite series in the above inequality  . Note that $2^{2j}e^{-2^{j-1}}$ is decreasing for $j\ge 1$.
    \begin{align*}
        \sum_{j>i+1}2^{2j}e^{-2^{j-1}}&\leq4\sum_{j\ge 2}4^je^{-2^j}\\
        &\le 4\int_{1}^\infty 4^xe^{-2^x}dx\\
        &=4\int_{2}^\infty \frac{1}{y\ln 2}y^2e^{-y}dy\\
        &=\frac{12}{\ln 2 \cdot e^2}.
     \end{align*}
For all $i\ge 1$, it follows that 
    \begin{equation*}
        \frac{e^{-4^i}}{2^{2i}}\frac{12}{\ln 2 \cdot e^2}\le 0.02.
    \end{equation*}
    Plugging into Equation \ref{eq:U2inter},  we have:
    \begin{equation}\label{eq:ubu2}
        \mathbb{E}[(U-\mu(u))^2]\leq 1.7\times 2^{2i},
    \end{equation}
    On the other hand, by \cref{lem:lower_mean_bounds,lem:ubmean}, we have that
    \begin{equation*}
        U-\mu(u)\ge \begin{cases}
            \underbrace{0.74 \cdot 2^i}_{A_{\mathrm{large}}}&\text{if }U\ge 2^{i+1}\\
            \underbrace{-0.26 \cdot 2^i}_{A_{\mathrm{medium}}}&\text{if }U= 2^{i}\\
            \underbrace{-1.26 \cdot 2^i}_{A_\mathrm{small}}&\text{if }U< 2^{i}.
        \end{cases}
    \end{equation*}
    Then we can lower bound the third central moment $\EE[|U-\mu(u)|^3]$.
     \begin{align*}
     \mathbb E[(U-\mu(u))^3]&\ge \mathbb P(U\ge 2^{i+1})A_{\mathrm{large}}^3 + \mathbb P(U= 2^i)A_{\mathrm{medium}}^3 + \mathbb P(U<2^i)A_{\mathrm{small}}^3\\
     &=[1-\mathbb P(U=2^i)-\mathbb P(U<2^i)]\cdot A_{\mathrm{large}}^3 + \mathbb P(U= 2^i)A_{\mathrm{medium}}^3 + \mathbb P(U<2^i)A_{\mathrm{small}}^3\\
     &=(0.74\cdot 2^i)^3 - \mathbb P(U=2^i)[(0.74\cdot2^i)^3+(0.26\cdot 2^i)^3]-\mathbb P(U<2^i)((0.74\cdot2^i)^3+(1.26\cdot 2^i)^3)\\
     &\geq(2^i)^3\left[0.74^3-0.8(0.74^3 + 0.26^3) - 0.8\times0.001(1.26^3+0.74^3)\right]\\
     &\ge 0.065(2^{i})^3.
     \end{align*}
    where the third inequality holds by \cref{eq:lb_prob_U=u,eq:P_U<2^i}.
Combining this last bound with \cref{eq:ubu2}, we obtain:
    \[
    \frac{ \mathbb{E}[(U-\mu(u))^3]}{ \mathbb{E}[(U-\mu(u))^2]}\geq \frac{0.065 }{1.7}2^i=0.038 u.
    \]
\end{proof}

\section{Bandit algorithm}
In this section, we present our analysis of \cref{alg:OFU-GLB}. The section is organized as follows. In  \cref{section:conf_set_construction}, we detail how we construct the confidence set. 
The proof that  $\theta_\star$ lies in the confidence set with high probability will be presented only in  \cref{section:conf_set_appendix} so that we can continue in \cref{section:regret_upper_bound} with the proof of the main result, \cref{thm:regret_bound}, bounding the regret. This result differs from 
 \cref{thm:informal-regret-bound} by providing additional detail about the choice of the parameters in the algorithm. 
The proof presented in \cref{section:regret_upper_bound} requires a number of technical lemmas that are presented as the proof develops. 
The proofs of these are postponed to subsequent sections.
Before the proof of these, we devote the next section  (\cref{section:self-conc-control}) to technical results on consequences of self-concordance which will be useful for the rest of the proofs.
This is followed in \cref{section:conf_set_appendix}
by the proof that the confidence set constructed indeed has the required coverage.
The next section (\cref{section:proof_of_conf_set_diam_H}) is devoted to proving \cref{lem:conf_set_diam_H} (``ellipsoidal diameter bound on the confidence set''), which is one of the two key results required for the proof of the main regret bound (besides the result on the coverage of the confidence set).
A self-bounding property of self-concordance functions (\cref{lem:dot-mu-sum}), which is the second main ingredient of the regret bound proof is shown in  \cref{sec:dot-mu-sum}.
Finally, for completeness, we present the (well known) elliptical potential lemma 
(stated here as \cref{lem:epl}) 
in \cref{section:aux_lem}. 

\subsection{Constructing the confidence set}\label{section:conf_set_construction}
The confidence set construction is based on first obtaining the parameter vector $\hat\theta_t$. This parameter vector is chosen to be the minimizer of the regularized negative log-likelihood function:
$\hat\theta_t = \argmin_{\theta\in \R^d} \cL(\theta;\cD_t)$ where $\cD_t = ((X_i,Y_i))_{i=1}^{t-1}$ is the data available in step $t$ and 
\begin{align}
   \cL(\theta;\cD_t,\lambda)
   =\frac{\lambda}{2}\|\theta\|^2 - \sum_{i=1}^{t-1}\log q(Y_i;X_i^\top\theta) \label{eq:nll}
\end{align}
where $\lambda>0$ is a tuning parameter (to be chosen later) and
$q(y;u) = \frac{dQ_u}{dQ}(y)$ is the density of $Q_u$ with respect to $Q$ at $y\in \R$.
It should be clear from the definitions that $q$ is well-defined.
The purpose of regularization is to ensure that the loss function has a unique optimizer even in the data poor regime. \todoc{Not quite a valid reason, but OK for submission.}

For the construction of the confidence set it will be useful to derive an equivalent expression for the loss $\cL$. For this, first note that the density $q$ satisfies 
$q(y;u)=\exp( y u -\psi_Q(u))$. 
Plugging this into the definition of $\cL$, we get
\begin{align*}
   \cL(\theta;\cD_t,\lambda)&=\frac{\lambda}{2}\|\theta\|^2+\sum_{i=1}^{t-1}
   (\psi(X_i^\top\theta)-Y_iX_i^\top\theta)\,.
\end{align*}
As it is well known, $\psi$ is a convex function of its argument (Theorem 1.13 of \citep{brown1986}) 
and hence $\theta\mapsto \cL(\theta;\cD)$ is strictly convex provided that $\lambda>0$.

For the confidence set construction we will need the non-constant part of the gradient of $\cL(\cdot;\cD_t)$, which we denote by $g_t$. We will also need the curvature of $\cL(\cdot;\cD_t)$, which we denote by $H_t$. These are
\begin{align*}
	g_t(\theta) &= \sum_{i=1}^{t-1} \mu(X_i^\top\theta)X_i+\lambda\theta
	\quad \text{ so that }  \quad \nabla_\theta\cL(\theta;\cD_t) =g_t(\theta)-\sum_{i=1}^{t-1} X_iY_i\,\,,\\
	\qquad \text{ and }\\
   H_t(\theta)&
    =\nabla^2_\theta\cL(\theta;\cD_t)= \lambda I +
		 \sum_{i=1}^{t-1} \dot\mu(X_i^\top\theta)X_iX_i^\top\,.
\end{align*}
The minimizer $\hat\theta_t = \argmax_{\theta\in \RR^d}\cL(\theta;\cD,\lambda)$ has the property that
\begin{equation*}
   \frac{\partial \cL(\theta;\cD_t,\lambda)}{\partial \theta}\Bigg\vert_{\theta=\hat\theta_t} =0\,.
\end{equation*}
This implies that
\begin{align*}
     g_t(\hat\theta_t)-\sum_{i=1}^t X_iY_i = 0.
\end{align*}
With this, we can introduce our confidence set construction, which is based on the work of 
\cite{janz2023exploration}.
For $\delta\in (0,1]$, we let
\begin{equation}\label{eq:conf_set}
   \cC_t^{\delta}(\hat\theta_t)=\left\{\theta\in \Theta\,:\,\left\|g_t(\theta)-g_t(\hat\theta_t)\right\|_{H_t^{-1}(\theta)}\le \gamma_t(\delta)\right\}\,,
\end{equation}
where for $M$ to be chosen later (in \cref{lem:confset}),
\begin{align}
    \lambda_T &= 1\vee \frac{2dM}{S_0}\log\left(e\sqrt{1+\frac{TL}{d}}\vee 1/\delta\right)\label{eq:lambda_t},\\
    \gamma_t(\delta) &= \sqrt{\lambda_T}\left(\frac{1}{2M}+S_0\right) + \frac{4Md}{\sqrt{\lambda_T}} \log\left(e\sqrt{1+\frac{tL}{d} }\vee 1/\delta\right)\spaced{for all} t\in [T]\,.\label{eq:gamma_t}
\end{align}
Here, $S_0=\sup \{ \|\theta\|\,:\, \theta\in \Theta \}$, \todoc{I wonder whether $S_0<\infty$ is really necessary. I am thinking of $Q=\textrm{Exp}(1)$. Then, no problem with unbounded negative values..
Ok to address this after submission.
\\ SL: We still need $\|\theta_\star\|\le S_0$ in the proof of confidence set. } 
as defined in the main body of the paper.
In the algorithm we then choose $\cC_t$ to be $\cC_t^\delta(\hat\theta_t)$ with a fixed value of $\delta\in [0,1]$ that bounds the failure probability of the algorithm.
The following lemma, whose proof is postponed to \cref{section:conf_set_appendix}, as mentioned beforehand, shows that the confidence sets $\cap_{t\ge 1}\cC_t^\delta(\hat\theta_t)$  have coverage $1-\delta$:
%
\begin{restatable}{lemma}{ConfSet}\label{lem:confset} 
Let \cref{ass:model,ass:bdd_var} hold and choose 
$M\ge \max(K/\log(2), 1/(c_1-S_1), 1/(c_2+S_2))$ in \cref{eq:lambda_t,eq:gamma_t},
where $\scfunc$ is any stretch function for $(Q_u)_{u\in [S_2,S_1]}$
and $K=\sup_{S_2\le u\le S_1} \scfunc(u)$.
Then, for the confidence set defined in \cref{eq:conf_set} and 
    for all $\delta\in (0,1]$,
    \[\PP(\forall t\ge 1, \theta_\star\in\cC_t^\delta(\hat\theta_t)) \ge 1-\delta.\]
\end{restatable}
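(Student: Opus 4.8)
The plan is to establish coverage of the confidence set $\cC_t^\delta(\hat\theta_t)$ by following the martingale concentration approach of \citet{janz2023exploration}, suitably adapted to our setting where $\cU_Q$ need not equal $\R$ and the self-concordance stretch factor $K$ may exceed $1$. First I would rewrite the event $\theta_\star \in \cC_t^\delta(\hat\theta_t)$ in a more workable form. Since $g_t(\hat\theta_t) = \sum_{i=1}^{t-1} X_i Y_i$ (the stationarity condition for $\hat\theta_t$), we have $g_t(\theta_\star) - g_t(\hat\theta_t) = \sum_{i=1}^{t-1}(\mu(X_i^\top\theta_\star) - Y_i)X_i + \lambda\theta_\star = -S_{t} + \lambda\theta_\star$, where $S_t = \sum_{i=1}^{t-1}\varepsilon_i X_i$ and $\varepsilon_i = Y_i - \mu(X_i^\top\theta_\star)$ is the centered noise in round $i$, which is conditionally (given the history and $X_i$) distributed as the centering of $Q_{X_i^\top\theta_\star}$. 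Thus, by the triangle inequality in the $H_t^{-1}(\theta_\star)$-norm, it suffices to bound $\|S_t\|_{H_t^{-1}(\theta_\star)}$ and $\lambda\|\theta_\star\|_{H_t^{-1}(\theta_\star)} \le \sqrt{\lambda}\,S_0$ (using $H_t(\theta_\star) \succeq \lambda I$ and $\|\theta_\star\|\le S_0$) separately, and show the former is at most $\gamma_t(\delta) - \sqrt{\lambda_T} S_0$ for all $t$ simultaneously with probability $1-\delta$.

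The core step is a self-normalized concentration bound on $\|S_t\|_{H_t^{-1}(\theta_\star)}$. Here is where self-concordance enters: by \cref{lem:tiltsubexp} (with $\cU = [S_2,S_1]$ and $K = \sup_{S_2\le u\le S_1}\scfunc(u)$), each centered noise $\varepsilon_i$ is conditionally subexponential with variance proxy $2\dot\mu(X_i^\top\theta_\star)$ and scale parameter governed by $\min(\log(2)/K,\, c_1 - S_1,\, c_2 + S_2)$ — precisely the reason for the choice $M \ge \max(K/\log(2),\, 1/(c_1-S_1),\, 1/(c_2+S_2))$, which guarantees $1/M$ is below this scale threshold. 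The weights in $H_t(\theta_\star) = \lambda I + \sum_{i=1}^{t-1}\dot\mu(X_i^\top\theta_\star)X_iX_i^\top$ match (up to the factor $2$) the conditional variance proxies of the $\varepsilon_i$, so $\{S_t\}$ together with $\{H_t(\theta_\star)\}$ forms exactly the kind of heteroscedastic self-normalized process handled by the subexponential version of the method-of-mixtures / Freedman-type argument used in \citet[Lemma on confidence sets]{janz2023exploration}. Applying that bound with the stated $\lambda_T$ yields, with probability at least $1-\delta$, that for all $t\ge 1$,
\begin{align*}
\|S_t\|_{H_t^{-1}(\theta_\star)} \le \frac{\sqrt{\lambda_T}}{2M} + \frac{4Md}{\sqrt{\lambda_T}}\log\!\left(e\sqrt{1+\tfrac{tL}{d}}\vee 1/\delta\right),
\end{align*}
where the $\log\det$ term is controlled using \cref{ass:bdd_var} ($\dot\mu \le L$), which gives $\det(H_t(\theta_\star)) \le (\lambda_T + tL/d)^d$ via AM–GM, and the choice of $\lambda_T$ absorbs the subexponential ``burn-in'' correction into the stated form. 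Combining with $\lambda\|\theta_\star\|_{H_t^{-1}(\theta_\star)} \le \sqrt{\lambda_T}\,S_0$ gives $\|g_t(\theta_\star) - g_t(\hat\theta_t)\|_{H_t^{-1}(\theta_\star)} \le \gamma_t(\delta)$, which is exactly membership in $\cC_t^\delta(\hat\theta_t)$.

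The main obstacle I anticipate is the careful bookkeeping in the self-normalized subexponential concentration inequality: unlike the subgaussian case, the mixture argument requires the pointwise bound $\psi_{Q_{u}}(s) \le s\mu(u) + s^2\dot\mu(u)$ to hold on a fixed neighborhood of $0$ (the interval from \cref{lem:tiltsubexp}), which forces the mixing distribution to be supported on a bounded region and introduces the additive $\frac{1}{2M}$-type slack and the precise threshold conditions on $M$. One must verify that the interval $[-\log(2)/K,\log(2)/K]\cap(a - \inf\cU, b - \sup\cU)$ from \cref{lem:tiltsubexp} indeed contains $[-1/M, 1/M]$ under the stated lower bound on $M$, and that the constants propagate correctly through the stopping-time (maximal inequality) step so that the bound holds for all $t$ simultaneously — this is the technically delicate adaptation of \citet{janz2023exploration} referenced in the main text. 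A secondary subtlety is handling the degenerate case $\var(Q)=0$ (then all noise is zero, $H_t = \lambda I$, and coverage is trivial) and confirming that $K<\infty$ in the nondegenerate case, which is guaranteed by \cref{thm:informal-self-conc} since $[S_2,S_1]\subset\cU_Q^\circ$.
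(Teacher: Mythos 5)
Your proposal follows essentially the same route as the paper's proof: decompose $g_t(\theta_\star)-g_t(\hat\theta_t)$ into the self-normalized noise term $\bS_t$ plus $\lambda\theta_\star$, verify the conditional MGF condition $\E[\exp(s\varepsilon_t)\mid\cF_{t-1}]\le\exp(s^2\dot\mu(X_t^\top\theta_\star))$ for $|s|\le 1/M$ via \cref{lem:tiltsubexp} (which is exactly what the lower bound on $M$ is for), apply the self-normalized martingale bound of \citet{janz2023exploration} (\cref{thm:self_norm_martingale}), and control $\det(H_t(\theta_\star))$ using \cref{ass:bdd_var}. The argument and all key ingredients match; no gaps.
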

Note that \cref{thm:informal-self-conc} and \cref{ass:bdd_var} guarantees the existence of a stretch function $\scfunc$ mentioned in the theorem. 
\begin{center}
\framebox{In the remainder of this section, we will fix $\scfunc$ to one such stretch function.}
\end{center}
In general, here, one wants to use the smallest such stretch function (i.e., $\scfunc = \scfunc_Q$).
When $\scfunc_Q$ is not available, 
in the lack of a better choice for $\scfunc$, the choice worked out in \cref{thm:informal-self-conc} can be used.

\subsection{Proof of regret upper bound}\label{section:regret_upper_bound}
Let $E_\delta$ be the event that $E_\delta=\{\theta_\star\in \cC_t^\delta(\hat\theta_t)\}$ which by \cref{lem:confset} holds with probability at least $1-\delta$. 
For the next theorem, recall that $S_0=\sup_{\theta\in \Theta}\|\theta\|$ and $S_1=\sup\cU, S_2=\inf\cU$. 

\begin{theorem}\label{thm:regret_bound}
    Let $\delta\in (0,1]$ and $T$ a positive integer
     and consider a well-posed 
    GLB model $\cG = (\cX,\Theta,\cQ)$ 
    and assume that \cref{ass:bdd_var,ass:model} hold.
    Then, by setting $\cC_t = \cC_t^\delta(\hat\theta_t)$, for any $\theta_\star\in \Theta$,
    with probability at least $1-\delta$,
    the regret $\regret(T)$ of \cref{alg:OFU-GLB} when it interacts with the GLB instance
    specified by $\theta_\star$ can be upper bounded by,
    \begin{align*}
        \regret(T)&\le 8c\,\gamma_T(\delta)\sqrt{d\dot\mu(x_\star^\top \theta_\star)(1+L/\lambda)\log\left(1+\frac{LT}{d\lambda}\right)T}\\
        &+ 8c^2\,\gamma^2_T(\delta) L^2K\kappa\log(\lambda+T/d)\\
        &+32c^2\,\gamma^2_T(\delta) \cdot{K d(1+L/\lambda)\log\left(1+\frac{LT}{d\lambda}\right)},
    \end{align*}
    where $c=(1+2K(S_1-S_2))$ and
    \begin{align}
        K=\sup_{S_2\le u\le S_1} \scfunc(u)\,.\label{eq:defn_K}
    \end{align}
\end{theorem}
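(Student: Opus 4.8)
The plan is to follow the standard optimism-based regret decomposition, but carefully tracking the self-concordance constant $K$ and the instance-dependent variance $\dot\mu(x_\star^\top\theta_\star)$ through every step. First I would condition on the event $E_\delta$ (coverage of the confidence set, \cref{lem:confset}), which holds with probability at least $1-\delta$; on this event $\theta_\star\in\cC_t^\delta(\hat\theta_t)$ for all $t$, so by the optimistic choice $(X_t,\theta_t)\in\argmax_{(x,\theta)\in\cX\times\cC_t}x^\top\theta$ we have $\mu(X_t^\top\theta_t)\ge\mu(x_\star^\top\theta_\star)$ (using that $\mu$ is increasing and that $(x_\star,\theta_\star)$ is feasible). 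Thus the per-round regret $r_t = \mu(x_\star^\top\theta_\star)-\mu(X_t^\top\theta_\star)\le \mu(X_t^\top\theta_t)-\mu(X_t^\top\theta_\star)$. By the mean value theorem (or a self-concordance-aware integral form of Taylor's theorem), this is bounded by something of order $\dot\mu(X_t^\top\theta_\star)\,|X_t^\top(\theta_t-\theta_\star)|$ up to a multiplicative factor controlled by the stretch function; here is where the constant $c=(1+2K(S_1-S_2))$ enters, coming from a self-concordance lemma that relates $\dot\mu$ evaluated at two nearby points whose arguments both lie in $[S_2,S_1]$ (so their gap is at most $S_1-S_2$).

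Next I would control $|X_t^\top(\theta_t-\theta_\star)|$. Using the definition of the confidence set via $\|g_t(\theta)-g_t(\hat\theta_t)\|_{H_t^{-1}(\theta)}\le\gamma_t(\delta)$ together with an exact Taylor expansion $g_t(\theta_t)-g_t(\theta_\star)=\bar H_t(\theta_t-\theta_\star)$ for an averaged curvature matrix $\bar H_t$, and a self-concordance comparison between $\bar H_t$, $H_t(\theta_t)$ and $H_t(\theta_\star)$ (again losing a factor that is polynomial in $K(S_1-S_2)$), one obtains $\|\theta_t-\theta_\star\|_{H_t(\theta_\star)}\lesssim c\,\gamma_t(\delta)$. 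Then $|X_t^\top(\theta_t-\theta_\star)|\le\|\theta_t-\theta_\star\|_{H_t(\theta_\star)}\|X_t\|_{H_t^{-1}(\theta_\star)}\lesssim c\,\gamma_t(\delta)\|X_t\|_{H_t^{-1}(\theta_\star)}$. So $r_t\lesssim c\,\gamma_t(\delta)\,\dot\mu(X_t^\top\theta_\star)\,\|X_t\|_{H_t^{-1}(\theta_\star)}$, and also the crude bound $r_t\le 2L$ (or $r_t\le$ something of order $L$) always holds. Summing, I split $R(T)=\sum_t r_t$ into rounds where $r_t$ is "small" and rounds where it is not, or more cleanly use $r_t\le r_t\wedge(\text{crude bound})$ and Cauchy–Schwarz: $\sum_t r_t\le \sqrt{T}\sqrt{\sum_t r_t\cdot(r_t\wedge 2L)}\lesssim c\,\gamma_T(\delta)\sqrt{T}\sqrt{\sum_t \dot\mu(X_t^\top\theta_\star)\,(r_t\wedge\cdots)\,\|X_t\|^2_{H_t^{-1}(\theta_\star)}}$, and then invoke the self-bounding lemma (\cref{lem:dot-mu-sum}) which controls $\sum_t \dot\mu(X_t^\top\theta_\star)\|X_t\|^2_{H_t^{-1}(\theta_\star)}$ or the weighted version in terms of $\dot\mu(x_\star^\top\theta_\star)$, $\kappa$, $K$, $L$, and a $\log$ factor coming from the elliptical potential lemma (\cref{lem:epl}). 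The elliptical potential lemma gives $\sum_t \dot\mu(X_t^\top\theta_\star)\|X_t\|^2_{H_t^{-1}(\theta_\star)}\lesssim d(1+L/\lambda)\log(1+LT/(d\lambda))$, which produces the $d(1+L/\lambda)\log(\cdots)$ factors in the second and third terms of the bound.

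Assembling: the leading term arises from the Cauchy–Schwarz step where we pull out $\sqrt{\dot\mu(x_\star^\top\theta_\star)}$ — the self-bounding lemma shows the relevant sum is close to $\dot\mu(x_\star^\top\theta_\star)$ times the log-determinant term plus lower-order corrections proportional to $\kappa=1/\dot\mu(x_\star^\top\theta_\star)$ and to $K$ — yielding $8c\,\gamma_T(\delta)\sqrt{d\,\dot\mu(x_\star^\top\theta_\star)(1+L/\lambda)\log(1+LT/(d\lambda))\,T}$. The correction terms (the rounds where the "variance at $\theta_\star$" comparison is loose, i.e., where $X_t^\top\theta_\star$ is far from $x_\star^\top\theta_\star$) contribute the $K\kappa\log(\lambda+T/d)$ term and the $Kd(1+L/\lambda)\log(\cdots)$ term, each carrying an extra $c^2\gamma_T^2(\delta)L^2$ or $c^2\gamma_T^2(\delta)$ factor from squaring the per-step bound. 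Finally, on the complement of $E_\delta$ (probability $\le\delta$) we do not claim anything, so the bound holds with probability $\ge 1-\delta$. The main obstacle is the self-concordance bookkeeping in the second paragraph: getting clean constants when transferring between $H_t(\theta_\star)$, $H_t(\theta_t)$, $\bar H_t$ and $H_t(\hat\theta_t)$, and ensuring the factor $c=(1+2K(S_1-S_2))$ (rather than something worse like $e^{K(S_1-S_2)}$) suffices — this is exactly where the "non-constant stretch factor" generalization of \cite{janz2023exploration} and \cite{Abeille2020InstanceWiseMA} must be handled with care, and where the $K>1$ propagation mentioned in the text needs to be done explicitly rather than cited.
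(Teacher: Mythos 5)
Your overall architecture matches the paper's: condition on $E_\delta$, use optimism, bound $\|\theta_t-\theta_\star\|_{H_t(\theta_\star)}\le 2c\,\gamma_t(\delta)$ via the ellipsoidal-diameter argument (\cref{lem:conf_set_diam_H}, built from the exact identity $g_t(\theta_1)-g_t(\theta_2)=G_t(\theta_1,\theta_2)(\theta_1-\theta_2)$ and the comparison $G_t\succeq c^{-1}H_t$), apply the elliptical potential lemma, and close with the self-bounding lemma. The factor $c=1+2K(S_1-S_2)$ does enter exactly where you guess, through \cref{lem:alpha_dotmu_relate}.

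The genuine gap is in your treatment of the nonlinearity of $\mu$, and hence in where the $\kappa$ term comes from. The paper does not use a first-order mean-value bound with a multiplicative self-concordance correction; it performs an exact second-order Taylor expansion, $\regret(T)=R_1(T)+R_2(T)$ with $R_1(T)=\sum_t\dot\mu(X_t^\top\theta_\star)(x_\star-X_t)^\top\theta_\star$ and $R_2(T)=\tfrac12\sum_t\ddot\mu(\xi_t)\bigl((x_\star-X_t)^\top\theta_\star\bigr)^2$. The first-order part yields the leading term via $\dot\mu=\sqrt{\dot\mu}\cdot\sqrt{\dot\mu}$, Cauchy--Schwarz, the elliptical potential lemma applied to $\sum_t\dot\mu(X_t^\top\theta_\star)\|X_t\|^2_{H_t^{-1}(\theta_\star)}$, and then \cref{lem:dot-mu-sum} applied to the \emph{unweighted} sum, $\sum_t\dot\mu(X_t^\top\theta_\star)\le T\dot\mu(x_\star^\top\theta_\star)+K\,\regret(T)$; this produces $\regret(T)\le A\sqrt{\regret(T)}+B$ and hence $\regret(T)\le 2A^2+2B$, which is where the third displayed term originates. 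The second-order remainder is bounded separately by $\ddot\mu(\xi_t)\le\scfunc(\xi_t)\dot\mu(\xi_t)\le KL$ together with $H_t^{-1}(\theta_\star)\preceq\kappa V_t^{-1}$ and the elliptical potential lemma on $\sum_t\|X_t\|^2_{V_t^{-1}}$ --- this is the \emph{only} place $\kappa$ enters. Your proposal attributes the $\kappa$ term vaguely to ``rounds where the comparison is loose,'' your split $\sum_t r_t\le\sqrt{T}\sqrt{\sum_t r_t(r_t\wedge 2L)}$ does not obviously generate an additive $\kappa\log(\lambda+T/d)$ term, and you misassign the two lemmas (the self-bounding lemma does not control the $H_t^{-1}$-weighted sum). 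As written, the argument would not reproduce the stated $8c^2\gamma_T^2(\delta)L^2K\kappa\log(\lambda+T/d)$ term, so the second-order remainder needs to be isolated and bounded explicitly as in the paper.
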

\begin{proof}
We first consider the case that the base distribution has $0$ variance, which implies that $Q$ is a Dirac.
As discussed beforehand, and as it is easy to see it, in this case $\cU_Q=\R$, $Q_u = Q$ for any $u\in \R$. \todoc{Could make this into a proposition. But then need to use this proposition every time the argument is used. After submission.}
Hence, all arms have the same payoff and all algorithm incur zero regret. 
In the rest of this proof, we assume that $\mathrm{Var}(Q)>0$. 
Since $\mu(\cdot)=\dot\psi_Q(\cdot)$ is infinitely differentiable (\cref{prop:moments_of_NEF}), we can perform a second-order Taylor expansion on the regret
\begin{align*}
    \regret(T) &= \sum_{t=1}^T \mu(x_\star ^\top \theta_\star) - \mu(X_t^\top\theta_\star)\\
    &=\underbrace{\sum_{t=1}^T \dot\mu(X_t^\top\theta_\star)(x_\star - X_t)^\top\theta_\star}_{R_1(T)} + \underbrace{\frac{1}{2}\sum_{t=1}^T \ddot\mu(\xi_t)((x_\star - X_t)^\top\theta_\star)^2}_{R_2(T)},
\end{align*}
    where $\xi_t$ is between $X_t^\top\theta_\star$ and $x_\star^\top\theta_\star$ for all $t\in [T]$. On event $E_\delta$, by definition of $X_t, \theta_t$ (in \cref{alg:OFU-GLB}), it holds that 
    $x_\star^\top \theta_\star\le X_t^\top\theta_t$. 
    Observe that $\gamma_t(\delta)$ (\cref{eq:gamma_t}) is increasing in $t$, we have that $\gamma_t(\delta)\le \gamma_T(\delta)$ for all $t\in [T]$. Then we can bound $R_1(T)$ as follows
    \begin{align*}
        R_1(T)&=\sum_{t=1}^T \dot\mu(X_t^\top\theta_\star)(x_\star - X_t)^\top\theta_\star\\
        &\le \sum_{t=1}^T \dot\mu(X_t^\top\theta_\star)X_t^\top (\theta_t-\theta_\star)\\
        &\le \sum_{t=1}^T \dot\mu(X_t^\top\theta_\star)\|X_t\|_{H_t^{-1}(\theta_\star)}\|\theta_t-\theta_\star\|_{H_t(\theta_\star)}
    \end{align*}
    where the last inequality is due to Cauchy-Schwarz. Since $\theta_t$ and $\theta_\star$ are all in the confidence set on $E_\delta$, we are able to bound $\|\theta_t-\theta_\star\|_{H_t(\theta_\star)}$ by the following lemma that exploits the properties of confidence set as well as self-concordant functions. This lemma is a variation of proposition 4 of \citet{Abeille2020InstanceWiseMA} where they show the result for logistic function that is $1$-self-concordant. 
\begin{restatable}[$\cC_t^\delta(\hat\theta_t)$ has small ellipsoidal diameters] {lemma}{ConfSetDiamH}\label{lem:conf_set_diam_H}
    Under \cref{ass:bdd_var,ass:model}, for all $\theta_1,\theta_2\in \cC_t^\delta(\hat\theta_t)$, it follows that 
    \begin{align*}
        \|\theta_1-\theta_2\|_{H_t(\theta_1)}\vee  \|\theta_1-\theta_2\|_{H_t(\theta_2)}\le 2(1+2K\cdot (S_1-S_2))\gamma_t(\delta),
    \end{align*}
    where $K$ is defined in \cref{eq:defn_K}.
\end{restatable}
By \cref{lem:conf_set_diam_H}, we can upper bound $R_1(T)$ to be
\begin{align*}
    R_1(T)\le \sum_{t=1}^T \dot\mu(X_t^\top\theta_\star)\|X_t\|_{H_t^{-1}(\theta_\star)} \cdot 2(1+2K(S_1-S_2))\gamma_T(\delta).
\end{align*}
Denote $A_t=\sqrt{\dot\mu(X_t^\top\theta_\star)}X_t$ and we have that $H_t(\theta_\star)= \sum_{s=1}^t A_tA_t^\top+\lambda I$ as well as $\|A_t\|\le \sqrt{L}\le L$ where the second inequality is because WLOG we assume $L\ge 1$ in \cref{ass:bdd_var}. We can bound $R_1(T)$ in the terms of $A_t$.
    \begin{align*}
        R_1(T) &\le 2(1+2K(S_1-S_2))\gamma_T(\delta)\sum_{t=1}^T \sqrt{\dot\mu(X_t^\top\theta_\star)\|X_t\|_{H_t^{-1}(\theta_\star)}^2}\sqrt{\dot\mu(X_t^\top\theta_\star)}\\
        &\le 2(1+2K(S_1-S_2))\gamma_T(\delta)\sqrt{\sum_{t=1}^T\dot\mu(X_t^\top\theta_\star)\|X_t\|_{H_t^{-1}(\theta_\star)}^2}\sqrt{\sum_{t=1}^T \dot\mu(X_t^\top\theta_\star)}\tag{Cauchy-Schwarz}\\
        &= 2(1+2K (S_1-S_2))\gamma_T(\delta)\sqrt{\sum_{t=1}^T \|A_t\|^2_{H_t^{-1}(\theta_\star)}}\sqrt{\sum_{t=1}^T \dot\mu(X_t^\top\theta_\star)}\\
        &\le 4(1+2K(S_1-S_2))\gamma_T(\delta)\sqrt{d(1+L/\lambda)\log\left(1+\frac{LT}{d\lambda}\right)}\sqrt{\sum_{t=1}^T \dot\mu(X_t^\top\theta_\star)}
    \end{align*}
     where in the last step we use elliptical potential lemma of \citet{abbasi2011improved}, which, for easy of reference, we give in \cref{lem:epl}.
    We now start to bound $R_2(T)$. For convenience, we throw away the factor of $1/2$.
    \begin{align*}
        R_2(T)&\le \sum_{t=1}^T \ddot\mu(\xi_t)((x_\star - X_t)^\top\theta_\star)^2\\
        &\le \sum_{t=1}^T \ddot\mu(\xi_t)(X_t^\top(\theta_t-\theta_\star))^2\tag{$X_t^\top\theta_t\ge x_\star^\top\theta_\star\ge X_t^\top\theta_\star$}\\
        &\le \sum_{t=1}^T\ddot\mu(\xi_t)\norm{X_t}_{H_t^{-1}(\theta_\star)}^2\|\theta_t-\theta_\star\|_{H_t(\theta_\star)}^2\\
        &\le 4(1+2K (S_1-S_2))^2\gamma_T(\delta)^2\sum_{t=1}^T\ddot\mu(\xi_t)\norm{X_t}_{H_t^{-1}(\theta_\star)}^2
    \end{align*}
    where in the last inequality we use \cref{lem:conf_set_diam_H}. By definition of self-concordant function, we have that $\ddot\mu(\xi_t)\le \scfunc(\xi_t)\dot\mu(\xi_t)$. Since $\xi_t$ is between $x_\star^\top\theta_\star$ and $X_t^\top\theta_\star\,$. Note that $\scfunc$ defined in \cref{thm:informal-self-conc} is increasing on $[0,c_1)$ and decreasing on $(-c_2,0)$, which gives us $\scfunc(\xi_t)\le\scfunc(X_t^\top\theta_\star)\vee \scfunc(x_\star^\top\theta_\star) \le K$. 
Let $V_t=\lambda I + \sum_{i=1}^t X_iX_i^\top$.
    We hence have
        \begin{align*}
        R_2(T)&\le 4(1+2K (S_1-S_2))^2\gamma_T(\delta)^2\sum_{t=1}^T K\dot\mu(\xi_t)\norm{X_t}_{H_t^{-1}(\theta_\star)}^2\\
        &\le 4(1+2K(S_1-S_2))^2\gamma_T(\delta)^2KL\sum_{t=1}^T\norm{X_t}_{H_t^{-1}(\theta_\star)}^2\tag{$\dot\mu(\cdot)\le L$ (\cref{ass:bdd_var})}\\
        &\le 4(1+2K(S_1-S_2))^2\gamma_T(\delta)^2KL\kappa\cdot\sum_{t=1}^T\norm{X_t}_{V_t^{-1}}^2\tag{$H_t^{-1}(\theta_\star)\preceq \kappa V_t^{-1}$}\\
        &\le 4(1+2K(S_1-S_2))^2\gamma_T(\delta)^2KL\kappa\cdot L\log(\lambda+T/d)\tag{\cref{lem:epl}}.
    \end{align*}
    Putting the bound on $R_1(T)$ and $R_2(T)$ together, we have that
    \begin{align}
        \regret(T)&=R_1(T)+R_2(T)\notag\\
        &\le 4(1+2K(S_1-S_2))\gamma_T(\delta)\sqrt{d(1+L/\lambda)\log\left(1+\frac{LT}{d\lambda}\right)}\sqrt{\sum_{t=1}^T \dot\mu(X_t^\top\theta_\star)}\notag\\
        &+4(1+2K(S_1-S_2))^2\gamma_T(\delta)^2L^2K\kappa\log(\lambda+T/d).\label{eq:before_dot_mu_sum}
    \end{align}
    We mimic the trick used in \citet{janz2023exploration} to bound $\sqrt{\sum_{t=1}^T \dot\mu(X_t^\top\theta_\star)}$ which was originally proposed by \citet{Abeille2020InstanceWiseMA}. We present the following lemma that is abstracted out from Claim 14 of \citet{janz2023exploration}.
    \begin{restatable}[Self-bounding property of self-concordance functions]{lemma}{DotMuSum}\label{lem:dot-mu-sum}
Let $\cV=[a,b]$, a closed, nonempty interval over the reals,
$f$ a real valued function defined over an interval of the reals 
that is twice continuously differentiable over $\cV$
such that for some $\selfconc:\cV\to \R_+$, $|\ddot f(v)|\le \selfconc(v) \dot f(v)$ holds for all $v\in \cV$. Assume that $A=\sup_{v\in \cV} \scfunc(v)<\infty$. 
Furthermore, 
assume that either $\dot f$ is identically zero over $\cV$, or $\dot f$ is positive valued over $\cV$.
    For $n$ a positive integer, 
    let $\{a_t\}_{t=1}^n\subset \cV$. Then, 
    \begin{equation*}
        \sum_{t=1}^n \dot f(a_t) \le n \dot f(b) + A\sum_{t=1}^n f(b)- f(a_t).
    \end{equation*}
\end{restatable}
We apply this lemma with $f=\mu$, $[a,b]=[S_2,x_\star^\top\theta_\star]\subset [S_2,S_1]$ and $\scfunc$ restricted to $[a,b]$. Then, all the conditions of the lemma are satisfied by our choice of $\scfunc$. Furthermore,
 $A=\sup_{v\in [S_2,x_\star^\top\theta_\star]}\scfunc(v)\le K<+\infty$.  
 Hence, all the conditions of the lemma are verified. Hence, 
 \begin{align}
    \sqrt{\sum_{t=1}^T\dot\mu(X_t^\top\theta_\star)}&\le \sqrt{T\dot\mu(x_\star^\top\theta_\star)+K\regret(T)}\notag\\
    &\le \sqrt{T\dot\mu(x_\star^\top\theta_\star)}+\sqrt{K\regret(T)}\label{eq:dot_mu_sum}.
\end{align}
Plug \cref{eq:dot_mu_sum} into \cref{eq:before_dot_mu_sum},
    \begin{align*}
        \regret(T)&\le  4(1+2K(S_1-S_2))\gamma_T(\delta)\sqrt{d\dot\mu(x_\star^\top \theta_\star)(1+L/\lambda)\log\left(1+\frac{LT}{d\lambda}\right){T }}\\
        &+4(1+2K(S_1-S_2))^2\gamma_T(\delta)^2L^2K\kappa\log(\lambda+T/d)\\
        &+4(1+2K(S_1-S_2))\gamma_T(\delta)\sqrt{Kd(1+L/\lambda)\log\left(1+\frac{LT}{d\lambda}\right)}\sqrt{\regret(T)}.
    \end{align*}
    Let
    \begin{align*}
        A&=4(1+2K(S_1-S_2))\gamma_T(\delta)\sqrt{K d(1+L/\lambda)\log\left(1+\frac{LT}{d\lambda}\right)},\\
        B&=4(1+2K(S_1-S_2))\gamma_T(\delta)\sqrt{d\dot\mu(x_\star^\top \theta_\star)(1+L/\lambda)\log\left(1+\frac{LT}{d\lambda}\right)T}\\
        &+4(1+2K(S_1-S_2))^2\gamma_T(\delta)^2L^2K\kappa\log(\lambda+T/d),
    \end{align*}
    we can write out the inequality
    \begin{equation*}
        \regret(T)\le A\sqrt{\regret(T)}+B.
    \end{equation*}
    Solving it we have that 
    \begin{equation*}
        \regret(T)\le 2A^2+2B.
    \end{equation*}
    Plugging in the definition of $A$ and $B$ back, 
    \begin{align*}
        \regret(T)&\le 8(1+2K(S_1-S_2))\gamma_T(\delta)\sqrt{d\dot\mu(x_\star^\top \theta_\star)(1+L/\lambda)\log\left(1+\frac{LT}{d\lambda}\right)T}\\
        &+ 8(1+2K(S_1-S_2))^2\gamma_T(\delta)^2L^2K\kappa\log(\lambda+T/d)\\
        &+32(1+2K(S_1-S_2))^2\gamma_T(\delta)^2\cdot{K d(1+L/\lambda)\log\left(1+\frac{LT}{d\lambda}\right)}.\qedhere
    \end{align*}
\end{proof}

\subsection{Self-concordance control}\label{section:self-conc-control}
In this section we provide technical results about self-concordant functions which play important roles in confidence set construction and controlling the regret of \cref{alg:OFU-GLB}. Specifically, \cref{lem:dot_mu_rel} and \cref{lem:alpha_dotmu_relate} are used to show \cref{lem:conf_set_diam_H}, one of the key lemmas we use in the proof of \cref{thm:regret_bound}. \cref{lem:tiltsubexp} is used to justify the confidence set contains $\theta_\star$ with high probability (\cref{lem:confset}). 

The next lemma shows that for self-concordant NEFs, $\dot \mu$ is a smooth function of its argument.\todos{specify exactly which lemma is used where}
The lemma is essentially the same as Lemma 3 from \citet{janz2023exploration} (itself based on a result of  \citet{sun2019generalized}) and is updated only to match our definitions of self-concordance, which is a refinement of that used by  \citet{janz2023exploration}.  The proof (based on the proof of a similar result of  \citet{sun2019generalized}) is included for the convenience of the reader.
\begin{lemma}[Self-concordance to smoothness]\label{lem:gdot_mu_rel}
Let $\cU$ be an interval over the reals,
$\mu$ a real valued function defined over an interval of the  reals 
that is twice continuously differentiable over $\cU$
such that for some $\selfconc:\cU\to \R_+$, $|\ddot\mu(u)|\le \selfconc(u) \dot\mu(u)$ holds for all $u\in \cU$. Assume that $K=\sup_{u\in \cU} \scfunc(u)<\infty$.  
Assume that either $\dot\mu$ is identically zero over $\cU$, or $\dot\mu$ is positive valued over $\cU$.
Then, for any $u,u'\in \cU$,
    \begin{equation*}
        \dot\mu(u')\le \dot\mu(u)e^{K|u-u'|}.
    \end{equation*}
\end{lemma}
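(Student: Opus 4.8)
The plan is to control how fast $\dot\mu$ can change by setting up a differential inequality for $\log \dot\mu$ and integrating it. First I would dispatch the trivial case: if $\dot\mu \equiv 0$ on $\cU$, then the claimed inequality $\dot\mu(u') \le \dot\mu(u) e^{K|u-u'|} = 0$ holds with equality, so there is nothing to prove. Hence assume $\dot\mu > 0$ everywhere on $\cU$. Since $\mu$ is twice continuously differentiable and $\dot\mu$ is positive, the function $h := \log \dot\mu$ is well-defined and continuously differentiable on $\cU$, with $\dot h = \ddot\mu / \dot\mu$. The self-concordance hypothesis $|\ddot\mu(v)| \le \selfconc(v)\,\dot\mu(v) \le K \dot\mu(v)$ for all $v \in \cU$ then gives the pointwise bound $|\dot h(v)| \le K$ for all $v \in \cU$.

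Next I would integrate. Fix $u, u' \in \cU$ and assume without loss of generality that $u \le u'$ (the other case is symmetric, which is why the final bound carries $|u-u'|$). Since $\cU$ is an interval, $[u,u'] \subseteq \cU$, and by the fundamental theorem of calculus,
\[
h(u') - h(u) = \int_u^{u'} \dot h(v)\, dv \le \int_u^{u'} |\dot h(v)|\, dv \le K (u' - u) = K |u - u'|.
\]
Exponentiating both sides yields $\dot\mu(u') = e^{h(u')} \le e^{h(u)} e^{K|u-u'|} = \dot\mu(u)\, e^{K|u-u'|}$, which is exactly the claim. The symmetric case $u' \le u$ gives the same conclusion with the roles of $u$ and $u'$ swapped in the integral, and since the right-hand side only involves $|u-u'|$, we are done in all cases.

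There is essentially no hard part here: the only thing to be careful about is the case split on whether $\dot\mu$ vanishes (needed so that $\log\dot\mu$ is legitimate), and the fact that $\cU$ being an interval guarantees the segment between $u$ and $u'$ stays inside the domain where the self-concordance bound $|\ddot\mu| \le K\dot\mu$ is available. Everything else is a one-line Grönwall-type integration of $|(\log\dot\mu)'| \le K$. One could alternatively phrase it via the Grönwall inequality applied to $\dot\mu$ directly (using $\ddot\mu \le K\dot\mu$ in one direction and $-\ddot\mu \le K\dot\mu$ in the other), but the logarithmic substitution is the cleanest route and matches the style of \citet{sun2019generalized} and \citet{janz2023exploration}.
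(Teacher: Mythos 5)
Your proof is correct and is essentially the paper's argument: both reduce to bounding the derivative of $\log\dot\mu$ by $K$ via the self-concordance inequality and then integrating with the fundamental theorem of calculus (the paper merely parameterizes the segment as $\phi(t)=\dot\mu(u+t(u'-u))$ for $t\in[0,1]$ instead of integrating over $[u,u']$ directly). The case split on $\dot\mu\equiv 0$ versus $\dot\mu>0$ is also handled identically.
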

An immediate corollary of this lemma is that self-concordance of a NEF implies that the variance function, $\dot\mu$, of the NEF is smooth:
\begin{corollary}[Self-concordance to smoothness]\label{lem:dot_mu_rel}
Let $\cQ= (Q_u)_{u\in \cU}$ be self-concordant with stretch function $\selfconc:\cU \to \R_+$,
where $\cU$ is an interval 
and assume $K=\sup_{u\in \cU} \scfunc(u)<\infty$.  
Then for any $u,u'\in \cU$,
    \begin{equation*}
        \dot\mu(u')\le \dot\mu(u)e^{K|u-u'|}.
    \end{equation*}
\end{corollary}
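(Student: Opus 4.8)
The plan is to reduce everything to a Lipschitz estimate on $\log\dot\mu$. First I would handle the degenerate alternative: if $\dot\mu\equiv 0$ on $\cU$, then both sides of the asserted inequality vanish and there is nothing to prove, so from now on assume $\dot\mu$ is strictly positive throughout $\cU$. Since $\mu$ is twice continuously differentiable on $\cU$ and $\dot\mu>0$ there, the function $h\doteq\log\dot\mu$ is well defined and continuously differentiable on $\cU$, with $\dot h(v)=\ddot\mu(v)/\dot\mu(v)$ for all $v\in\cU$.

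Next I would invoke the self-concordance hypothesis: for every $v\in\cU$,
\begin{equation*}
    |\dot h(v)| \;=\; \frac{|\ddot\mu(v)|}{\dot\mu(v)} \;\le\; \selfconc(v) \;\le\; K\,.
\end{equation*}
Hence $h$ is $K$-Lipschitz on $\cU$: for any $u,u'\in\cU$ with, say, $u\le u'$ (the other case being symmetric), the fundamental theorem of calculus (applicable since $\dot h$ is continuous) gives
\begin{equation*}
    h(u')-h(u) \;=\; \int_u^{u'}\dot h(v)\,dv \;\le\; \int_u^{u'}|\dot h(v)|\,dv \;\le\; K\,(u'-u) \;=\; K\,|u-u'|\,.
\end{equation*}
Exponentiating both sides yields $\dot\mu(u')/\dot\mu(u)=e^{h(u')-h(u)}\le e^{K|u-u'|}$, i.e.\ $\dot\mu(u')\le\dot\mu(u)\,e^{K|u-u'|}$, which is exactly \cref{lem:gdot_mu_rel}.

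Finally, the corollary \cref{lem:dot_mu_rel} follows immediately by feeding the NEF data into \cref{lem:gdot_mu_rel}: by \cref{prop:moments_of_NEF}, $\dot\mu=\dot\psi_Q$ is the variance of $Q_u$ and is infinitely differentiable on $\cU_Q^\circ\supseteq\cU$, and by \cref{lem:mu_positive} either $\var(Q)=0$ (so $\dot\mu\equiv 0$ on $\cU_Q^\circ$) or $\var(Q)>0$ (so $\dot\mu>0$ on $\cU_Q^\circ$); in both cases the dichotomy required by \cref{lem:gdot_mu_rel} holds, while self-concordance of $\cQ$ with stretch function $\selfconc$ provides the pointwise bound $|\ddot\mu|\le\selfconc\dot\mu$. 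I do not expect a real obstacle here; the only point that needs care is the positivity-or-identically-zero dichotomy, which is precisely what makes passing to $\log\dot\mu$ legitimate, together with continuity of $\ddot\mu$ so that $\dot h$ may be integrated.
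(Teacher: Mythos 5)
Your proposal is correct and is essentially the paper's own argument: the paper proves \cref{lem:gdot_mu_rel} by setting $\phi(t)=\dot\mu(u+t(u'-u))$ and applying the fundamental theorem of calculus to $\ln\phi$ with the bound $|\phi'(t)|\le K\phi(t)|u'-u|$, which is your Lipschitz estimate on $\log\dot\mu$ up to a reparametrization of the segment, and it then derives the corollary from the same Dirac dichotomy (\cref{lem:mu_positive}) that you invoke. No gaps.
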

Note that the inequality is well-posed since $u,u'\in \cU_Q^\circ$, and $\mu$ is known to be differentiable over $\cU_Q^\circ$ and, by the definition of self-concordance, $\cU\subset \cU_Q^\circ$.
\begin{proof}
This result follows from \cref{lem:gdot_mu_rel} once we notice that the variance function of a NEF is such that if $\dot\mu(u)=0$ for any $u\in\cU$, then $\dot\mu$ is identically zero over $\cU$. Indeed, if $\dot\mu(u)=0$, then $Q_u$ is a Dirac distribution and so is $Q_v$ for any $v\in \cU$.
\end{proof}
\begin{proof}[Proof of \cref{lem:gdot_mu_rel}]
When $\dot\mu$ is identically zero over $\cU$, the statement is trivial.
Hence, consider now the case when $\dot\mu$ is positive valued over $\cU$:
\begin{align}
\dot\mu(v)>0 \qquad \text{for all }\quad v\in \cU\,. \label{eq:dmupos}
\end{align}
Then,
it suffices to show that $\ln \frac{\dot\mu(u')}{\dot\mu(u)}\le K|u-u'|$.
To show this, define $\phi(t) = \dot\mu(u+t(u'-u))$ so that
$\phi(0)=\dot\mu(u)$ and
 $\phi(1)=\dot\mu(u')$. 
 Since $\cU_Q$ is an interval with non-empty interior, $\phi(t)$ is well-defined for all $t\in [0,1]$. 
 Furthermore, by \cref{eq:dmupos} and since $\cU$ is an interval,
    we have that $\phi(t)>0$ for all $t\in [0,1]$.
Consider now the map     
$t\mapsto \ln\phi(t)$ where $t\in [0,1]$.
The derivative of this map exist and is continuous over $(0,1)$, and in particular, $\frac{d}{dt} \ln \phi(t) = \frac{\dot\phi(t)}{\phi(t)}$ by the chain rule. Indeed, the derivative of $\phi$ exists and is continuous over $(0,1)$, because the same holds for $\dot\mu$ by the properties  of NEFs, and as we just discussed, $\phi$ is positive over $[0,1]$ and is continuous.
 Now, by the fundamental theorem of calculus applied to $t\mapsto \frac{d}{dt} \ln \phi(t)$
 and by the monotonicity of integrals,
    \begin{equation}
      \ln \frac{\dot\mu(u')}{\dot\mu(u)}=
        \ln \frac{\phi(1)}{\phi(0)} =  \int_{0}^1 \frac{d\ln \phi(t)}{dt}dt\le \int_{0}^1 \left|\frac{d\ln \phi(t)}{dt}\right|dt\label{eq:log_ratio_integral}.
    \end{equation}
It remains to bound the integrand in the rightmost expression.
For this, as discussed earlier we have
    \begin{align}
        \left|\frac{d\ln \phi(t)}{dt}\right|=\left|\frac{\phi'(t)}{\phi(t)}\right|=\frac{|\phi'(t)|}{\phi(t)}\,.\label{eq:log_ratio_integrand}
    \end{align}
To bound the ratio on the right, we again use the chain rule and calculate
    \begin{equation*}
        |\phi'(t)| = |\ddot\mu(u+t(u'-u))||u'-u|
        \le K\underbrace{\dot\mu(u+t(u'-u))}_{\phi(t)}|u'-u|\,,
    \end{equation*}
where the inequality follows by the definition of $K$ by definition of self-concordant function,
we have that for all $u\in \cU$, $|\ddot\mu(u)|\le K\dot\mu(u)$. 
Now, the result follows since we have shown that the integrand is upper bounded by $K|u-u'|$ and thus
$\int_0^1 \left|\frac{d\ln \phi(t)}{dt}\right|dt \le K |u-u'|$.
\end{proof}
We continue with two results, both of which use the lemma just proved. The first result gives a lower bound for the integral remainder term when Taylor's theorem is used to approximate $\mu$. The second result gives a quadratic upper bound on the \CGF of $Q_u$, and will be the basis for constructing our confidence set. \todoc{where is this lemma from?!}
\begin{lemma}\label{lem:alpha_dotmu_relate}
Let $\cQ= (Q_u)_{u\in \cU}$ be self-concordant with stretch function $\selfconc:\cU\to \R_+$ where $\cU$ is an interval,
and assume $K=\sup_{u\in \cU} \scfunc(u)<\infty$.  
Then for any $u,u'\in \cU$,
    \begin{equation*}
        \int_0^1 \dot\mu(u+t(u'-u))dt \ge 
        \frac{\dot\mu(u)}{1+K|u-u'|}\,.
    \end{equation*}
\end{lemma}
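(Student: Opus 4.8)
The plan is to reduce the claim to the smoothness bound \cref{lem:dot_mu_rel} (equivalently \cref{lem:gdot_mu_rel}) by a direct integration argument. Write $h(u,u')=|u-u'|$ and fix $u,u'\in\cU$. If $\dot\mu$ is identically zero over $\cU$ then both sides vanish (the right-hand side is $0/(1+Kh)=0$) and there is nothing to prove; so assume $\dot\mu>0$ on $\cU$, which is the only other possibility for a self-concordant NEF, exactly as in the proof of \cref{lem:dot_mu_rel}. Introduce the one-dimensional reparametrisation $\phi(t)=\dot\mu(u+t(u'-u))$ for $t\in[0,1]$, which is well-defined and positive on $[0,1]$ because $\cU$ is an interval containing both endpoints.

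The key step is to lower bound $\phi(t)$ pointwise in terms of $\phi(0)=\dot\mu(u)$. From \cref{lem:gdot_mu_rel} applied to the pair $u$ and $u+t(u'-u)$ (both in $\cU$), with argument distance $t\,|u'-u|=t\,h(u,u')$, we get
\begin{equation*}
\dot\mu(u)\le \dot\mu(u+t(u'-u))\,e^{K t\,h(u,u')}\,,
\end{equation*}
i.e. $\phi(t)\ge \dot\mu(u)\,e^{-Kt\,h(u,u')}$. Integrating over $t\in[0,1]$ and using $\int_0^1 e^{-Kth}\,dt = \frac{1-e^{-Kh}}{Kh}$ for $h>0$,
\begin{equation*}
\int_0^1 \dot\mu(u+t(u'-u))\,dt \ \ge\ \dot\mu(u)\,\frac{1-e^{-K h(u,u')}}{K\,h(u,u')}\,.
\end{equation*}
(When $h(u,u')=0$ the integral is just $\dot\mu(u)$ and the claimed bound reads $\dot\mu(u)\ge\dot\mu(u)$, trivially true; when $K=0$ one takes the limit and again the integral equals $\dot\mu(u)$.)

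It remains to check the elementary inequality $\frac{1-e^{-x}}{x}\ge\frac{1}{1+x}$ for all $x\ge 0$, with $x=K\,h(u,u')$. This is equivalent to $(1+x)(1-e^{-x})\ge x$, i.e. $1-e^{-x}+x e^{-x}\ge 0$ after rearranging to $1 - e^{-x}(1-x)\le $ — more cleanly, set $g(x)=(1+x)(1-e^{-x})-x$; then $g(0)=0$ and $g'(x)=(1-e^{-x})+(1+x)e^{-x}-1 = xe^{-x}\ge 0$, so $g\ge 0$ on $[0,\infty)$. Chaining the two displays gives exactly $\int_0^1 \dot\mu(u+t(u'-u))\,dt\ge \dot\mu(u)/(1+K|u-u'|)$, as desired. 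There is no real obstacle here beyond correctly invoking \cref{lem:gdot_mu_rel} with the right pair of points and distance, and verifying the scalar inequality; the degenerate cases ($\dot\mu\equiv 0$, $u=u'$, $K=0$) should be dispatched explicitly at the start so the division-by-zero in the intermediate bound never arises.
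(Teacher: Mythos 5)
Your proof is correct and is essentially identical to the paper's: both apply the smoothness bound of \cref{lem:dot_mu_rel} pointwise to get $\dot\mu(u+t(u'-u))\ge \dot\mu(u)e^{-Kt|u'-u|}$, integrate over $t\in[0,1]$, and finish with the elementary inequality $(1-e^{-x})/x\ge 1/(1+x)$ for $x\ge 0$. Your explicit handling of the degenerate cases ($\dot\mu\equiv 0$, $u=u'$, $K=0$) and the verification of the scalar inequality are welcome additions but do not change the argument.
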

\begin{proof}
    By \cref{lem:dot_mu_rel}, it follows that 
    \begin{equation*}
        \dot\mu(u+t(u'-u))\ge \dot\mu(u)\exp(-Kt|u'-u|).
    \end{equation*}
    Integrating both sides between $0$ and $1$ gives us
    \begin{align*}
    \MoveEqLeft
        \int_0^1 \dot\mu(u+t(u'-u)) 
        \ge \dot\mu(u)\int_0^1 \exp(-Kt |u'-u|)dv\\
        &= \dot\mu(u)\left[ \frac{-\exp(-Kt |u'-u|)}{K|u'-u|} \right]_0^1\\
        &=\dot\mu(u) \frac{1-\exp(-K|u'-u|) }{K|u'-u|}\\
        &\ge \frac{\dot\mu(u)}{1+K|u'-u|}\,,
    \end{align*}
    where the last inequality follows from the elementary inequality  $(1-e^{-x})/x\ge 1/(1+x)$ that holds for all $x\ge 0$.
\end{proof}
We are now ready to prove \cref{lem:tiltsubexp}. As noted beforehand, we adopt this lemma 
from the work of \citet{janz2023exploration}. In particular, it is an adaptation of their 
 Lemma~1, which was proved
for distributions where $\cU_Q = \RR$.
Here, we deal with the case when $\cU_Q$ is possibly a strict subset of $\RR$.
%
\SCtoTail*

\begin{proof}
Let $u\in \cU$. Hence, by our assumption on $\cU$, $u\in \cU_Q$.
Now let $s\in \R$. Then,
    \begin{align*}
        \psi_{Q_u}(s)&=\log \int \exp(sy)Q_u(dy)
        =\log\left[\frac{1}{M_Q(u)}\int \exp(sy)\exp(uy)Q(dy)\right]\\
        &=\psi_Q(u+s) - \psi_Q(u)\,.
    \end{align*}
    Hence, $\psi_{Q_u}(s)$ is finite valued whenever $u+s\in \cU_Q$.
    Assume that this holds and in fact $u+s\in \cU_Q^\circ$.
    
    Since $u,u+s\in \cU_Q^\circ$, 
    $\psi_Q$ is twice continuously differentiable over an open interval containing $u$ and $u+s$.
    Then, by Taylor's theorem there exists $\xi$ in the closed interval between $u$ and $u+s$ 
    such that
    \begin{align*}
        \psi_Q(u+s) - \psi_Q(u) &= s\dot\psi_Q(u)+\frac{s^2}{2}\ddot\psi_Q(\xi)\,.
    \end{align*}
   	Since $\dot\psi_Q = \mu$ and $\ddot \psi_Q = \dot\mu$ (cf. 
    \cref{prop:moments_of_NEF}) we get
    \begin{align*}
        \psi_Q(u+s) - \psi_Q(u) &=s\mu(u)+\frac{s^2}{2}\dot\mu(\xi)\,.
	\end{align*}
	Now, by \cref{lem:dot_mu_rel}, we have that
    \begin{align*}
        \dot\mu(\xi) &\le  \dot\mu(u) \cdot e^{K|u-\xi|}\le  \dot\mu(u)e^{Ks}\le 2\dot\mu(u),
    \end{align*}
    where the final inequality follows when $|s| \le \log(2)/K$.
	Putting things together, it follows that if $|s|\le \log(2)/K$ and $s\in \cU_Q^\circ -\{u\}$ then
    \begin{equation*}
        \psi_{Q_u}(s) = \psi_Q(u+s) - \psi_Q(u) \le s\mu(u)+ s^2\dot\mu(u)\,.
    \end{equation*}
    For $S\subset \R$, $r\in \R$, let $S\pm r = \{ s\pm r\,:\, s\in S\}$.
    Since $u$ is an arbitrary point in $\cU$, the above conditions on $s$ will be satisfied 
    if $|s|\le \log(2)/K$ and $s\in Z\doteq \cap_{u\in \cU} \cU_Q^\circ -u$.
    Now, from $\cU_Q^\circ = (a,b)$, we have 
    $Z 
    =\cap_{u\in \cU} (a-u,b-u) = (\sup_{u\in \cU} a-u, \inf_{u\in \cU} b-u ) = (a-\inf \cU, b-\sup\cU)$, 
    finishing the proof.
\end{proof}
From the calculation at the end of the proof it follows that the statement of the lemma is non-vacuous if 
for $\cU_Q^\circ = (a,b)$, $a-\inf \cU<0<b-\sup\cU$, which is equivalent to that 
$a<\inf\cU$ and $\sup\cU<b$, which is always satisfied when $\cU$ is a strict subset of $\cU_Q^\circ$.
\subsection{Confidence set construction}
\label{section:conf_set_appendix}

We now turn to proving  \cref{lem:confset} which is concerned with showing that the confidence sets $\cC_t^\delta$ contain the true parameter $\theta_\star$ with probability $1-\delta$.
The proof is essentially the same as that of Lemma 4 of \cite{janz2023exploration}. 
We will need the following result, which is taken verbatim from the paper of  \citet{janz2023exploration}. 
\begin{proposition}[Theorem 2 of \cite{janz2023exploration}]\label{thm:self_norm_martingale}
    Fix $\lambda,M >0$. Let $(X_t)_{t\in \NN^+}$ be a $B_2^d$-valued random sequence, $(Y_t)_{t\in \NN^+}$ a real valued random sequence and $(\nu_t)_{t\in \NN}$ be a nonnegative valued random sequence. Let $\FF=(\FF_t)_{t\in \NN}$ be a filtration such that $(i)$ $(X_t)_{t\in \NN^+}$ is $\FF$-predictable and $(ii)$ $(Y_t)_{t\in \NN^+}$ are $\FF$-adapted. Let $\epsilon_t=Y_t-\E[Y_t\vert \FF_{t-1}]$ and assume that the following condition holds:
    \begin{equation*}
        \E[\exp(s\epsilon_t)\vert \FF_{t-1}]\le \exp(s^2{\nu_{t-1}}) \spaced{for all} |s|\le 1/M \text{ and }t\in \NN^+.
        \end{equation*}
        Then, for $\tilde H_t=\sum_{i=1}^t \nu_{i-1}X_iX_i^\top +\lambda I$ and $\bS_t=\sum_{i=1}^t\epsilon_iX_i$ and $\delta>0$,
        \begin{equation*}
            \PP\left(\exists t\in \NN^+:\|\bS_t\|_{\tilde H_t^{-1}}\ge \frac{\sqrt{\lambda}}{2M}+\frac{2M}{\sqrt{\lambda}}\log\left(\frac{\det (\tilde H_t)^{1/2}\lambda^{-d/2}}{\delta}\right)+\frac{2M}{\sqrt{\lambda}}d\log (2)\right)\le \delta.
        \end{equation*}
\end{proposition}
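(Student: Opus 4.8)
The plan is to run the classical \emph{method of mixtures} (pseudo-maximization) argument for self-normalized processes, but with a prior supported on a bounded ball, since the exponential moment control on $\epsilon_t$ is only available on the bounded range $|s|\le 1/M$. It is precisely this bounded range that will turn the familiar $\sqrt{\log(1/\delta)}$ deviation into the linear-in-$\log(1/\delta)$ bound stated here.

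\emph{Step 1: a family of supermartingales.} Fix $\xi\in\RR^d$ with $\|\xi\|\le 1/M$. Since $X_i\in B_2^d$ is $\FF_{i-1}$-measurable and $\|X_i\|\le 1$, we have $|\langle\xi,X_i\rangle|\le 1/M$, so the hypothesis applied with $s=\langle\xi,X_i\rangle$ gives $\EE[\exp(\langle\xi,X_i\rangle\epsilon_i)\mid\FF_{i-1}]\le\exp(\langle\xi,X_i\rangle^2\nu_{i-1})$. Writing $\Sigma_t=\sum_{i=1}^t\nu_{i-1}X_iX_i^\top$, so that $\tilde H_t=\Sigma_t+\lambda I$, a direct conditioning argument (using that $X_t,\nu_{t-1}$ are $\FF_{t-1}$-measurable) then shows that $M_t(\xi)=\exp(\langle\xi,\bS_t\rangle-\xi^\top\Sigma_t\xi)$ is a nonnegative $\FF$-supermartingale with $M_0(\xi)=1$.

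\emph{Step 2: mixing and Ville's inequality.} I would next fix a probability measure $\rho$ on the ball $B=\{\xi\in\RR^d:\|\xi\|\le 1/M\}$ and set $\bar M_t=\int_B M_t(\xi)\,\rho(d\xi)$; by Tonelli's theorem and the tower rule this is again a nonnegative $\FF$-supermartingale with $\bar M_0=1$, so Ville's maximal inequality yields $\PP(\exists t\in\NN^+:\bar M_t\ge 1/\delta)\le\delta$. It then suffices to prove, deterministically, that whenever $\|\bS_t\|_{\tilde H_t^{-1}}$ exceeds the claimed threshold one has $\bar M_t\ge 1/\delta$.

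\emph{Step 3: a clipped Laplace lower bound on $\bar M_t$.} I would take $\rho$ proportional to $\exp(-\lambda\|\xi\|^2)\II\{\xi\in B\}\,d\xi$, so that $\bar M_t$ equals $\int_B \exp(\langle\xi,\bS_t\rangle-\xi^\top\tilde H_t\xi)\,d\xi$ divided by $\int_B \exp(-\lambda\|\xi\|^2)\,d\xi$, the latter being at most $(\pi/\lambda)^{d/2}$. With $v=\tilde H_t^{-1}\bS_t$ and $r=\|\bS_t\|_{\tilde H_t^{-1}}=\sqrt{v^\top\tilde H_t v}$, the numerator's integrand is a Gaussian bump of peak value $\exp(r^2/4)$ centred at $\xi^\star=v/2$ with covariance $(2\tilde H_t)^{-1}$. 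If $\xi^\star\in B$, I would lower-bound the integral by integrating over a ball around $\xi^\star$ contained in $B$ and use completing the square, $r^2/4\ge \tfrac{\sqrt\lambda}{2M}r-\tfrac{\lambda}{4M^2}$; if $\xi^\star\notin B$, I would shift instead to a point $\xi_0$ on the segment from the origin to $\xi^\star$ that lies well inside $B$, along which — because $\|v\|$ is then large while $\tilde H_t\succeq\lambda I$ forces $\|v\|\le r/\sqrt\lambda$ — the exponent is already at least $\tfrac{\sqrt\lambda}{2M}r$, and integrate over a sub-ball around $\xi_0$. In either case, after dividing by the denominator the Gaussian volume lost to the ball restriction is absorbed into a factor $2^{-d}$, leaving $\bar M_t\ge 2^{-d}\lambda^{d/2}\det(\tilde H_t)^{-1/2}\exp(\tfrac{\sqrt\lambda}{2M}\|\bS_t\|_{\tilde H_t^{-1}}-\tfrac{\lambda}{4M^2})$. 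Combining with Step 2, on the $(1-\delta)$-probability event $\{\forall t:\bar M_t<1/\delta\}$, taking logarithms and rearranging gives exactly $\|\bS_t\|_{\tilde H_t^{-1}}<\tfrac{\sqrt\lambda}{2M}+\tfrac{2M}{\sqrt\lambda}\log(\det(\tilde H_t)^{1/2}\lambda^{-d/2}/\delta)+\tfrac{2M}{\sqrt\lambda}d\log 2$.

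\emph{Main obstacle.} The delicate point is the case analysis in Step 3: in the large-deviation regime the Gaussian mode $\xi^\star=\tfrac12\tilde H_t^{-1}\bS_t$ leaves the support ball $B$, and replacing it by the clipped shift $\xi_0$ is exactly what degrades the quadratic exponent $r^2/4$ to the merely linear $\tfrac{\sqrt\lambda}{2M}r$ — hence the linear dependence on $\log(1/\delta)$, in contrast with subgaussian self-normalized bounds. Tracking the volume bookkeeping sharply enough that the ball-restriction penalty is exactly $2^{-d}$, the normalization factor is exactly $\det(\tilde H_t)^{-1/2}$, and the additive $\tfrac{\sqrt\lambda}{2M}$ constant comes out clean, is the other place where care is needed; the rest is a routine supermartingale-plus-Ville computation.
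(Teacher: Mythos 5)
The paper does not prove this proposition at all: it is imported verbatim as Theorem~2 of \textcite{janz2023exploration}, so there is no in-paper proof to compare against. Judged on its own, your strategy (restricted method of mixtures, Ville's inequality, a shifted/clipped Gaussian lower bound on the mixture) is exactly the right one, Steps~1--2 are sound, and the final inequality you claim at the end of Step~3, $\bar M_t\ge 2^{-d}\lambda^{d/2}\det(\tilde H_t)^{-1/2}\exp\bigl(\tfrac{\sqrt\lambda}{2M}\|\bS_t\|_{\tilde H_t^{-1}}-\tfrac{\lambda}{4M^2}\bigr)$, is correct and does rearrange to the stated threshold. The gap is in how you propose to reach that inequality. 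You bound the normalizer by the \emph{untruncated} Gaussian integral $(\pi/\lambda)^{d/2}$ and then assert that the ball restriction on the numerator costs only a factor $2^{-d}$. That second assertion is false in the regime $\lambda\ll M^2$: the numerator's Gaussian bump has covariance $(2\tilde H_t)^{-1}$, which is only bounded by $(2\lambda)^{-1}I$, so its characteristic width $\lambda^{-1/2}$ can vastly exceed the radius $1/M$ of the support ball $B$, and the fraction of its mass retained inside $B$ is then arbitrarily small (already for $d=1$, $\tilde H_t=\lambda$: $\int_{|\eta|\le 1/(2M)}e^{-\lambda\eta^2}d\eta\approx 1/M$ while $2^{-1}\sqrt{\pi/\lambda}\to\infty$). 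With your bookkeeping the chain of inequalities therefore does not close.

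The fix is to never let go of the truncation in the denominator: compare the two \emph{truncated} integrals directly. Shift to the single point $\xi_0=\frac{\sqrt\lambda}{2M}\,\tilde H_t^{-1}\bS_t/\|\bS_t\|_{\tilde H_t^{-1}}$, which always satisfies $\|\xi_0\|\le 1/(2M)$ (since $\tilde H_t^{-2}\preceq\lambda^{-1}\tilde H_t^{-1}$) and always attains exponent exactly $\tfrac{\sqrt\lambda}{2M}r-\tfrac{\lambda}{4M^2}$ — this removes your case analysis entirely. Integrate over the symmetric set $C=B(0,1/(2M))$ (so $\xi_0+C\subseteq B$) and kill the leftover linear term $\langle\eta,\bS_t-2\tilde H_t\xi_0\rangle$ by symmetrization ($\tfrac12(e^{x}+e^{-x})\ge1$), a step your sketch omits since $\xi_0$ is not the stationary point. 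Then the substitution $\eta=\sqrt\lambda\,\tilde H_t^{-1/2}u$, which is a contraction because $\tilde H_t\succeq\lambda I$, gives $\int_{C}e^{-\eta^\top\tilde H_t\eta}d\eta\ge\lambda^{d/2}\det(\tilde H_t)^{-1/2}\int_{B(0,1/(2M))}e^{-\lambda\|u\|^2}du$, and the dilation $u\mapsto u/2$ gives $\int_{B(0,1/(2M))}e^{-\lambda\|u\|^2}du\ge 2^{-d}\int_{B(0,1/M)}e^{-\lambda\|u\|^2}du=2^{-d}Z$. Dividing by $Z$ (not by $(\pi/\lambda)^{d/2}$) yields precisely your claimed lower bound on $\bar M_t$, and the rest of your argument goes through.
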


We now turn to proving \cref{lem:confset}.
For the convenience of the reader, we start by  recalling the definition of the confidence sets $\cC_t^\delta$ involved (cf.  \cref{eq:conf_set}). Recall that $c_2<S_2\le x^\top\theta \le S_1<c_1$ for $x\in \cX$ and $\theta\in \Theta$. For $\delta\in (0,1]$, we have
\begin{equation*}
    \cC_t^{\delta,\lambda}(\hat\theta_t)=\left\{\theta\in \Theta\,:\,\left\|g_t(\theta)-g_t(\hat\theta_t)\right\|_{H_t^{-1}(\theta)}\le \gamma_t(\delta)\right\}
\end{equation*}
where
\begin{align*}
    \gamma_t(\delta) &= \sqrt{\lambda_T}\left(\frac{1}{2M}+S_0\right) + \frac{4Md}{\sqrt{\lambda_T}} \log\left(e\sqrt{1+\frac{tL}{d} }\vee 1/\delta\right)\spaced{for all} t\in [T]\,,\\
    \lambda_T &= 1\vee \frac{2dM}{S_0}\log\left(e\sqrt{1+\frac{TL}{d}}\vee 1/\delta\right),
\end{align*}
and recall that $S_0= \sup_{\theta\in \Theta} \| \theta \|$ or an upper bound on this quantity, and $M$ is specified in the next result:
\ConfSet*
\begin{proof}
    From definition it follows that $\theta_\star \in \Theta$. Now we prove that with probability at least $1-\delta$, it holds that $\|g_t(\theta_\star) - g_t(\hat\theta_t)\|\le \gamma_T(\delta)$ for all $t\ge 1$. The proof goes through by using \cref{thm:self_norm_martingale} and we now match the conditions of \cref{thm:self_norm_martingale}.
    By \cref{ass:model}, we have that $(X_t)_{t\in \NN^+}$ is a $B_2^d$-valued random sequence. 
    Let $\cF_{t-1}=\sigma(X_1,Y_1,...,X_{t-1},Y_{t-1},X_t)$ for $t\ge 1$. Consider the filtration $\cF=(\cF_t)_{t\in \NN}$. Then by definition, $(X_t)_{t\in \NN^+}$ are $\cF$-predictable and $(Y_t)_{t\in \NN^+}$ are $\cF$-adapted. Note that $\mu(X_i^\top\theta_\star) = \EE[Y_i|\cF_{i-1}]$ for all $i\in [n]$. Let $\varepsilon_i = Y_i-\EE[Y_i|\cF_{i-1}]$. This gives $(\varepsilon_t)_{t\in \NN^+}$ are also $\cF$-adapted and the following identity follows by definition
    \begin{align*}
        g_t(\hat\theta_t) - g_t(\theta_\star)&=\sum_{i=1}^t \varepsilon_iX_i+\lambda\theta_\star\\
        &=\bS_t+\lambda \theta_\star,
    \end{align*}
    where $\bS_t=\sum_{i=1}^t \varepsilon_iX_i$. 
    Let
    \begin{equation*}
        \nu_{t-1} = \dot\mu(X_t^\top\theta_\star).
    \end{equation*}
    Now we would like to apply \cref{lem:tiltsubexp} to show that, for $|s|\le M$, it follows that 
    \begin{equation}
        \EE[\exp(s\varepsilon_t)| \cF_{t-1}]\le \exp(s^2\nu_{t-1}).\label{eq:apply_tildsubexp}
    \end{equation}
    Applying the definition of $\varepsilon_t$, it follows that
    \begin{align*}
        \EE[\exp(s\varepsilon_t)| \cF_{t-1}]&=\EE[\exp(sY_t - s\mu(X_t^\top\theta_\star)| \cF_{t-1}]\\
        &=\exp(-s\mu(X_t^\top\theta_\star)) \EE[Y_t|\cF_{t-1}] \quad\mathrm{a.s.}\,,
    \end{align*}
    where the last equality is because $X_t$ is $\cF_{t-1}$-measurable.
    Since, by definition, the distribution of $Y_t$ given $\cF_{t-1}$ is
    $Q_{X_t^\top\theta_\star}$, we have that 
    \begin{align*}
        \EE[\exp(s\varepsilon_t)|\cF_{t-1}]
        &= \exp(-s\mu(X_t^\top \theta_\star) ) \EE[\exp(s Y_t) |\cF_{t-1}]\\
        &=\exp(-s\mu(X_t^\top \theta_\star) ) \int_{\RR}e^{sy}Q_{X_t^\top\theta_\star}(dy)\\
        &=\exp(-s\mu(X_t^\top \theta_\star) + \psi_{Q_{X_t^\top\theta_\star}}(s))\\
        &\le \exp(s^2\dot\mu(X_t^\top\theta_\star)),
    \end{align*}
    where the last inequality is because $|s|\le M$ implies that 
    $s\in [-\log 2/K,\log 2/K]\cap (\cU_Q^\circ - S_2)\cap (\cU_Q^\circ - S_1)$ 
    so \cref{lem:tiltsubexp} is applicable 
    (it is applied with $(Q_u)_{u\in [S_2,S_1]}$ and $\scfunc$ as chosen in the statement).
Then,  
    \cref{eq:apply_tildsubexp} follows by noting that $\nu_{t-1}=\dot\mu(X_t^\top\theta_\star)$.
    
    Lastly as defined above, $\tilde H_t$ corresponds to $H_t(\theta_\star)$.
    Taking the $\ell_2$-norm weighted by $H_t^{-1}(\theta_\star)$ and applying triangle inequality, \todoc{So this is where $S_0$ still appears. If we replaced the regularizer with $\lambda \| \theta \|_G^2$ with an appropriately selected $G$ matrix (i.e., $G = \sum_{i=1}^n \nu_i x_i x_i^\top$ where $\sum_i \nu_i=1$, $\nu_i\ge 0$, $x_i\in \cX$), use optimal design?, I bet $S_0$ will go away.
    At minimum we can ignore all the components of $\theta$ that are orthogonal to the subspace spanned by $\cX$ in $\R^d$.}
    \begin{align*}
        \|g_t(\hat\theta_t) - g_t(\theta_\star)\|_{H_t^{-1}(\theta_\star)}\le \|\bS_t\|_{H_t^{-1}(\theta_\star)}+\lambda \|\theta_\star\|_{H_t^{-1}(\theta_\star)}\le \|\bS_t\|_{H_t^{-1}(\theta_\star)}+\sqrt \lambda S_0,
    \end{align*}
    where the last inequality follows by $H_t^{-1}(\theta_\star)\preceq \lambda^{-1} I$.
    By \cref{thm:self_norm_martingale}, with probability at least $1-\delta$, it follows that for all $t\ge 1$,
    \begin{equation*}
        \|\bS_t\|_{H_t^{-1}(\theta_\star)}< \frac{\sqrt{\lambda}}{2M}+\frac{2M}{\sqrt{\lambda}}\log\left(\frac{\det (H_t(\theta_\star))^{1/2}/\lambda^{d/2}}{\delta}\right)+\frac{2M}{\sqrt{\lambda}}d\log (2).
    \end{equation*}
    We now bound $\det (H_t(\theta_\star))/\lambda^d$. Let $A_i=\sqrt{\dot\mu(X_i^\top\theta_\star)}X_i$ for all $i\in [t]$, then $H_t(\theta_\star)$ can be written as
    \begin{equation*}
        H_t(\theta_\star) = \lambda I + \sum_{s=1}^t A_iA_i^\top.
    \end{equation*}
    By \cref{ass:bdd_var}, it holds that $\sqrt {\dot\mu(X_t^\top\theta_\star)}\le \sqrt L$,
    thus $\|A_i\|_2\le \sqrt{L}\le L$ for all $i\in [t]$.
    Eq. (20.9) (Note 1 of section 20.2) in \cite{Lattimore_Szepesvári_2020} gives 
    \begin{equation*}
        \det (H_t(\theta_\star))/\lambda^d \le \left(1+\frac{t L}{\lambda d}\right)^d.
    \end{equation*}
    The stated result follows by chaining all the inequalities together and noting that 
    \begin{align*}
    \gamma_t(\delta)\ge  \sqrt{\lambda}\left(\frac{1}{2M}+S_0\right)+\frac{2Md}{\sqrt \lambda}\left(1+\frac{1}{2}\log\left(1+\frac{t L}{\lambda d}\right)\right)+\frac{2M}{\sqrt\lambda}\log(1/\delta),\spaced{for all} t\in [T].
\end{align*}
    
\end{proof}
\subsection{Proof of \cref{lem:conf_set_diam_H}}\label{section:proof_of_conf_set_diam_H}

The following two lemmas (\cref{lem:MVT_application,lem:GtHt_relate}) are variations of Claim 4 and Claim 3 of \citet{janz2023exploration}. The difference is that \citet{janz2023exploration} show them for all $\theta_1, \theta_2\in \RR^d$ because the MGF $M_Q$ therein is finite on $\RR$. In our setting, there could be $x\in \cX$ for some $\theta\notin \Theta$ such that $M_Q(x^\top\theta)=\infty$, hence we show it within the parameter set $\Theta$.
\begin{lemma}\label{lem:MVT_application}
    For all $\theta_1,\theta_2\in \Theta$, it follows that
    \begin{equation*}
        g_t(\theta_1) - g_t(\theta_2) = G_t(\theta_1,\theta_2)(\theta_1-\theta_2).
    \end{equation*}
    In particular, we have that
    \begin{equation*}
        \|g_t(\theta_1) - g_t(\theta_2) \|_{G_t^{-1}(\theta_1,\theta_2)} = \|\theta_1-\theta_2\|_{G_t(\theta_1,\theta_2)}.
    \end{equation*}
\end{lemma}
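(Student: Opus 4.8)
The plan is to derive the first identity from the fundamental theorem of calculus applied to $g_t$ along the line segment joining $\theta_2$ to $\theta_1$, reading off $G_t(\theta_1,\theta_2)$ as the integrated Jacobian $\int_0^1 H_t(\theta_2+v(\theta_1-\theta_2))\,dv$; the second identity is then a one-line algebraic consequence of the first together with the invertibility of $G_t(\theta_1,\theta_2)$.

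First I would fix $\theta_1,\theta_2\in\Theta$ and, for each $i\le t-1$ and $v\in[0,1]$, set $u_i(v)=X_i^\top(\theta_2+v(\theta_1-\theta_2))=vX_i^\top\theta_1+(1-v)X_i^\top\theta_2$. Since the GLB is well-posed we have $X_i^\top\theta_1,X_i^\top\theta_2\in\cU\subseteq\cU_Q^\circ$, and because $\cU_Q^\circ$ is an open interval (hence convex) the entire segment $\{u_i(v):v\in[0,1]\}$ lies in $\cU_Q^\circ$. By \cref{prop:moments_of_NEF}, $\psi_Q$ — and therefore $\mu=\dot\psi_Q$ and $\dot\mu=\ddot\psi_Q$ — is infinitely differentiable on $\cU_Q^\circ$, so $v\mapsto\mu(u_i(v))$ is $C^1$ on $[0,1]$ with derivative $\dot\mu(u_i(v))\,X_i^\top(\theta_1-\theta_2)$.

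Next I would define $h(v)=\sum_{i=1}^{t-1}\mu(u_i(v))X_i+\lambda(\theta_2+v(\theta_1-\theta_2))$, so that $h(0)=g_t(\theta_2)$, $h(1)=g_t(\theta_1)$, and $h$ is continuously differentiable on $[0,1]$ with
\[
h'(v)=\Bigl(\lambda I+\sum_{i=1}^{t-1}\dot\mu(u_i(v))\,X_iX_i^\top\Bigr)(\theta_1-\theta_2)=H_t\bigl(\theta_2+v(\theta_1-\theta_2)\bigr)(\theta_1-\theta_2).
\]
The fundamental theorem of calculus then yields $g_t(\theta_1)-g_t(\theta_2)=\int_0^1 h'(v)\,dv=G_t(\theta_1,\theta_2)(\theta_1-\theta_2)$, where $G_t(\theta_1,\theta_2):=\int_0^1 H_t(\theta_2+v(\theta_1-\theta_2))\,dv$ is symmetric and, since $\dot\mu\ge 0$ and $\lambda>0$, satisfies $G_t(\theta_1,\theta_2)\succeq\lambda I$ and is thus positive definite. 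For the second claim I would simply substitute the first identity:
\[
\|g_t(\theta_1)-g_t(\theta_2)\|^2_{G_t^{-1}(\theta_1,\theta_2)}=(\theta_1-\theta_2)^\top G_t(\theta_1,\theta_2)G_t^{-1}(\theta_1,\theta_2)G_t(\theta_1,\theta_2)(\theta_1-\theta_2)=\|\theta_1-\theta_2\|^2_{G_t(\theta_1,\theta_2)},
\]
and take square roots.

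The only genuine subtlety — and the reason the statement is restricted to $\theta_1,\theta_2\in\Theta$ rather than all of $\R^d$ — is the first step: one must ensure that every scalar argument $u_i(v)$ remains in the open natural parameter space $\cU_Q^\circ$, so that $g_t$, $H_t$, and the Jacobian identity are all meaningful along the segment; this is exactly where well-posedness ($\cU\subseteq\cU_Q^\circ$) and the interval structure of $\cU_Q$ are used. Note that $\Theta$ itself need not be convex here: we never require $\theta_2+v(\theta_1-\theta_2)\in\Theta$, only that the induced scalar products stay in $\cU_Q^\circ$. Everything after that is routine calculus and linear algebra.
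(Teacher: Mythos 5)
Your proof is correct and follows essentially the same route as the paper: the paper factors each term via the scalar difference quotient $\alpha(u,u')=(\mu(u)-\mu(u'))/(u-u')$ and collects, which is exactly your integrated Jacobian $\int_0^1 H_t(\theta_2+v(\theta_1-\theta_2))\,dv$ since $\alpha(u,u')=\int_0^1\dot\mu(u'+v(u-u'))\,dv$, so the two definitions of $G_t$ coincide. Your explicit remarks on positive definiteness of $G_t$ and on why only $\cU\subseteq\cU_Q^\circ$ (not convexity of $\Theta$) is needed are correct and slightly more careful than the paper's one-line treatment.
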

\begin{proof}
The ``In particular'' part follows from definition of $\ell_2$-norm weighted by $G_t^{-1}(\theta_1,\theta_2)$. We now prove $g_t(\theta_1)-g_t(\theta_2) = G_t(\theta_1,\theta_2)(\theta_1-\theta_2)$.
    By definition of the difference quotient $\alpha(\cdot, \cdot)$, we have that
    \begin{equation}
        \mu(u) - \mu(u')=\alpha(u,u')(u-u').\label{eq:mu_alpha_relate}
    \end{equation}
    Writing out the expression of $g_t(\theta_1)-g_t(\theta_2)$ gives
    \begin{align*}
        g_t(\theta_1)-g_t(\theta_2) &= \sum_{i=1}^t \Big(\mu(X_i^\top\theta_1) - \mu(X_i^\top\theta_2)\Big)X_i+\lambda (\theta_1-\theta_2)\\
        &=\sum_{i=1}^t \Big(\alpha(X_i^\top\theta_1,X_i^\top\theta_2)X_i^\top(\theta_1-\theta_2)\Big)X_i+\lambda(\theta_1-\theta_2)\tag{\cref{eq:mu_alpha_relate}}\\
        &=\left(\sum_{i=1}^t\alpha(X_i^\top\theta_1,X_i^\top\theta_2)X_iX_i^\top\right)(\theta_1-\theta_2)+\lambda(\theta_1-\theta_2)\\
        &=G_t(\theta_1,\theta_2)(\theta_1-\theta_2). \qedhere
    \end{align*}
\end{proof}

\begin{lemma}\label{lem:GtHt_relate}
   Under \cref{ass:bdd_var,ass:model}, for all $\theta_1,\theta_2\in \Theta$, it follows that
    \begin{align}
        G_t(\theta_1,\theta_2)&\succeq (1+2K\cdot (S_1-S_2))^{-1}H_t(\theta_1)\label{eq:GtHt_relate_1}\\
        G_t(\theta_1,\theta_2)&\succeq (1+2K\cdot (S_1-S_2))^{-1}H_t(\theta_2)\label{eq:GtHt_relate_2},
    \end{align}
    where $K$ is defined in \cref{eq:defn_K}. 
\end{lemma}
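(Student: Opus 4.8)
The plan is to reduce the two operator inequalities to a single scalar (pointwise) comparison and then invoke \cref{lem:alpha_dotmu_relate}. Recall that
\[
G_t(\theta_1,\theta_2)=\lambda I+\sum_i \alpha(X_i^\top\theta_1,X_i^\top\theta_2)\,X_iX_i^\top,\qquad H_t(\theta)=\lambda I+\sum_i \dot\mu(X_i^\top\theta)\,X_iX_i^\top,
\]
where $\alpha(u,u')=\tfrac{\mu(u)-\mu(u')}{u-u'}$ is the difference quotient (with $\alpha(u,u)=\dot\mu(u)$), and set $c=1+2K(S_1-S_2)$. Since $K\ge0$ and $S_1\ge S_2$ we have $c\ge1$, so
\[
G_t(\theta_1,\theta_2)-c^{-1}H_t(\theta_1)=(1-c^{-1})\lambda I+\sum_i\big(\alpha(X_i^\top\theta_1,X_i^\top\theta_2)-c^{-1}\dot\mu(X_i^\top\theta_1)\big)X_iX_i^\top,
\]
and the first term is positive semidefinite because $c\ge1$ and $\lambda>0$. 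Hence \eqref{eq:GtHt_relate_1} would follow once each coefficient $\alpha(X_i^\top\theta_1,X_i^\top\theta_2)-c^{-1}\dot\mu(X_i^\top\theta_1)$ is shown to be nonnegative (the rank-one pieces $X_iX_i^\top$ being positive semidefinite), and \eqref{eq:GtHt_relate_2} is identical after swapping $\theta_1$ and $\theta_2$ and using the symmetry $\alpha(u,u')=\alpha(u',u)$.

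So the real content is the pointwise estimate: for all $u,u'\in[S_2,S_1]$, $\alpha(u,u')\ge \dot\mu(u)/c$. Here I would write, via the fundamental theorem of calculus followed by the change of variable $t\mapsto1-t$, that $\alpha(u,u')=\int_0^1\dot\mu\big(u+t(u'-u)\big)\,dt$, which is precisely the quantity lower bounded in \cref{lem:alpha_dotmu_relate}. Applying that lemma to the NEF $(Q_u)_{u\in[S_2,S_1]}$ with the fixed stretch function $\scfunc$ (so that $K=\sup_{S_2\le u\le S_1}\scfunc(u)$, cf. \eqref{eq:defn_K}) gives $\alpha(u,u')\ge \dot\mu(u)/(1+K|u-u'|)$; since $u,u'\in[S_2,S_1]$ forces $|u-u'|\le S_1-S_2$, we get $1+K|u-u'|\le 1+K(S_1-S_2)\le c$, hence $\alpha(u,u')\ge \dot\mu(u)/c$ as needed. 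Because $X_i^\top\theta_1,X_i^\top\theta_2\in\cU\subseteq[S_2,S_1]$ whenever $\theta_1,\theta_2\in\Theta$, this applies to every summand, which closes the argument.

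The only point requiring care — and the main obstacle — is verifying that \cref{lem:alpha_dotmu_relate} genuinely applies on $[S_2,S_1]$: one needs $[S_2,S_1]\subset\cU_Q^\circ$ (so $\mu$ is differentiable there and the family is self-concordant) and $K<\infty$. Both are supplied by \cref{ass:model}: $Q\in\ER(c_1,C_1)\cap\EL(c_2,C_2)$ forces $(-c_2,c_1)\subseteq\cU_Q^\circ$ through \cref{prop:subexpeq}, while $-c_2<S_2\le S_1<c_1$ puts the compact interval $[S_2,S_1]$ strictly inside $\cU_Q^\circ$, and \cref{thm:informal-self-conc} furnishes a stretch function whose supremum over that compact set is finite. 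Finally the degenerate case $\mathrm{Var}(Q)=0$ should be disposed of first: there $Q_u\equiv Q$ by \cref{lem:mu_positive}, so $\mu$ is constant and $\alpha\equiv\dot\mu\equiv0$, whence $G_t(\theta_1,\theta_2)=H_t(\theta_1)=H_t(\theta_2)=\lambda I$ and both inequalities hold trivially; otherwise $\dot\mu>0$ on $\cU_Q^\circ$ and the argument above applies verbatim.
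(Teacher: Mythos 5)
Your proposal is correct and follows essentially the same route as the paper's proof: both reduce the matrix inequalities to the pointwise bound $\alpha(u,u')\ge \dot\mu(u)/(1+K|u-u'|)$ supplied by \cref{lem:alpha_dotmu_relate}, bound $|u-u'|$ by $S_1-S_2\le 2(S_1-S_2)$, and absorb the $\lambda I$ term using $(1+2K(S_1-S_2))^{-1}\le 1$. Your explicit treatment of the regularizer and of the degenerate $\mathrm{Var}(Q)=0$ case is slightly more careful than the paper's, but the substance is identical.
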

\begin{proof}
    Since $\{x^\top\theta:x\in \cX,\theta\in \Theta\}\subset [S_2,S_1]$ by \cref{ass:model}, we have
    \begin{align*}
        \sup\{\scfunc(x^\top\theta):x\in \cX, \theta\in \Theta\}\le K.
    \end{align*}
    By \cref{lem:alpha_dotmu_relate}, we have that for all $x\in \cX$, 
    \begin{align*}
        \alpha(x^\top\theta_1,x^\top\theta_2) \ge
        (1+K|x^\top(\theta_1-\theta_2)|)^{-1}\dot\mu(x^\top\theta_1)\ge (1+2K\cdot (S_1-S_2))^{-1}\dot\mu(x^\top\theta_1).
    \end{align*}
    Then the following holds
    \begin{align*}
        \sum_{i=1}^t \alpha(X_i^\top\theta_1,X_i^\top\theta_2)X_iX_i^\top &\succeq (1+2K\cdot (S_1-S_2))^{-1}\sum_{i=1}^t \dot\mu(x^\top\theta_1)X_iX_i^\top\\
         G_t(\theta_1, \theta_2)&\succeq (1+2K\cdot (S_1-S_2))H_t(\theta_1),
    \end{align*}
    where the last inequality follows by $(1+2K\cdot (S_1-S_2))^{-1}\le 1$. The proof of  \cref{eq:GtHt_relate_2} follows by substituting $\theta_1$ with $\theta_2$.
\end{proof}
\ConfSetDiamH*

\begin{proof}
We first prove the statement for $\|\theta_1-\theta_2\|_{H_t(\theta_1)}$.
    By \cref{lem:GtHt_relate}, we have that
    \begin{align*}
        \|\theta_1-\theta_2\|_{H_t(\theta_1)} &\le \sqrt{(1+2K\cdot (S_1-S_2))}\|\theta_1-\theta_2\|_{G_t(\theta_1, \theta_2)}\\
        &=\sqrt{(1+2K\cdot (S_1-S_2))}\|g_t(\theta_1)-g_t(\theta_2)\|_{G_t^{-1}(\theta_1, \theta_2)}\tag{\cref{lem:MVT_application}}\\
        &\le \sqrt{(1+2K\cdot (S_1-S_2))}\Big(\|g_t(\theta_1)-g_t(\hat\theta_t)\|_{G_t^{-1}(\theta_1, \theta_2)}+ \|g_t(\hat\theta_t) - g_t(\theta_2)\|_{G_t^{-1}(\theta_1, \theta_2)}\Big)
    \end{align*}
    Note that $\theta_1, \theta_2 \in \cC_t^\delta(\hat\theta_t)$ by hypothesis, then \cref{lem:GtHt_relate} and the definition of $\cC_t^\delta(\hat\theta_t)$ gives that
    \begin{align*}
        \|g_t(\theta_1)-g_t(\hat\theta_t)\|_{G_t^{-1}(\theta_1, \theta_2)} & \le \sqrt{(1+2K\cdot (S_1-S_2))}\|g_t(\theta_1)-g_t(\hat\theta_t)\|_{H_t^{-1}(\theta_1)}\\
        &\le \sqrt{(1+2K\cdot (S_1-S_2))}\gamma_t(\delta)\\
        \|g_t(\hat\theta_t) - g_t(\theta_2)\|_{G_t^{-1}(\theta_1, \theta_2)} &\le \sqrt{(1+2K\cdot (S_1-S_2))}\|g_t(\hat\theta_t)-g_t(\theta_2)\|_{H_t^{-1}(\theta_2)}\\
        &\le \sqrt{(1+2K\cdot (S_1-S_2))}\gamma_t(\delta).
    \end{align*}
    Chaining all the inequalities together finishes the proof. The proof for the statement for $\|\theta_1-\theta_2\|_{H_t(\theta_2)}$ follows similarly by substituting $\theta_1$ with $\theta_2$. 
\end{proof}

\subsection{Proof of self-bounding property of self-concordance functions: \cref{lem:dot-mu-sum}}
As mentioned beforehand, the following lemma is abstracted out from Claim 14 of \citet{janz2023exploration}:
\label{sec:dot-mu-sum}
\DotMuSum*
\begin{proof}
We have
    \begin{align*}
        \sum_{t=1}^n \dot f(a_t) 
        &= \sum_{t=1}^n \dot f(b) + \sum_{t=1}^n (a_t-b)\int_{0}^1 \ddot f\left(b + v(a_t-b)\right)dv  \\
        &\leq n\dot f(b) + \sum_{t=1}^n \left|  (a_t-b)\int_{0}^1 \ddot f\left(b + v(a_t-b)\right)dv\right| \\
        &\leq  n\dot f(b) + \sum_{t=1}^n (b-a_t)\int_{0}^1 \left| \ddot f\left(b + v(a_t-b)\right) \right| dv 
        \tag{$a_t \leq b $ and triangle inequality} \\
        &\leq  n\dot f(b) + \sum_{t=1}^n(b-a_t)  \int_{0}^1 \scfunc(b + v(a_t-b))\dot f\left(b + v(a_t-b)\right)dv 
        \tag{\cref{lem:gdot_mu_rel}} \\
        &=  n\dot f(b) + \sum_{t=1}^n  (b-a_t)  \int_{0}^1 A\dot f\left(b + v(a_t-b)\right)dv \\
        &\le n\dot f(b) + K\sum_{t=1}^n f(b)-f(a_t)\,,\tag{fundamental theorem of calculus}
    \end{align*}
    finishing the proof.
\end{proof}
\subsection{Auxiliary Lemma}\label{section:aux_lem}
\begin{lemma}[Elliptical potential lemma]\label{lem:epl}
    Fix $\lambda, A > 0$. Let $\{a_t\}_{t=1}^\infty$ be a sequence in $AB^d_2$ and let $V_0 = \lambda I$. Define $V_{t+1} = V_t + a_{t+1} a_{t+1}^\top$ for each $t \in \NN$. Then, for all $n \in \NN^+$,
    \begin{equation*}
        \sum_{t=1}^n \norm{a_t}_{V_{t-1}^{-1}}^2 \leq 2 d\max\left\{1,\frac{A^2}{\lambda}\right\} \log \left( 1 + \frac{n A^2}{d\lambda}\right) \,. 
    \end{equation*}
\end{lemma}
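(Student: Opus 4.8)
\textbf{Proof proposal for the elliptical potential lemma (\cref{lem:epl}).}

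The plan is to follow the classical argument from \citet{abbasi2011improved} (see also Lemma~19.4 of \citet{Lattimore_Szepesvári_2020}), which relates the sum of the squared weighted norms to a log-determinant telescoping sum. First I would observe that by the matrix determinant lemma, for each $t$,
\begin{align*}
\det(V_t) = \det(V_{t-1} + a_t a_t^\top) = \det(V_{t-1})\bigl(1 + \|a_t\|_{V_{t-1}^{-1}}^2\bigr)\,,
\end{align*}
so that $1 + \|a_t\|_{V_{t-1}^{-1}}^2 = \det(V_t)/\det(V_{t-1})$, and telescoping gives $\prod_{t=1}^n \bigl(1 + \|a_t\|_{V_{t-1}^{-1}}^2\bigr) = \det(V_n)/\det(V_0) = \det(V_n)/\lambda^d$.

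Next I would bound each term $\|a_t\|_{V_{t-1}^{-1}}^2$. Since $V_{t-1} \succeq \lambda I$, we have $\|a_t\|_{V_{t-1}^{-1}}^2 \le \|a_t\|^2/\lambda \le A^2/\lambda$. Using the elementary inequality $x \le \max\{1, c\} \log(1 + x/c)$ valid for $0 \le x \le c$ with $c = A^2/\lambda$ (or, more simply, $x \le \frac{\max\{1,c\}}{\log(1+c)} \cdot \log(1+x)$ — I would pick whichever constant bookkeeping matches the stated bound cleanly; the standard route is $x \le \max\{1, A^2/\lambda\}\log(1+x)$ since $x/\log(1+x)$ is increasing and $x \le A^2/\lambda$), I get
\begin{align*}
\sum_{t=1}^n \|a_t\|_{V_{t-1}^{-1}}^2 \le \max\Bigl\{1, \frac{A^2}{\lambda}\Bigr\} \sum_{t=1}^n \log\bigl(1 + \|a_t\|_{V_{t-1}^{-1}}^2\bigr) = \max\Bigl\{1, \frac{A^2}{\lambda}\Bigr\} \log\frac{\det(V_n)}{\lambda^d}\,.
\end{align*}

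Finally I would control $\det(V_n)$ via the AM--GM inequality on eigenvalues: $\det(V_n) \le \bigl(\trace(V_n)/d\bigr)^d$, and $\trace(V_n) = d\lambda + \sum_{t=1}^n \|a_t\|^2 \le d\lambda + n A^2$, so $\det(V_n)/\lambda^d \le \bigl(1 + n A^2/(d\lambda)\bigr)^d$, hence $\log(\det(V_n)/\lambda^d) \le d\log(1 + nA^2/(d\lambda))$. Combining the two displays yields the claim, with the factor $2$ absorbing the slack in the $x \le \max\{1,c\}\log(1+x)$ step (or one can state the tighter constant). There is no real obstacle here — this is a textbook result; the only mild care needed is the bookkeeping of the constant in the ``$x$ versus $\log(1+x)$'' step so that the final constant is exactly $2$ as stated, and making sure the $\max\{1, A^2/\lambda\}$ factor is handled consistently when $A^2/\lambda < 1$.
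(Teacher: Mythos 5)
Your proposal is correct and is exactly the standard argument: the paper does not prove this lemma at all but simply cites Lemma~19.4 of \citet{Lattimore_Szepesvári_2020}, and your determinant-telescoping plus AM--GM derivation is that textbook proof. The only wrinkle is that your displayed inequality $\sum_t \|a_t\|_{V_{t-1}^{-1}}^2 \le \max\{1, A^2/\lambda\}\sum_t \log(1+\|a_t\|_{V_{t-1}^{-1}}^2)$ is false as written (take $x=c=1$: it would require $1\le\log 2$); the correct chain is $x \le \tfrac{c}{\log(1+c)}\log(1+x) \le 2\max\{1,c\}\log(1+x)$ for $0\le x\le c$, which supplies precisely the factor of $2$ you already flag as absorbing the slack, so the argument goes through as stated.
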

\begin{proof}
See, e.g.,  Lemma 19.4 of \citet{Lattimore_Szepesvári_2020}.
\end{proof}

\end{document}